\documentclass{article}

\usepackage{graphicx}
\usepackage{subcaption}
\usepackage{booktabs} 
\usepackage{wrapfig}
\usepackage{hyperref}
\usepackage{amssymb}
\usepackage{comment}
\usepackage{pifont}
\usepackage[utf8]{inputenc}
\newcommand{\tick}{\ding{51}}
\newcommand{\cross}{\ding{55}}
\usepackage{pdfpages}

\usepackage{soul}
\usepackage{url}
\usepackage{multirow}
\usepackage{adjustbox}
\usepackage{xcolor}
\usepackage{amsthm}
\usepackage{comment}
\usepackage{thmtools}
\usepackage{todonotes}
\usepackage{color}
\usepackage{wrapfig}
\usepackage{amsmath}

\def\cast{{
   \mathord{
      \hbox to 0em{
         \ooalign{
	   \smash{\hbox{$\ast$}}\crcr
	   \smash{\hskip-1pt\Large\hbox{$\circ$}} }
	 \hidewidth}
      \phantom{\bigcirc}
} }}

\def\bm#1{\mbox{\boldmath $#1$}}

\newcommand{\bds}{\begin {itemize}}
\newcommand{\eds}{\end {itemize}}
\newcommand{\bdf}{\begin{definition}}
\newcommand{\blm}{\begin{lemma}}
\newcommand{\edf}{\end{definition}}
\newcommand{\elm}{\end{lemma}}
\newcommand{\bthm}{\begin{theorem}}
\newcommand{\ethm}{\end{theorem}}
\newcommand{\bprp}{\begin{prop}}
\newcommand{\eprp}{\end{prop}}
\newcommand{\bcl}{\begin{claim}}
\newcommand{\ecl}{\end{claim}}
\newcommand{\bcr}{\begin{coro}}
\newcommand{\ecr}{\end{coro}}
\newcommand{\bquest}{\begin{question}}
\newcommand{\equest}{\end{question}}

\newcommand{\larrow}{{\larrow}}

\newcommand{\argmin}{\ensuremath{\mathrm{arg}\min}}
\newcommand{\argmax}{\ensuremath{\mathrm{arg}\max}}

\newcommand{\cG}{{\ensuremath{\mathcal{G}}}}

\newcommand{\cO}{{\ensuremath{\mathcal{O}}}}

\newcommand{\vg}{{\ensuremath{{\mathbf{g}}}}}

\newcommand{\vh}{{\ensuremath{{\mathbf{h}}}}}

\newcommand{\vp}{{\ensuremath{{\mathbf{p}}}}}
\newcommand{\vq}{{\ensuremath{{\mathbf{q}}}}}
\newcommand{\vr}{{\ensuremath{{\mathbf{r}}}}}

\newcommand{\vw}{{\ensuremath{{\mathbf{w}}}}}

\newcommand{\vx}{{\ensuremath{{\mathbf{x}}}}}

\newcommand{\vy}{{\ensuremath{{\mathbf{y}}}}}

\newcommand{\mA}{{\ensuremath{\mathbf{A}}}}

\newcommand{\mB}{{\ensuremath{\mathbf{B}}}}

\newcommand{\mC}{{\ensuremath{\mathbf{C}}}}

\newcommand{\mD}{{\ensuremath{\mathbf{D}}}}

\newcommand{\mH}{{\ensuremath{\mathbf{H}}}}

\newcommand{\mI}{{\ensuremath{\mathbf{I}}}}

\newcommand{\mL}{{\ensuremath{\mathbf{L}}}}

\newcommand{\mN}{{\ensuremath{\mathbf{N}}}}

\newcommand{\mV}{{\ensuremath{\mathbf{V}}}}

\newcommand{\mW}{{\ensuremath{\mathbf{W}}}}

\newcommand{\mX}{{\ensuremath{\mathbf{X}}}}

\usepackage{latexsym}

\def\IC{\mathbb C}
\def\IN{\mathbb N}
\def\IZ{\mathbb Z}
\def\IR{\mathbb R}

\def\shat{^{\mathchoice{}{}%
 {\,\,\smash{\hbox{\lower4pt\hbox{$\widehat{\null}$}}}}%
 {\,\smash{\hbox{\lower3pt\hbox{$\hat{\null}$}}}}}}

\def\bSigma{{
      \ooalign{
      \smash{\hskip.4pt\raise.4pt\hbox{$\Sigma$}}\vphantom{}\crcr
      \smash{\hskip.7pt\raise.6pt\hbox{$\Sigma$}}\vphantom{}\crcr
      \smash{\hbox{$\Sigma$}}\vphantom{$\Sigma$}}
      \vphantom{\hbox{$\Sigma$}}
      }}
\def\bTheta{{
      \ooalign{
      \smash{\hskip.5pt\raise.5pt\hbox{$\Theta$}}\vphantom{}\crcr
      \smash{\hskip.0pt\raise.1pt\hbox{$\Theta$}}\vphantom{}\crcr
      \smash{\hbox{$\Theta$}}\vphantom{$\Theta$}}
      \vphantom{\hbox{$\Theta$}}
      }}
\def\bDelta{{
      \ooalign{
      \smash{\hskip.4pt\raise.4pt\hbox{$\Delta$}}\vphantom{}\crcr
      \smash{\hskip.7pt\raise.6pt\hbox{$\Delta$}}\vphantom{}\crcr
      \smash{\hbox{$\Delta$}}\vphantom{$\Delta$}}
      \vphantom{\hbox{$\Delta$}}
      }}
\def\bLambda{{
      \ooalign{
      \smash{\hskip.5pt\raise.5pt\hbox{$\Lambda$}}\vphantom{}\crcr
      \smash{\hskip.0pt\raise.1pt\hbox{$\Lambda$}}\vphantom{}\crcr
      \smash{\hbox{$\Lambda$}}\vphantom{$\Lambda$}}
      \vphantom{\hbox{$\Lambda$}}
      }}

\makeatletter

\def\bordermatrix#1{\begingroup \m@th
  \@tempdima 8.75\p@
  \setbox\z@\vbox{%
    \def\cr{\crcr\noalign{\kern2\p@\global\let\cr\endline}}%
    \ialign{$##$\hfil\kern2\p@\kern\@tempdima&\thinspace\hfil$##$\hfil
      &&\quad\hfil$##$\hfil\crcr
      \omit\strut\hfil\crcr\noalign{\kern-\baselineskip}%
      #1\crcr\omit\strut\cr}}%
  \setbox\tw@\vbox{\unvcopy\z@\global\setbox\@ne\lastbox}%
  \setbox\tw@\hbox{\unhbox\@ne\unskip\global\setbox\@ne\lastbox}%
  \setbox\tw@\hbox{$\kern\wd\@ne\kern-\@tempdima\left[\kern-\wd\@ne
    \global\setbox\@ne\vbox{\box\@ne\kern2\p@}%
    \vcenter{\kern-\ht\@ne\unvbox\z@\kern-\baselineskip}\,\right]$}%
  \null\;\vbox{\kern\ht\@ne\box\tw@}\endgroup}
\makeatother

\makeatletter
\def\argmin{\mathop{\operator@font arg\,min}}
\def\argmax{\mathop{\operator@font arg\,max}}
\makeatother

\def\bm#1{\mbox{\boldmath $#1$}}

\newcommand{\bea}{\begin{array}}
\newcommand{\ena}{\end{array}}
\newcommand{\beq}{\begin{equation}}
\newcommand{\enq}{\end{equation}}

\newcommand{\beqa}{\begin{eqnarray}}
\newcommand{\enqa}{\end{eqnarray}}

\newcommand{\beqan}{\begin{eqnarray*}}
\newcommand{\enqan}{\end{eqnarray*}}

\newcommand{\AL}{\begin{enumerate}}
\newcommand{\ALE}{\end{enumerate}}

\def\addots{\mathinner{
    \mkern1mu\raise0pt\vbox{\kern7pt\hbox{.}}
    \mkern2mu\raise4pt\hbox{.}
    \mkern2mu\raise7pt\hbox{.}
    \mkern1mu}}

\def\sddots{\mathinner{
    \mkern.8mu\raise7pt\hbox{.}
    \mkern.8mu\raise4pt\hbox{.}
    \mkern.8mu\raise0pt\vbox{\kern7pt\hbox{.}}
    \mkern1mu}}

\def\saddots{\mathinner{
    \mkern.2mu\raise0pt\vbox{\kern7pt\hbox{.}}
    \mkern.2mu\raise4pt\hbox{.}
    \mkern.2mu\raise7pt\hbox{.}
    \mkern1mu}}

\def\sqplus{\mathbin{
	{\ooalign{\hfil\raise.3ex\hbox{\scriptsize
	+}\hfil\crcr\mathhexbox274\crcr\mathhexbox275}}
	}} 
\def\sqminus{\mathbin{
	{\ooalign{\hfil\raise.3ex\hbox{\scriptsize
	--}\hfil\crcr\mathhexbox274\crcr\mathhexbox275}}
	}}

\def\IC{{
   \mathord{
      \hbox to 0em{
	 \hskip-4pt
         \ooalign{
	   \smash{\hskip1.9pt\raise2.6pt\hbox{$\scriptscriptstyle |$}}\crcr
	   \smash{\hbox{\rm\sf C}} }
	 \hidewidth}
      \phantom{\hbox{\rm\sf C}}
} }}
\def\IN{
    {\ooalign{
   \smash{\hskip2.2pt\raise1.5pt\hbox{$\scriptscriptstyle |$}}\vphantom{}\crcr
   \hbox{\sf N}
	}}
	} 
\def\IZ{
    {\ooalign{
   \smash{\hskip1.9pt\raise0pt\hbox{$\sf Z$}}\vphantom{}\crcr
   \hbox{\sf Z}
	}}
	} 
\def\IR{
    {\ooalign{
   \smash{\hskip2.2pt\raise1.5pt\hbox{$\scriptscriptstyle |$}}\vphantom{}\crcr
   \smash{\hskip2.2pt\raise3.3pt\hbox{$\scriptscriptstyle |$}}\vphantom{}\crcr
   \hbox{\sf R}
	}}
	} 

\DeclareMathAlphabet{\mathcmb}{OT1}{cmr}{b}{n}

\def\bSigma{\ensuremath{\mathcmb{\Sigma}}}
\def\bLambda{\ensuremath{\mathcmb{\Lambda}}}

\def\bTheta{\ensuremath{\mathcmb{\Theta}}}

\newcommand{\SI}{\begin{indlist}}
\newcommand{\EI}{\end{indlist}}

\newcommand{\DL}{\begin{dashlist}}
\newcommand{\DLE}{\end{dashlist}}

\makeatletter
\def\setboxz@h{\setbox\z@\hbox}
\def\wdz@{\wd\z@}
\def\boxz@{\box\z@}
\def\underset#1#2{\binrel@{#2}%
  \binrel@@{\mathop{\kern\z@#2}\limits_{#1}}}
\def\binrel@#1{\begingroup
  \setboxz@h{\thinmuskip0mu
    \medmuskip\m@ne mu\thickmuskip\@ne mu
    \setbox\tw@\hbox{$#1\m@th$}\kern-\wd\tw@
    ${}#1{}\m@th$}%
  \edef\@tempa{\endgroup\let\noexpand\binrel@@
    \ifdim\wdz@<\z@ \mathbin
    \else\ifdim\wdz@>\z@ \mathrel
    \else \relax\fi\fi}%
  \@tempa
}
\let\binrel@@\relax%
\makeatother

\declaretheorem[style=plain,numberwithin=section]{theorem}
\declaretheorem[style=plain,sibling=theorem]{lemma}

\newtheorem{definition}{Definition}
\declaretheorem[style=plain,unnumbered]{Remark}
\declaretheorem[style=plain,unnumbered]{corollary}

\usepackage[accepted]{icml2026}

\icmltitlerunning{Learning Long Range Spatio-Temporal Representations over Continuous Time Dynamic Graphs with State Space Models}

\begin{document}

\twocolumn[
\icmltitle{Learning Long Range Spatio-Temporal Representations over Continuous Time Dynamic Graphs with State Space Models}



\icmlsetsymbol{equal}{*}
\icmlsetsymbol{dagger}{$\dagger$}
\begin{icmlauthorlist}
 \icmlauthor{Ayushman Raghuvanshi}{equal,yyy}
\icmlauthor{Thummaluru Siddartha Reddy}{equal,comp}
\icmlauthor{Sundeep Prabhakar Chepuri}{dagger,yyy}
\icmlauthor{Mahesh Chandran}{comp}
\end{icmlauthorlist}

\icmlaffiliation{yyy}{ Department of Electrical Communication Engineering, Indian Institute of Science, Bangalore}
\icmlaffiliation{comp}{ Fujitsu Research India, Bangalore}
\vskip 0.3in
 \icmlcorrespondingauthor{Ayushman} {ayushmanr@iisc.ac.in}
\icmlcorrespondingauthor{Siddartha}{Thummaluru.Siddarthareddy@fujitsu.com}


\vskip 0.3in
]



\printAffiliationsAndNotice{\icmlEqualContribution$^{\dagger}$Part of this work is done during visit to Fujitsu Research India.}
\begin{abstract}
   Continuous-time dynamic graphs (CTDGs) provide a richer framework to capture fine-grained temporal patterns in evolving relational data. Long-range information propagation is a key challenge while learning representations, wherein it is important to retain and update information over long temporal horizons. Existing approaches restrict models to capture one-hop or local temporal neighborhoods and fail to capture multi-hop or global structural patterns. To mitigate this, we derive a parameter-efficient state-space modeling framework for continuous-time dynamic graphs \texttt{(CTDG-SSM)} from first principles. 
   We first introduce continuous-time Topology-Aware higher order polynomial projection operator (\texttt{CTT-HiPPO}), a novel memory-based reformulation of  \texttt{HiPPO} to jointly encode temporal dynamics and graph structure. The solution from \texttt{CTT-HiPPO} are obtained by projecting the classical HiPPO solution through a polynomial of the Laplacian matrix, yielding topology-aware memory updates that admit an equivalent state-space formulation for CTDGs (\texttt{CTDG-SSM}). Then a computationally efficient discrete formulation is obtained using the zero-order hold approach for model implementation.
   Across benchmarks on dynamic link prediction, dynamic node classification, and sequence classification, \texttt{CTDG-SSM} achieves state-of-the-art performance. Notably, it achieves large performance gains on datasets that require long range temporal (LRT) and spatial reasoning\footnote{Code to reproduce the results is available at: \href{https://github.com/adhocmp122/CTDG-SSM}{https://github.com/adhocmp122/CTDG-SSM}  }.     
\end{abstract}

\section{Introduction}
\label{submission}
Continuous-time dynamic graphs (CTDGs) provide a principled framework for modeling evolving relational data as a continuous stream of timestamped events, with each event capturing interactions between entities at a specific time instance~\citep{TGN}. Unlike discrete-time dynamic graphs (DTDGs), which rely on coarser snapshot intervals~\citep{kazemi2020representation}, CTDGs preserve fine-grained temporal information, making them especially well-suited for tasks such as dynamic link prediction and dynamic node classification~\citep{DYGMAMBA,TGN}. These capabilities have made CTDGs increasingly important in domains including finance, healthcare, e-commerce, and social network analysis, to name a few.  Despite initial efforts in representation learning for CTDGs, existing approaches still face two primary challenges:  
(1) \emph{long-range temporal dependencies (LRT)}: the ability to preserve and use node states and interactions over extended time horizons; and  
(2) \emph{long-range spatial dependencies (LRS)}: the ability to capture multi-hop structural interactions beyond immediate neighborhoods in dynamic graphs. 

Based on these challenges, existing models for CTDGs can be broadly categorized into two types: \textit{event-driven models} and \textit{sequence-based models}. {\textit{Event-driven models} update node states at the arrival of each interaction and capture structural context through mechanisms such as temporal random walks and graph neural networks (GNNs)-based message passing~\citep{CAWN,TGN,TGAT}}. While computationally efficient, such models mainly capture short-term temporal patterns and are weak at preserving LRT~\citep{DYGFORMER}. The second category includes \textit{sequence-based models}, which explicitly target LRT using sequence models such as Transformer or Mamba. These methods construct temporal sequences of node features and their 1-hop temporal neighbors, patch them, and process them with either Transformer or Mamba layers~\citep{DYGFORMER,DYGMAMBA}. Although effective for LRT, these models inherently restrict structural context to the local neighborhood, limiting their capacity to capture LRS~\citep{CTAN} and global spatial patterns in dynamic graphs. Modeling LRS is particularly important in domains such as financial fraud detection, where money laundering typically spans long transaction chains rather than isolated local interactions{~\citep{IBM_fin}}.      
\begin{figure}[t]
    \centering
    \includegraphics[width=0.8\linewidth]{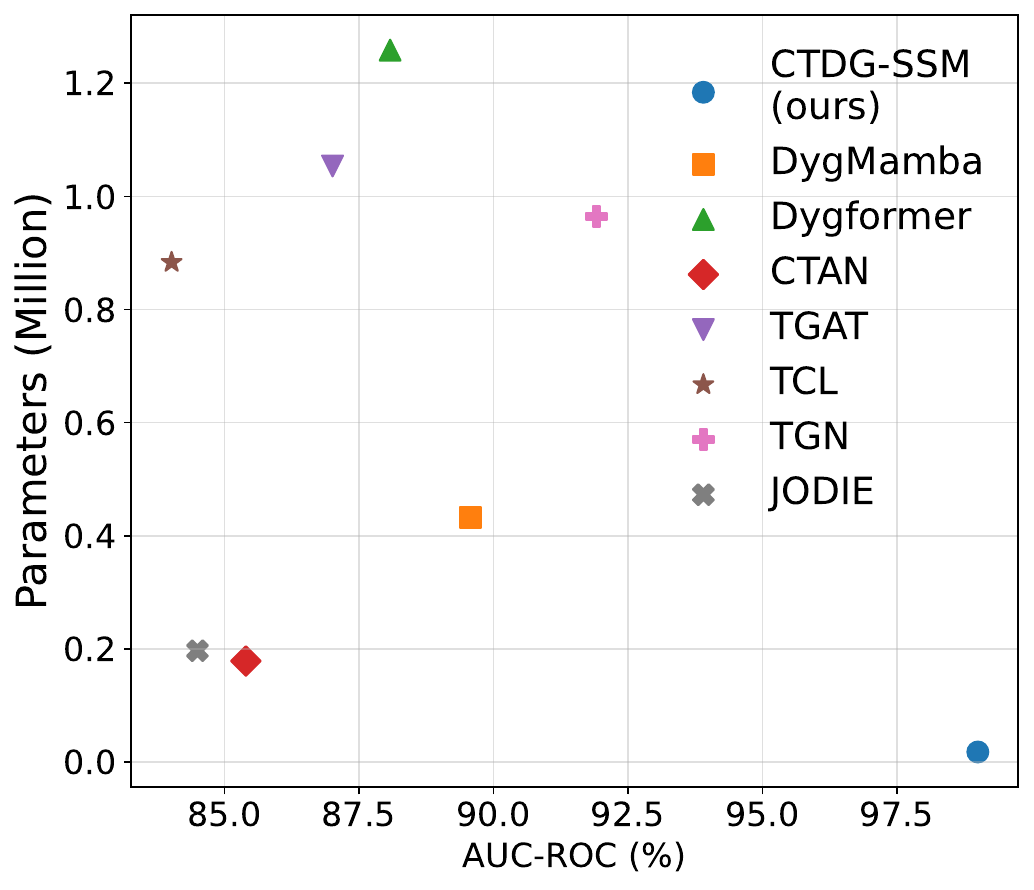}
    \vspace{-3mm}
    \caption{\footnotesize Efficiency of \texttt{CTDG-SSM} in terms of predictive performance and number of learnable parameters.}
    \label{fig:mooc_auc_params}
    \vspace{-4mm}
\end{figure}
In essence, existing methods do not simultaneously ensure LRS and LRT. 

To address this important gap and the limitations of existing methods in maintaining both LRS and LRT, we introduce a continuous-time dynamic graph state-space model (\texttt{CTDG-SSM}), which is a unified spatiotemporal state-space framework that integrates temporal memory compression in an online manner through a temporal polynomial basis and incorporates graph structure through graph filters that are polynomials of the graph Laplacian. 
To begin with, we derive a continuous-time, topology-aware higher-order polynomial projection operator (\texttt{CTT-HiPPO}), in which time-varying node signals are jointly represented using temporal and spatial polynomial bases. The coefficients of \texttt{CTT-HiPPO} are obtained by minimizing the discrepancy between observed node features and their graph-filtered polynomial approximations. This joint time-space formulation is unique in that it directly addresses the challenges of learning long-range spatial (LRS) and long-range temporal (LRT) dependencies, while capturing fine-grained temporal evolution in CTDGs. To implement \texttt{CTDG-SSM} efficiently, we discretize the continuous-time formulation using zero-order hold (ZOH), yielding the discrete counterpart of the model. The resulting \texttt{CTDG-SSM} remains lightweight, with only a small set of learnable parameters, primarily the coefficients of the graph polynomial filter and the system matrices governing state evolution. For the LRT task on the MOOC dataset, Fig.~\ref{fig:mooc_auc_params} shows that the model captures temporal patterns effectively despite its small parameter count. Using AUC-ROC and number of parameters as metrics, \texttt{CTDG-SSM} achieves top performance while using about one-tenth the parameters of competing methods.

\noindent\textbf{Contributions and main results.} We summarize the main contributions of the paper as follows: 

\begin{itemize}

\item We develop \texttt{CTT-HiPPO}, a HiPPO-based memory mechanism for CTDGs that efficiently compresses historical information from all events while maintaining LRT and LRS dependencies. One of the main results in the paper is that, by leveraging the relationship between the classical HiPPO coefficients and the coefficients of the developed \texttt{CTT-HiPPO}, we derive a novel SSM, \texttt{CTDG-SSM}, that governs the evolution of \texttt{CTT-HiPPO} via a state-space representation that captures both temporal and structural changes in CTDGs.


\item We derive a discrete form of \texttt{CTDG-SSM} \comment{using ZOH discretization} that enables efficient implementation with diagonal parameterization for scalable and stable computation.

\item We provide theoretical guarantees characterizing the robustness of \texttt{CTT-HiPPO} coefficients to graph perturbations and establish the permutation equivariance \comment{property} of  \texttt{CTDG-SSM}. These properties are crucial for real-world scenarios where \comment{continuous} data stream collection and processing are susceptible to errors and failures.
\end{itemize}

We conduct extensive experiments to assess the efficacy of our model to preserve both LRT and LRS contexts. For temporal long-range dependency, we benchmark \texttt{CTDG-SSM} on dynamic graph learning tasks such as link prediction and node classification, where it outperforms state-of-the-art methods on LRT benchmarks, including LastFM, Enron, and MOOC. The LRS dependency is evaluated by sequence classification experiment~\citep{CTAN} where \texttt{CTDG-SSM} is shown to significantly outperform current methods with information propagating over long-distances through node states with minimal decay.

\section{Related Works} 

\begin{table}[t]
\centering
\begin{tabular}{lccc}
\hline
Model & Graphs & LRS & LRT \\
\hline
Graph Mamba & Static & \tick & -- \\
GraphSSM & DTDG & \tick & -- \\
DyGMamba & CTDG & \cross & \tick \\
\textbf{CTDG-SSM(Ours)} & CTDG & \tick & \tick \\
\hline
\end{tabular}
\caption{Comparison of SSM across for graphs.}
\label{tab:model_comparison}
\end{table}

\textbf{Learning with DTDGs.} Learning on dynamic graphs can be broadly categorized into two subareas: learning for DTDGs and CTDGs. DTDGs represent data as a sequence of graph snapshots observed at discrete time intervals. Most learning algorithms for DTDGs extend static graph learning methods, such as graph convolutional networks (\texttt{GCNs}), to each snapshot and employ recurrent neural networks (\texttt{RNNs}) to capture temporal dependencies~\citep{pareja2020evolvegcn, chen2022evolving}. \texttt{GraphSSM} extends SSMs to the DTDGs to capture LRT dependencies~\citep{First_principles}. However, it assumes a fixed graph structure within each interval and then combines node embeddings from these snapshots using GNNs. 
Since CTDGs are fundamentally different from DTDGs, involving continuous graph evolution in which the set of nodes evolves over time and edges occur at irregular intervals, these methods (e.g.,~\citep{First_principles}) cannot be naively extended to CTDGs. Moreover, representing event streams using DTDGs rather than CTDGs inevitably leads to a loss of fine-grained temporal information~\citep{TGN, JODIE, Dyrep}.    

\textbf{Learning with CTDGs.} CTDGs represent dynamic graphs as streams of time-stamped events. Existing learning methods typically model either short-range or LRT dependencies, relying on random walks, message passing, or sequence modeling with Transformer or Mamba layers. Representative approaches include temporal random walks~\citep{nguyen2018dynamic,starnini2012random}, message passing architectures such as \texttt{TGAT}~\citep{TGAT}, and memory-based methods such as \texttt{TGN} and \texttt{JODIE}~\citep{TGN,JODIE}. Memory-based models that rely on \texttt{RNNs} often suffer from gradient instability (vanishing or exploding), which limits their ability to capture long-range dependencies~\citep{TGN}. To address this, recent architectures such as \texttt{DyGFormer} and \texttt{DyGmamba} employ Transformers and Mamba, respectively~\citep{DYGFORMER,DYGMAMBA}. However, these methods pre-process temporal data by restricting attention to one-hop temporal neighborhoods, thereby significantly limiting their ability to maintain LRS. In contrast, our proposed method learns node representations without imposing such structural constraints.
Specifically, our method is a graph-time state space model that directly processes interaction streams via an online state update mechanism, enabling continuous-time state evolution of structural and temporal dependencies. 

In sum, to the best of our knowledge, this is the first SSM framework for CTDGs that ensures both LRS and LRT. Notably, the proposed method can also be applied to discrete-time dynamic graphs (DTDGs), since DTDGs can be obtained by uniformly discretizing CTDGs at regular time intervals. In contrast, methods developed specifically for DTDGs \citep{First_principles} cannot be naively  extended to CTDGs, as such discretization inevitably discards fine-grained temporal patterns~\citep{souza2022provably}. A comparison with the most closely related methods is presented in Table~\ref{tab:model_comparison}.

\section{Continuous-Time Dynamic Graphs}

\paragraph{Notations.}{
We use boldface capital letters (e.g., \(\mX\)) to denote matrices and boldface lowercase letters (e.g., \(\vx\)) to denote vectors. The entry of \(\mX\) at index \((i,j)\) is written as \([\mX]_{i,j}\), and the \(i\)-th row of \(\mX\) is denoted by \(\mX_{i,:}\).
}

Consider a \emph{continuous-time} observation $\mathcal{G}(t) = (u, v, t),$ which represents a temporal edge between node $u$ and $v$ at time $t$. A CTDG~\citep{TGN}, denoted by $\cG$, is an ordered sequence of temporal interactions $\mathcal{G} = 
\{\mathcal{G}(t_1),\mathcal{G}(t_2),\ldots\}$ appearing at time instances $t_{1} < t_{2} <~\cdots$.  It should be noted that the same subset of nodes may appear in $\mathcal{G}(t_i)$ and $\mathcal{G}(t_j)$ for $i \neq j$. We capture the unique subsets of nodes that appear within a temporal window and define the underlying graph operator.

\noindent\textbf{Observed Graph.}  
For a given time $\tau \in \mathbb{R}_+$, we define the observed subgraph $\mathcal{G}_{\tau}$ of $\mathcal{G}$ as the collection of temporal interactions that occur up to time $\tau$. Formally,  
$
\mathcal{G}_{\tau} = \{ \mathcal{G}(t_i) \in \cG\;|\; t_i \le \tau \}.
$ The set of \emph{observed nodes} up to time $\tau$ consists of all the nodes that participate in any interaction in $\mathcal{G}_{\tau}$, and is denoted by 
$\mathcal{V}_{\tau} = \{ u \;|\; u \in \mathcal{G}(t_i), t_i \le \tau \}.$ Let us denote the number of nodes in $\mathcal{V}_{\tau}$ by $N_\tau = |\mathcal{V}_{\tau}|.$

\noindent\textbf{Subgraph operator and filters.}  
The temporal interactions of the observed nodes in $\mathcal{G}_\tau$ is captured by the subgraph adjacency matrix $\mA_\tau \in \mathbb{R}^{N_\tau \times N_\tau}$ with entries  
$
[\mA_{\tau}]_{u,v}= \sum\limits_{\cG(t_i) \in \cG_\tau} \mathbb{I}\big( \{u,v\} \in \mathcal{G}(t_i) \big),
$
where $\mathbb{I}(\cdot)$ is the indicator function defined as $\mathbb{I}(\{u,v\}\in \mathcal{G}(t_i)) = 1$ if $\{u,v\} \in \mathcal{G}(t_i)$, and $0$ otherwise.  We use the degree normalized Laplacian matrix defined as
$
\mL_\tau = \mI - \mD_\tau^{-1/2} \mA_\tau\mD_\tau^{-1/2},
$
where $\mD_\tau$ is the corresponding degree matrix $\mD_\tau = \operatorname{diag}(\mA_\tau \bm 1)$.  

Graph filters are expressed as matrix polynomials of the normalized Laplacian matrix. We define a $K$th-order filter as  
$
p(\mL_\tau) = \sum_{k=0}^{K-1} \alpha_k \, \mL_\tau^{k},
$
where $\{\alpha_k\}_{k=1}^{K-1}$ are learnable filter coefficients. Applying a $K$th-order filter aggregates information from up to $K$-hop neighborhoods in the subgraph $\cG_\tau$. Specifically, as $\tau$ evolves continuously with time in CTDGs, both $\mA_\tau$ and $\mL_\tau$ evolve sequentially, and thus the corresponding filters $p(\mL_\tau)$ adapt to the temporal evolution of the graph structure.  

Each node $u$ in the subgraph $\cG_\tau$ is associated with a feature vector $\vx_u(t) \in \mathbb{R}^{D_n}$.  Collecting the node features over the subgraph yields $\mX \in \mathbb{R}^{N_\tau \times D_n}$.

\section{The Proposed State-space Models for CTDGs}
In this section, we develop SSMs for CTDGs, with the objective of compressing historical event information into compact latent memory representations. We first present a HiPPO matrix~\citep{hippo} computation that incorporates graph structure as an inductive bias within latent memory representations. Specifically, we decompose node signals as graph-aware transformations of signals represented in an orthogonal polynomial space and then we develop a novel SSM model for CTDG and derive its discrete counterpart, which is useful for practical implementation.




%
To begin with, we describe the HiPPO projection for graph data.
Let $\vg(t) \in \mathbb{R}^{d\times 1}$ be a vector of orthogonal polynomials. We then model the $i$-th feature $\mX_{:,i}(t) \in \mathbb{R}^{N_{\tau}\times 1}$ on $\mathcal{V}_\tau$ as
\begin{align}\label{eqn:reformulation}
    \mX_{:,i}(t) = p(\mL_\tau)\mH_{\tau}^{(i)} \vg(t) + \vr_i(t) , ~~ \forall t< \tau.
\end{align}
Here, \(\mH_{\tau}^{(i)} \in \mathbb{R}^{N_{\tau} \times d}\) is the coefficient matrix corresponding to the \(i\)-th feature, for \(i = 1, \dots, D_n\). The polynomial \(p(\mL_\tau)\) of the normalized Laplacian acts as a graph filter that incorporates the topology of \(\cG_\tau\) by aggregating the HiPPO coefficients in \(\mH_{\tau}^{(i)}\). The error term \(\vr_i(t) \in \mathbb{R}^{N_{\tau}}\) accounts for model mismatch.

The coefficients \(\mH_{\tau}^{(i)}\) are then obtained by minimizing the residual over the temporal window \([0,\tau]\), as follows
\begin{align}
  \min_{\mH_{\tau}^{(i)}} ~~ \int_0^\tau \left\| \mX_{:,i}(t) - p(\mL_\tau) \mH_{\tau}^{(i)} \vg(t) \right\|^2_2 d\mu(t), 
  \label{eq:residual}
\end{align}
where $\mu(t)$ is the measure under which the orthogonality of $\vg(t)$ is defined. Although the above formulation provides a general framework for modeling the HiPPO coefficients for graph data with a learnable $ K$th-order graph filter, it is related under the setting $p(\mL_\tau) = \mI$ to the approaches in ~\citep{First_principles} that instead uses a quadratic Laplacian regularizer in~\eqref{eq:residual} and to the classical HiPPO formulation (without any graph structure)~\cite{hippo}. 


Now, to find the optimal set of coefficients $\mH_{\tau}^{(i)}$, we use the first-order optimality condition (detailed derivation can be found in Appendix~\ref{sec:CTT-Coefficients}) to obtain 
\begin{align}
\label{eqn:graph_coff_hippo_coff_relation}
&p(\mL_\tau) \mH_{\tau}^{(i)} = \int_0^\tau \mX_{:,i}(t)\vg(t)^\top d\mu(t)   = \mH_{\tau}^{(i),\texttt{HiPPO}}\\
&\mH_{\tau}^{(i)} = p(\mL_\tau)^{-1}  \mH_{\tau}^{(i),\texttt{HiPPO}} 
\end{align}
where $\mH_{\tau}^{(i),\texttt{HiPPO}}$ denotes the solution to the classical HiPPO formulation without any graph structure~\citep{hippo}, and by the choice of $\mL_\tau$,  $p(\mL_\tau)^{-1}$ is well-defined. From \eqref{eqn:graph_coff_hippo_coff_relation}, it can be seen that the \texttt{CTT-HiPPO} coefficients $\mH_{\tau}^{(i)}$ are essentially the graph-aware extension of the classical HiPPO coefficients, obtained by projecting $\mH_{\tau}^{(i),\texttt{HiPPO}}$ through the inverse polynomial graph filter. Although we provide the solution $\mH_{\tau}^{(i)}$ for a single feature $i$, it can be easily extended to multiple features along the lines as above. Henceforth, for brevity, we drop the index $i$ in  $\mH_{\tau}^{(i)}$ and $\mX_{:,i}(t)$ and simply use $\mH_\tau$ and $\mX(t)$.


\subsection{The CTDG State-Space Model}
We now present the main result of the paper, i.e., the state-space formulation that governs the evolution of the memory coefficients $\mH_\tau$. In SSMs, the temporal dynamics of an input signal are modeled through the progression of latent memory representations (state vectors). Accordingly, we model the evolution of CTDGs over time through the evolution of the memory coefficient matrix $\mH_\tau$, which jointly captures both temporal and topological structures. We refer to the proposed SSM for CTDG as \texttt{CTDG-SSM}, whose model is described in the next theorem.

\begin{theorem} [\emph{\texttt{CTDG-SSM}}] \label{theorem:4.1}
Consider a interval $s\in[\tau,\tau_+)$ with CTDGs $\cG_{\tau}$ and $\cG_{\tau_+}$ Let $\cG_\tau$ denote a CTDG at time $\tau$, and a new observation $\cG(\tau_+)$ with corresponding CTDG $\cG_{\tau_+}$. The evolution of the memory coefficients $\mH_{s}$ for $s \in [\tau,\tau_+)$ admits the following state-space representation:
\begin{align}\label{eq:structural-ssm}
\frac{d\mH_{s}}{ds}
\hspace{-0.1cm}= \hspace{-0.1cm} -  \frac{\mH_{s}\mA^\top}{M(s)}-
p(\mL_{s})^{-1} \frac{dp(\mL_{s})}{ds} \mH_{s}+\hspace{-0.1cm}
\frac{p(\mL_{s})^{-1} \mX({s})\mB^\top}{M(s)}
\end{align}
where $\mA \in \mathbb{R}^{d\times d}$ is the state-transition matrix that depends on the choice of the orthogonal polynomial $\vg(\cdot)$, $\mB \in \mathbb{R}^{d\times 1}$ is the input matrix, and $M(s): \mathbb{R}_+ \to \mathbb{R}_+$ is a normalization term that depends on the choice of the measure $\mu(t)$. Here, $\mL_{s} \in \mathbb{R}^{N_{\tau_+}\times N_{\tau_+}}$, $\mX(s) \in \mathbb{R}^{N_{\tau_+} \times 1}$, $p(\mL_{s}) = \frac{\tau-s}{\tau-\tau_+} p(\mL_{\tau_+}) + \frac{\tau_+-s}{\tau_+-\tau}p(\mL_{\tau})$, and $\mH_s \in \mathbb{R}^{N_{\tau_+}\times d}$ for $s\in[\tau,\tau_+)$  
\footnote{To match the dimension of $\mL_\tau$ and $\mL_{\tau_+}$ in \eqref{eq:structural-ssm}, we construct $\mL_\tau$ by removing the edges observed in $\cG(\tau+)$ from $\mL_{\tau_+}$.}.


\end{theorem}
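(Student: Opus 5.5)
The plan is to differentiate the relation \eqref{eqn:graph_coff_hippo_coff_relation}, $p(\mL_s)\mH_s = \mH_s^{\texttt{HiPPO}}$, in $s$ on the interval $[\tau,\tau_+)$, and to substitute the classical HiPPO dynamics for the right-hand side. First we fix the setting. For $s\in[\tau,\tau_+)$ every quantity is embedded in the $N_{\tau_+}$-dimensional node set. Following the footnote, $\mL_\tau$ is obtained from $\mL_{\tau_+}$ by deleting the edges of $\cG(\tau_+)$; nodes that are absent at time $\tau$ are simply isolated. We then define $\mH_s$ through \eqref{eqn:graph_coff_hippo_coff_relation} with the interpolated filter $p(\mL_s)=\frac{\tau-s}{\tau-\tau_+}p(\mL_{\tau_+})+\frac{\tau_+-s}{\tau_+-\tau}p(\mL_\tau)$, so that
\begin{align*}
p(\mL_s)\mH_s=\mH_s^{\texttt{HiPPO}}:=\int_0^s \mX(t)\vg(t)^\top d\mu^{(s)}(t).
\end{align*}
Here $\mu^{(s)}$ is the time-dependent HiPPO measure, e.g.\ uniform on $[0,s]$ for LegS. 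The interpolation is affine in $s$, so $\frac{dp(\mL_s)}{ds}=\frac{p(\mL_{\tau_+})-p(\mL_\tau)}{\tau_+-\tau}$ is constant on the interval, and $\mH_s$ is differentiable wherever $p(\mL_s)$ is invertible.

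The second step invokes the classical HiPPO theorem \citep{hippo}, applied row-wise, that is, node-wise to each scalar signal $\mX_{u}(t)$. In transposed form this gives $\frac{d}{ds}\mH_s^{\texttt{HiPPO}} = -\frac{1}{M(s)}\mH_s^{\texttt{HiPPO}}\mA^\top+\frac{1}{M(s)}\mX(s)\mB^\top$, with $M(s)=s$ for LegS. The matrices $\mA,\mB$ are determined by $\vg$, and $M$ by $\mu$. The node-wise application is legitimate because the HiPPO projection acts independently on each row.

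The third step applies the product rule to $p(\mL_s)\mH_s=\mH_s^{\texttt{HiPPO}}$:
\begin{align*}
\frac{dp(\mL_s)}{ds}\mH_s+p(\mL_s)\frac{d\mH_s}{ds}=-\frac{p(\mL_s)\mH_s\mA^\top}{M(s)}+\frac{\mX(s)\mB^\top}{M(s)}.
\end{align*}
Left-multiplying by $p(\mL_s)^{-1}$ yields \eqref{eq:structural-ssm} directly. In the first term the factors $p(\mL_s)^{-1}$ and $p(\mL_s)$ cancel, because $\mA^\top$ multiplies on the right and therefore commutes past the left-acting filter.

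The main obstacle is justifying the invertibility of $p(\mL_s)$ along the whole segment, since a convex combination of invertible matrices need not be invertible. I would first use the fact that the normalized Laplacians $\mL_\tau$ and $\mL_{\tau_+}$ are symmetric with spectrum in $[0,2]$. I would then assume, as the paper implicitly does when it asserts that $p(\mL_\tau)^{-1}$ is well defined, that the learned filter is positive definite on $[0,2]$; for instance, $p(\lambda)>0$ there. Then $p(\mL_\tau)$ and $p(\mL_{\tau_+})$ are both symmetric positive definite, their convex combination is too, and $p(\mL_s)$ is invertible with a smooth inverse. Isolated (degree-zero) nodes need separate care in $\mD^{-1/2}$: I would adopt the convention that such rows of $\mD^{-1/2}\mA\mD^{-1/2}$ vanish, so that $\mL$ restricted to those nodes equals $\mI$, which keeps the spectral bound intact.
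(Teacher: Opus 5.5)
Your proposal is correct and follows the paper's proof essentially step for step. Like the paper, you differentiate $p(\mL_s)\mH_s=\mH_s^{\texttt{HiPPO}}$ with the product rule, substitute the row-wise transposed classical HiPPO ODE, and left-multiply by $p(\mL_s)^{-1}$, using $p(\mL_s)^{-1}\mH_s^{\texttt{HiPPO}}=\mH_s$ in the first term. Your extra check that the interpolated filter stays invertible along the segment is a real improvement over the paper, which never addresses it, and your sign assumption costs nothing: the paper's condition $p(y)\neq 0$ on $[0,2]$ already forces $p$ to have constant sign there, so both endpoint filters, and every convex combination of them, are definite of that sign.
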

The proof of this theorem is relegated to Appendix~\ref{sec:proof_4.1}. The result directly follows from the equivalence between the classical~\texttt{HiPPO} coefficients and a linear ODE (Theorem 1 in~\citep{hippo}) characterized by the state matrix $\mA$ and input matrix $\mB$, and more importantly, incorporating the fact that $\mL_\tau$ depends on $\tau$ in CTDGs. We end this subsection with the following remark that explicitly connects \texttt{CTDG-SSM} to~\citep{hippo} and~\citep{First_principles}.

\begin{Remark}
\eqref{eq:structural-ssm} shows that the graph filter $p(\mL_\tau)$ modifies the classical $\operatorname{HiPPO}$ dynamics by introducing time-dependent graph-aware terms that account for the change in temporal evolution of the graph. When the polynomial of Laplacian is static or fixed, we have $\tfrac{d p(\mL_s)}{ds} = 0$. Thus, \texttt{CTDG-SSM} reduces to the piecewise constant \comment{constant} SSM variant in~\citep{First_principles}.

When there is no graph, i.e., $p(\mL_\tau)=\mI$, \texttt{CTDG-SSM} reduces exactly to classical SSM~\citep{hippo}. 
\end{Remark}

\subsection{ Discretized CTDG-SSM}
 We now describe the discrete-time version of \texttt{CTDG-SSM} in this section. 

 In practice, the continuous-time SSMs are discretized using ZOH, which assumes piecewise constant inputs, i.e., $\mX(t) = \mX[k]$ for $t \in [t_{k-1}, t_{k})$ with step size $\Delta[k] = t_{k}-t_{k-1}$. Here $t_{k-1}$ is $\tau$ and $t_{k}$ is $\tau_+$. Where $\mL[k] \in \mathbb{R}^{N_{\tau_+}\times N_{\tau_+}}$ is obtained using a subgraph $\mathcal{G}_{\tau_+}$ and $\mL[k-1]  \in \mathbb{R}^{N_{\tau_+}\times N_{\tau_+}}$ is obtained by removing the newly observed edges in $\Delta[k]$ time interval from $\mL[k]$.
 
\begin{theorem}[\emph{Discrete \texttt{CTDG-SSM}}] \label{them:4.3}
Let $\mX[k]$ denote the input at time $t_{k}$, and let the temporal graph structures at times $t_{k}$ and $t_{k-1}$ be represented by the Laplacians $\mL[k]$ and $\mL[k-1]$, respectively. Then for $\Delta[k] = t_{k}-t_{k-1}$, the memory update of the proposed $\texttt{CTDG-SSM}$ model is governed by the following discrete-time recursion:
\begin{align} &\mH[k+1] = \Bar{\mA}_{\mL[k]} \, \mH[k] \, \Bar{\mA} + \bar\mB(\mL[k], \mX[k]) \label{eq:discrete-update} 
\end{align}
Here, $\Bar{\mA}_{\mL[k]} = \exp \left(- p(\mL[k])^{-1} \left( p(\mL[k]) - p(\mL[k-1]) \right) \right)$, $\Bar{\mA} = \exp(-\Delta[k] \mA^\top )$, and $\bar\mB(\mL[k], \mX[k]) = \int_0^1 (\Bar{\mA}_{\mL[k]})^s \, p(\mL[k])^{-1} \mX[k] \mB^\top (\Bar{\mA})^s \, \Delta[k] \, ds.$
\end{theorem}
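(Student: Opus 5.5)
We would start from the continuous-time \texttt{CTDG-SSM} of Theorem~\ref{theorem:4.1} on the interval $s\in[t_{k-1},t_k)$ and integrate it exactly under the ZOH assumption. Three simplifications are used. The input is frozen: $\mX(s)=\mX[k]$. The normalization $M(s)$ is absorbed into the step, treated as constant on the interval (as in the time-invariant, LegT-type, HiPPO case), so the left multiplier becomes $\mA^\top$ and the step is $\Delta[k]$. Finally, because $p(\mL_s)$ is affine in $s$, its derivative is constant on the interval:
\[
\frac{dp(\mL_s)}{ds}=\frac{p(\mL[k])-p(\mL[k-1])}{\Delta[k]}.
\]
We also replace the prefactor $p(\mL_s)^{-1}$ by its ZOH value $p(\mL[k])^{-1}$. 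The equation then becomes a constant-coefficient Sylvester-type linear ODE,
\[
\dot\mH = -\mC\mH - \mH\mA^\top + \mN,
\]
where $\mC = p(\mL[k])^{-1}\bigl(p(\mL[k])-p(\mL[k-1])\bigr)/\Delta[k]$ and $\mN = p(\mL[k])^{-1}\mX[k]\mB^\top$.

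The key step is to solve this matrix ODE by a two-sided integrating factor. Setting $\mW(s)=e^{s\mC}\mH(s)e^{s\mA^\top}$ and using the fact that $\mC$ commutes with $e^{s\mC}$ and $\mA^\top$ commutes with $e^{s\mA^\top}$, we get $\dot\mW = e^{s\mC}\mN e^{s\mA^\top}$. Integrating over a step of length $\Delta[k]$ gives the variation-of-constants formula
\[
\mH(\Delta)=e^{-\Delta\mC}\mH(0)e^{-\Delta\mA^\top}+\int_0^{\Delta} e^{-(\Delta-r)\mC}\,\mN\,e^{-(\Delta-r)\mA^\top}\,dr .
\]
Next, $\Delta\mC = p(\mL[k])^{-1}(p(\mL[k])-p(\mL[k-1]))$, so $e^{-\Delta\mC}=\Bar{\mA}_{\mL[k]}$ and $e^{-\Delta\mA^\top}=\Bar{\mA}$, which reproduces the homogeneous part of \eqref{eq:discrete-update}. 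The index convention $\mH[k]=\mH(t_{k-1})$ and $\mH[k+1]=\mH(t_k)$ is the one used in the statement.

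For the forcing term we substitute $r=\Delta(1-s)$, so that $dr=\Delta\,ds$ and $e^{-(\Delta-r)\mC}=e^{-s\Delta\mC}=(\Bar{\mA}_{\mL[k]})^s$, and similarly $e^{-(\Delta - r)\mA^\top}=(\Bar{\mA})^s$. Here the real power is understood as $\exp(s\log\cdot)$, i.e. $e^{-s\Delta\mC}$. This gives exactly $\bar\mB(\mL[k],\mX[k])$.

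The main obstacle is justifying the freezing of the time-varying prefactor $p(\mL_s)^{-1}$ and of $M(s)$. With $p(\mL_s)^{-1}$ kept exact, $\mC(s)$ is not constant and would not commute with itself at different times, so no closed-form exponential would exist. We would handle this by stating it as the ZOH approximation consistent with the paper's discretization: $p(\mL_s)$ is evaluated at the right endpoint $t_k$, which is the graph on which $\mX[k]$ is observed. We would then note that the exact result holds whenever the Laplacians commute, since then $p(\mL_s)^{-1}\dot p(\mL_s)=\frac{d}{ds}\log p(\mL_s)$ integrates to $\log\bigl(p(\mL[k])p(\mL[k-1])^{-1}\bigr)$. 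Invertibility of $p(\mL[k])$ is assumed as in the paper.
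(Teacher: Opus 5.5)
Your derivation of the recursion is essentially the paper's. The appendix makes the same frozen-coefficient choices in $\mA_g(t_k)$ and $\mB_g(t_k)$: prefactor $p(\mL[k])^{-1}$, constant slope $(p(\mL[k])-p(\mL[k-1]))/\Delta[k]$, and $M(s)$ silently dropped. It then applies ZOH to the vectorized Kronecker-sum system and un-vectorizes using $e^{\mA\oplus\mB}=e^{\mA}\otimes e^{\mB}$ and the mixed-product rule. Your two-sided integrating factor $e^{s\mC}\mH e^{s\mA^\top}$ solves the same Sylvester-type ODE directly, which is equivalent and avoids the large Kronecker matrices. Your substitution $r=\Delta(1-s)$ correctly yields $\bar\mB$.

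The closing claim, however, is wrong and should be dropped: the recursion does \emph{not} become exact when the Laplacians commute. In that case you correctly get $\int_{t_{k-1}}^{t_k} p(\mL_s)^{-1}\tfrac{dp(\mL_s)}{ds}\,ds=\log\bigl(p(\mL[k])\,p(\mL[k-1])^{-1}\bigr)$. But this is not $\Delta[k]\mC=\mI-p(\mL[k])^{-1}p(\mL[k-1])$. Eigenvalue by eigenvalue it is $\log x$ versus $1-1/x$, with $x$ an eigenvalue of $p(\mL[k])p(\mL[k-1])^{-1}$, and these agree only to first order at $x=1$.

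Your remark that no closed form exists without freezing is also inaccurate. Multiplying the ODE of Theorem~\ref{theorem:4.1} by $p(\mL_s)$ gives $\tfrac{d}{ds}\bigl(p(\mL_s)\mH_s\bigr)=-p(\mL_s)\mH_s\mA^\top/M(s)+\mX(s)\mB^\top/M(s)$, so $p(\mL_s)\mH_s$ follows plain HiPPO dynamics. Equivalently, the left propagator of $\dot{\mathbf{Y}}=-p(\mL_s)^{-1}\tfrac{dp(\mL_s)}{ds}\mathbf{Y}$ is exactly $p(\mL_s)^{-1}p(\mL[k-1])$, with no commutativity needed. Hence, with $M$ frozen and $\mX(s)=\mX[k]$ as in your argument, the exact update is
\[
\mH[k+1]=p(\mL[k])^{-1}p(\mL[k-1])\,\mH[k]\,\bar\mA+p(\mL[k])^{-1}\mX[k]\mB^\top\int_0^1\bar\mA^{s}\,\Delta[k]\,ds .
\]
Set $\mathbf{R}=p(\mL[k])^{-1}p(\mL[k-1])$. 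The theorem's $\bar\mA_{\mL[k]}=e^{\mathbf{R}-\mI}$ matches $\mathbf{R}$ only up to $O(\|\mathbf{R}-\mI\|^2)$, and the factor $(\bar\mA_{\mL[k]})^s$ inside $\bar\mB$ has no counterpart in the exact update. So the stated recursion is a small-graph-change approximation even for commuting Laplacians. Present the freezing of $p(\mL_s)^{-1}$ at $p(\mL[k])^{-1}$ as the modeling convention that defines the discrete model, which is what the paper does implicitly. Do not present it as something that becomes exact in a special case.
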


We present the detailed proof in Appendix~\ref{appendix:them_4.2}. The proof proceeds by first simplifying Equation~\eqref{eq:structural-ssm} to standard state-space form with system and input matrices, leveraging the properties of the Kronecker structure. We then apply the ZOH discretization to this form and subsequently factorize the discretized equations to obtain the final expression.
The discrete memory update in \eqref{eq:discrete-update} is structurally analogous to the vanilla Mamba update~\citep{gu_mamba_2024}, with two key distinctions: there are two state-transition matrices that jointly operate on the state variable, and the input-dependent component $\bar\mB(\mL[k], \mX[k])$ does not admit a closed-form solution.   
{
\begin{Remark}
The invertibility of the matrix \( p(\mL[k]) \) involved in \eqref{eq:discrete-update} is ensured by restricting the set of feasible polynomials filters to those satisfying \( p(y) \neq 0 \) for all \( y \in [0,2] \). Since \( \mL[k] \) is a normalized symmetric graph Laplacian, its spectrum satisfies \( \lambda[k] \in [0,2] \). Therefore, for any feasible polynomial we have \( p(\lambda[k]) \neq 0 \), which implies that \( p(\mL[k]) \) is invertible. It is also important to note that the computational complexity of this operation is influenced by the number of active batch nodes $N_B$ and the batch size, which are controllable. More detailed discussion on runtime complexity analysis is relegated to  Appendix~\ref{sec:time_complexity} and ~\ref{Sec:runtimeanalysis}.  

\end{Remark}
}
\begin{figure*}[t]
    \centering
\includegraphics[width=0.9\linewidth]{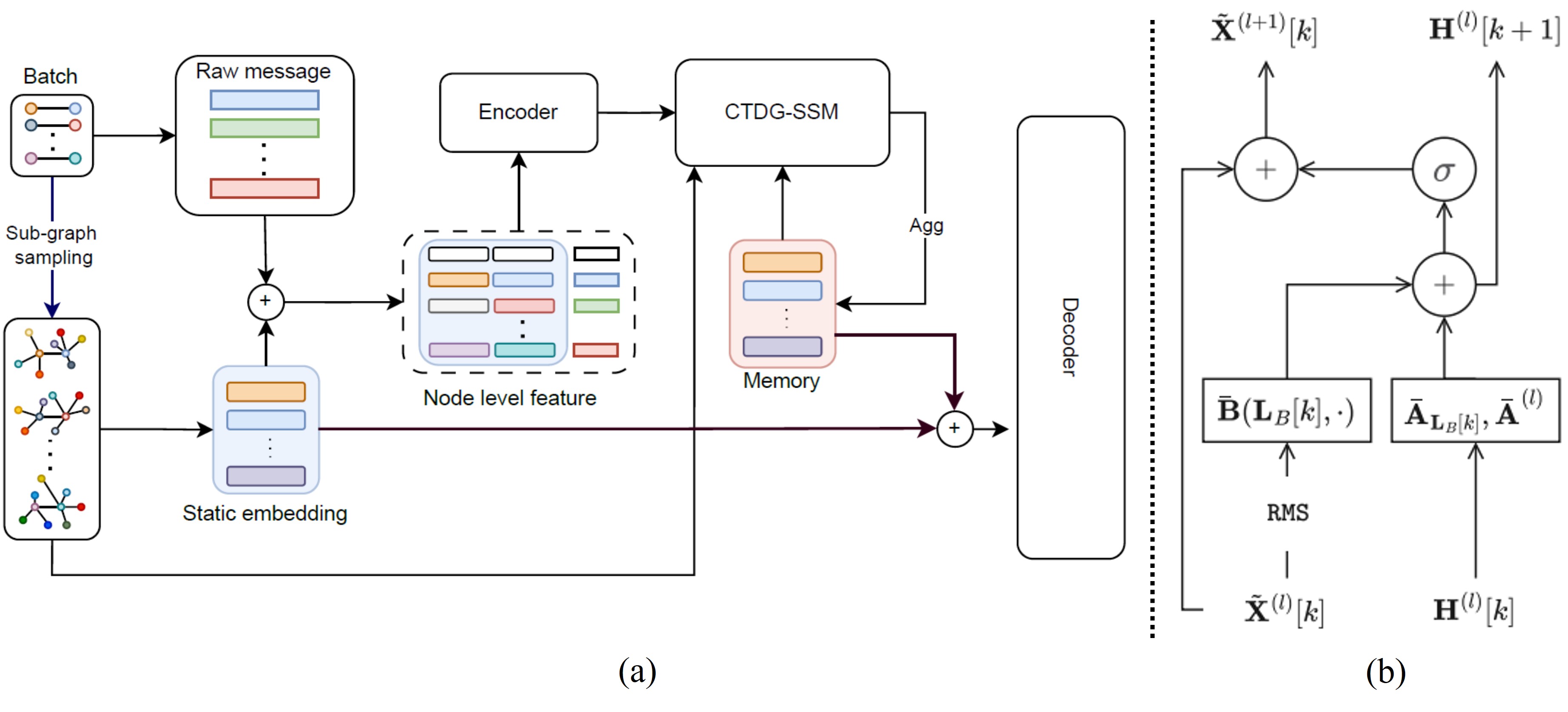}
    \caption{(a). Architecture of the \texttt{CTDG-SSM} framework. Events are batched from the input event stream, and a batch-level subgraph is constructed via subgraph sampling. Raw messages are combined with static embeddings to form node-level features, which are encoded and processed by the CTDG-SSM module to update dynamic memory. The updated memory is then aggregated with static embeddings to produce the final node representations. (b) Illustration of a single SSM layer. At time step \(t_k\), the input \(\tilde{\mathbf{X}}^{(l)}[k]\) is RMS-normalized and projected using \(\tilde{\mathbf{B}}*{L_B[k], \cdot}\), while the hidden state \(\mathbf{H}^{(l)}[k]\) is transformed via the state transition matrix \(\bar{\mathbf{A}}*{L_B[k]} \text{ and } \bar{\mathbf{A}}^{(l)}\). The resulting terms are combined through additive updates and a nonlinear activation \(\sigma(\cdot)\) to produce the updated hidden state \(\mathbf{H}^{(l)}[k+1]\) and output \(\tilde{\mathbf{X}}^{(l+1)}[k]\).}
    \label{fig:simple-flow}
\end{figure*}
\section{Architecture} \label{sec:Architecture}
In this section, we introduce the proposed architecture that implements discrete \texttt{CTDG-SSM}. The overall modular design is illustrated in Fig.~\ref{fig:simple-flow}(a). It mainly consists of three blocks: 
(a) Subgraph sampler: constructs $N_u$-temporal neighborhoods for each node.
(b) Node feature encoder: integrates node, edge, and temporal information into node feature representations.
(c) \texttt{CTDG-SSM} module: generates memory representations that capture LRT dependencies and structural context. 

\noindent\textbf{Subgraph sampling.} At each training step, we construct a mini-batch of temporal interactions by grouping together $B$ chronologically consecutive events. From this batch, we develop a batch level Laplacian $\mL_B[k] \in \mathbb{R}^{N_B \times N_B}$ by generating subgraphs via a neighborhood-based sampling strategy: for every node participating in an event, we sample up to $N_u$ of its most recent neighbors, where $N_u$ defines the spatial context size. To estimate $\mL_B[k-1]$, we remove the current batch interaction edges from $\mL_B[k]$ while preserving the neighborhood edges of the $N_u$ neighbors. This subgraph-based approach is motivated by two factors: (i) it captures information from the multi-hop temporal neighborhood, and (ii) it enables the model to update states for nodes beyond those directly involved in the observed interactions, thereby incorporating both local structural dependencies and broader temporal context.

\noindent\textbf{Node feature encoder.} 
We construct the batch input features $\mX^{(B)}\in \mathbb{R}^{N_B \times D_B}$ \comment{for the current batch} by concatenating node-specific features, temporal neighbor features, edge attributes, and the corresponding timestamp information of events in the batch. For an interaction event $\cG(t_i) = (u,v,t_i)$ with edge feature $\vx_{uv}$, the feature vectors for the participating nodes are defined as
$\mX^{(B)}_{u,:} = [\vx_u(t_i)|| \vx_v(t_i) || \vx_{u,v}|| \phi(\Delta t_i)]$, and $\mX^{(B)}_{v,:} = [\vx_u(t_i)|| \vx_v(t_i) || \vx_{u,v} || \phi(\Delta t_i)]$. Here, $\vx_u$ and $\vx_v$ denote the static embeddings of nodes $u$ and $v$ concatenated with their raw features, and $\phi(\cdot)$ denotes a fixed (non-trainable) time-encoding function. The term $\Delta t_i$ corresponds to the inter-event time since the last occurrence of $(u,v)$; for first-time interactions, $\Delta t_i$ is assigned a large constant following prior works~\citep{DYGMAMBA}.

\textbf{Encoder.} The encoder $h_\theta$ takes the input feature matrix $\mX^{(B)}$ and projects it into a latent space of $d$-dimension. These projected features are then used to update the memory representation through the \texttt{CTDG-SSM} recurrence. In experimentation, we implement the encoder as a 2-layer neural network and represent augmented and projected node features as $h_\theta(\mX^{(B)}) = \Tilde{\mX} [k] \in \mathbb{R}^{N_{B} \times d}$.

\textbf{Learnable CTDG-SSMs.} The \texttt{CTDG-SSM} block computes node memory representations according to \eqref{eq:discrete-update}. 
While a single-layer \texttt{CTDG-SSM} is sufficient to capture linear state-space dynamics, stacking multiple layers enables the model to learn richer temporal feature transformations. To enhance representational capacity, we incorporate residual connections, \texttt{RMSNorm} normalization, and the \texttt{GeLU} activation following design principles from Mamba~\citep{gu_mamba_2024} (see Fig.~\ref{fig:simple-flow}(b)).
To elaborate, given the output of $(l-1)$-th layer denoted as $\Tilde{\mX}^{(l)}[k]$, the $l$-th layer performs the following sequence of operations:
$\mH^{(l)}[k+1] = \texttt{CTDG-SSM}(\operatorname{RMS}(\Tilde{\mX}^{(l)}[k]), \mL_B[k])$, and $\Tilde{\mX}^{(l+1)}[k] = \Tilde{\mX}^{(l)}[k] + \sigma\big(\mH^{(l)}[k+1]\big)$,
where $\operatorname{RMS}(\cdot)$ denotes RMS normalization and $\sigma$ is a nonlinear activation function. We use the \texttt{GeLU} activation, which promotes stable training and ensures well-scaled feature transformations. The input for the first layer, i.e,  $\Tilde{\mX}^{(0)}[k] = \Tilde{\mX}[k]$, is the projected node features.   
For nodes participating in multiple events within the same batch, we apply a \textit{mean} aggregator to obtain a single consolidated representation.

\textbf{Memory.} The memory module maintains the latent representations of all nodes.  These are initialized as zero vectors of dimension 
$d$. After each batch, the memory is updated with the newly computed representations of the nodes involved in the current interactions and their sampled neighbors from the observed graph.

\textbf{Decoder.} For downstream tasks such as link prediction and node classification, the decoder operates on the memory representations of the target nodes.  The complete implementation algorithm of the proposed architecture is detailed in Appendix~\ref{sec:thm_pert}.

\textit{Link Prediction.}  
Given a query of the form $(u,v,T)$, we first retrieve the static embeddings and dynamic memory states of nodes $u$ and $v$, denoted $\vh_u$ and $\vh_v$. This representation is augmented with a learnable temporal embedding $\psi(\Delta t)$, where $\Delta t = T - t_{\text{last}}$ and $t_{\text{last}}$ denotes the most recent interaction time between $u$ and $v$. The concatenated vector$\big[\, \vh_u \,\|\, \vh_v \,\|\, \psi(\Delta t) \,\big]$ is then passed through a linear layer to produce an edge score.

\textit{Node Classification.}  
For a query of the form $(u, v, T)$ or $(u,T)$, only the representation of node $u$ is used. The decoder applies a linear mapping to $\vh_u$, optionally concatenated with available temporal information, to produce a multi-class probability vector corresponding to the predicted node label.


\begin{table*}
\centering 
\caption{AUC-ROC of dynamic link prediction with random negative sampling under T: Transductive, and I: Inductive setup. Best-performing model per dataset is shown in bold.}
\resizebox{\textwidth}{!}{
\begin{tabular}{llccccccccccc}
\toprule \label{tab:link_pred_main}
\textbf{Setup} & \textbf{Datasets} & JODIE & DyRep & TGAT & TGN & CAWN & TCL & GraphMixer & DyGFormer & CTAN & DyGmamba & \texttt{CTDG-SSM}  \\
\midrule
\multirow{7}{*}{T} 
& LastFM & 70.89  $\pm$  1.97 & 71.40  $\pm$  2.12 & 71.47  $\pm$  0.14 & 76.64  $\pm$  4.66 & 85.92  $\pm$  0.16 & 71.09  $\pm$  1.48 & 73.51  $\pm$  0.14 & 93.03  $\pm$  0.11 & 85.12  $\pm$  0.77 & 93.31  $\pm$  0.18 & \textbf{93.79 $\pm$  0.22} \\
& Enron &87.77 $\pm$ 2.43 & 83.09 $\pm$ 2.20 & 68.57 $\pm$ 1.46 & 88.72 $\pm$ 0.95 & 90.34 $\pm$ 0.23  & 83.33 $\pm$ 0.93 & 84.16 $\pm$ 0.34 & 93.20 $\pm$ 0.12 & 87.09 $\pm$ 1.51 & 93.34 $\pm$ 0.23 & \textbf{94.98  $\pm$  2.92} \\
& MOOC & 84.50 $\pm$ 0.60 & 84.50 $\pm$ 0.87 & 87.01 $\pm$ 0.16 & 91.91 $\pm$ 0.82 & 80.48 $\pm$ 0.41  & 84.02 $\pm$ 0.59 & 84.04 $\pm$ 0.12 & 88.08 $\pm$ 0.50 & 85.40 $\pm$ 2.67 & 89.58 $\pm$ 0.12  & \textbf{99.00  $\pm$  0.33} \\
& Reddit & 98.29 $\pm$ 0.05 & 98.13 $\pm$ 0.04 & 98.50 $\pm$ 0.01 & 98.61 $\pm$ 0.05 & 99.02 $\pm$ 0.00  & 97.67 $\pm$ 0.01 & 97.17 $\pm$ 0.02 & 99.15 $\pm$ 0.01 & 97.24 $\pm$ 0.75 & 99.27 $\pm$ 0.01 & \textbf{99.48  $\pm$  0.02}\\
& Wikipedia & 96.36 $\pm$ 0.14 & 94.43 $\pm$ 0.32 & 96.60 $\pm$ 0.07 & 98.37 $\pm$ 0.10 & 98.54 $\pm$ 0.01  & 97.27 $\pm$ 0.06 & 96.89 $\pm$ 0.04 & 98.92 $\pm$ 0.03 & 97.00 $\pm$ 0.21 & 99.08 $\pm$ 0.02 & \textbf{99.33 $\pm$ 0.08} \\
& UCI & 90.35 $\pm$ 0.51 & 69.46 $\pm$ 2.66 & 78.76 $\pm$ 1.10 & 92.03 $\pm$ 0.69 & 93.81 $\pm$ 0.23  & 85.49 $\pm$ 0.82 & 91.62 $\pm$ 0.52 & 94.45 $\pm$ 0.22 & 76.25 $\pm$ 2.83 & \textbf{94.77 $\pm$ 0.18}  & 89.24 $\pm$ 0.43\\
& Social Evo. &92.13 $\pm$ 0.20 & 90.37 $\pm$ 0.52 & 94.93 $\pm$ 0.06 & 95.31 $\pm$ 0.27 & 87.34 $\pm$ 0.10 & 95.45 $\pm$ 0.21 & 95.21 $\pm$ 0.07 & 96.25 $\pm$ 0.04 & Timeout & 96.38 $\pm$ 0.02 &  \textbf{99.10 $\pm$ 0.49} \\
\midrule
& \textbf{Avg. Rank} & 7.93 & 9.36 & 7.86 & 4.57 & 5.71 & 8.00 & 7.71 & 3.00 & 7.50 & 2.00 & \textbf{1.86} \\
\midrule
\multirow{7}{*}{I} 
& LastFM & 83.13 $\pm$ 1.19 & 83.47 $\pm$ 1.06 & 78.40 $\pm$ 0.30 & 81.18 $\pm$ 3.27 & 89.33 $\pm$ 0.06 & 81.38 $\pm$ 1.53 & 82.07 $\pm$ 0.31 & 94.17 $\pm$ 0.10 & 60.40 $\pm$ 3.01 & 94.42 $\pm$ 0.21 & \textbf{94.49 $\pm$  0.27} \\
& Enron &  78.97 $\pm$ 1.59 & 73.97 $\pm$ 3.00 & 66.67 $\pm$ 1.07 & 78.76 $\pm$ 1.69 & 86.30 $\pm$ 0.56 & 82.61 $\pm$ 0.61 & 75.55 $\pm$ 0.81 & 89.62 $\pm$ 0.27 & 74.61 $\pm$ 1.64 & 89.67 $\pm$ 0.27& \textbf{93.66  $\pm$  4.67}\\
& MOOC& 80.57 $\pm$ 0.52 & 80.50 $\pm$ 0.68 & 85.28 $\pm$ 0.30 & 88.01 $\pm$ 1.48 & 81.32 $\pm$ 0.42 & 82.28 $\pm$ 0.99 & 81.38 $\pm$ 0.17 & 87.05 $\pm$ 0.51 & 64.99 $\pm$ 2.24 & 88.64 $\pm$ 0.08 & \textbf{98.67  $\pm$  0.46} \\
& Reddit  & 96.43 $\pm$ 0.16 & 95.89 $\pm$ 0.26 & 97.13 $\pm$ 0.04 & 97.41 $\pm$ 0.12 & 98.62 $\pm$ 0.01 & 95.01 $\pm$ 0.10 & 95.24 $\pm$ 0.08 & 98.83 $\pm$ 0.02 & 80.07 $\pm$ 2.53 & 98.97 $\pm$ 0.01 & \textbf{99.13  $\pm$  0.03} \\
& Wikipedia & 94.91 $\pm$ 0.32 & 92.21 $\pm$ 0.29 & 96.26 $\pm$ 0.12 & 97.81 $\pm$ 0.18 & 98.27 $\pm$ 0.02 & 97.48 $\pm$ 0.06 & 96.61 $\pm$ 0.04 & 98.58 $\pm$ 0.01 & 93.58 $\pm$ 0.65 & 98.77 $\pm$ 0.03 & \textbf{99.06  $\pm$  0.10} \\
& UCI & 79.73 $\pm$ 1.48 & 58.39 $\pm$ 2.38 & 79.10 $\pm$ 0.49 & 87.81 $\pm$ 1.32 & 92.61 $\pm$ 0.35 & 84.19 $\pm$ 1.37 & 91.17 $\pm$ 0.29 & 94.45 $\pm$ 0.13 & 49.78 $\pm$ 5.02 & \textbf{94.76 $\pm$ 0.19} & 87.43  $\pm$  0.79 \\
& Social Evo. & 91.72 $\pm$ 0.66 & 89.10 $\pm$ 1.90 & 91.47 $\pm$ 0.10 & 90.74 $\pm$ 1.40 & 79.83 $\pm$ 0.14 & 92.51 $\pm$ 0.11 & 91.89 $\pm$ 0.05 & 93.05 $\pm$ 0.10 & Timeout & 93.13 $\pm$ 0.05 & \textbf{98.60 $\pm$ 0.14} \\
\midrule
\textbf{} & \textbf{Avg. Rank} 
& 7.29 & 9.00 & 8.00 & 6.00 & 5.29 & 6.57 & 6.71 & 3.00 & 10.57 & 1.86 & \textbf{1.71} \\ 
\bottomrule
\end{tabular}}
\end{table*}

\section{Theoretical Characterization}

In this section, we derive the robustness and permutation equivariance \comment{properties} of \texttt{CTDG-SSM}. In particular, the robustness property characterizes the stability of memory representations under structural perturbations and is crucial given that real-world temporal graphs may include spurious edges.

\begin{theorem}[\emph{Robustness property}] \label{thm:pertrurbation}
Let $\bar{\mL} = \mL + \Delta\mL$ be the perturbed graph Laplacian with $\|\Delta\mL\|_{2} \leq \epsilon$. Then the error between the perturbed and true coefficients is bounded linearly in terms of the energy of the perturbed graph Laplacian as  $\frac{\|\hat{\mH}_{\tau}-{\mH}_{\tau}\|_{2}}{\|\mH_{\tau}\|_{2}}\leq \epsilon \Gamma$, where $\Gamma = \frac{\lambda_{2}\lambda_{c}}{\lambda^2_{1}}$ with $\lambda_1:= \min_{y\in [0,2] } |p(y)| >0$, $\lambda_2:= \max_{y\in [0,2] }|p(y)|$, and $\lambda_c:= \max_{y\in[0,2]} |\frac{dp(y)}{dy}|$. 
\end{theorem}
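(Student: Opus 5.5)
We use the closed-form relation \eqref{eqn:graph_coff_hippo_coff_relation}: the classical HiPPO coefficients $\mH_{\tau}^{\texttt{HiPPO}}=\int_0^\tau \mX(t)\vg(t)^\top d\mu(t)$ do not depend on the graph. Hence
\begin{align*}
\mH_\tau = p(\mL)^{-1}\mH_{\tau}^{\texttt{HiPPO}},\qquad \hat{\mH}_\tau = p(\bar{\mL})^{-1}\mH_{\tau}^{\texttt{HiPPO}}.
\end{align*}
The whole question therefore reduces to perturbation of the matrix function $y\mapsto p(y)^{-1}$. We write the difference through the resolvent-type identity
\begin{align*}
\hat{\mH}_\tau-\mH_\tau = \bigl(p(\bar{\mL})^{-1}-p(\mL)^{-1}\bigr)\mH_{\tau}^{\texttt{HiPPO}} = p(\bar{\mL})^{-1}\bigl(p(\mL)-p(\bar{\mL})\bigr)p(\mL)^{-1}\mH_{\tau}^{\texttt{HiPPO}},
\end{align*}
and note that $p(\mL)^{-1}\mH_{\tau}^{\texttt{HiPPO}}=\mH_\tau$. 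Submultiplicativity then gives
\[
\|\hat{\mH}_\tau-\mH_\tau\|_2\le \|p(\bar{\mL})^{-1}\|_2\,\|p(\mL)-p(\bar{\mL})\|_2\,\|\mH_\tau\|_2 .
\]

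The remaining work is to bound the two operator-norm factors.

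\textbf{First factor.} $\bar{\mL}$ is assumed to be a normalized Laplacian of a perturbed graph, so it is symmetric with spectrum in $[0,2]$. Then $p(\bar{\mL})$ is symmetric with eigenvalues $p(\bar\lambda_j)$, and so $\|p(\bar{\mL})^{-1}\|_2=\max_j 1/|p(\bar\lambda_j)|\le 1/\lambda_1$. The same argument gives $\|p(\mL)^{-1}\|_2\le 1/\lambda_1$ and $\|p(\mL)\|_2\le\lambda_2$. These last two bounds are the source of the extra factor $\lambda_2/\lambda_1$ in $\Gamma$; see the final step.

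\textbf{Second factor (the main obstacle).} We need $\|p(\bar{\mL})-p(\mL)\|_2\le \lambda_c\,\epsilon$, a Lipschitz estimate for a matrix function when $\mL$ and $\bar{\mL}$ need not commute.

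First attempt: the telescoping identity $\bar{\mL}^k-\mL^k=\sum_{j=0}^{k-1}\bar{\mL}^{j}\Delta\mL\,\mL^{k-1-j}$. This gives $\|p(\bar{\mL})-p(\mL)\|_2\le \epsilon\sum_k|\alpha_k|k2^{k-1}$, which only controls the derivative of the majorant polynomial, not $\lambda_c$ itself.

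To get $\lambda_c$ exactly, we will use the spectral/Fréchet-derivative representation. Writing $\mL_\theta=\mL+\theta\Delta\mL$, we have
\[
p(\bar{\mL})-p(\mL)=\int_0^1 Dp(\mL_\theta)[\Delta\mL]\,d\theta .
\]
In the eigenbasis of $\mL_\theta$, the Fréchet derivative $Dp$ acts as a Hadamard product with the divided-difference matrix $[p(\lambda_i)-p(\lambda_j)]/(\lambda_i-\lambda_j)$. Each entry of this matrix is bounded by $\lambda_c$ by the mean value theorem, provided the $\lambda_i$ stay in $[0,2]$. Converting the entrywise bound into a spectral-norm bound is the delicate point: it holds cleanly in the commuting (simultaneously diagonalizable) case and up to a constant in general. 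If needed, we will state the estimate under the assumption that the perturbation keeps $\bar{\mL}$ a normalized Laplacian, or accept the bound in the Frobenius norm, where the Hadamard-product argument is exact. Convexity of $[0,2]$ keeps all $\mL_\theta$ spectra inside the interval.

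\textbf{Assembly.} Combining the two factors gives
\[
\|\hat{\mH}_\tau-\mH_\tau\|_2/\|\mH_\tau\|_2\le \epsilon\lambda_c/\lambda_1 .
\]
Since $\lambda_2/\lambda_1\ge1$, this is at most $\epsilon\lambda_2\lambda_c/\lambda_1^2=\epsilon\Gamma$, which is the stated bound. Alternatively, the stated form arises directly if one normalizes by $\|\mH_\tau^{\texttt{HiPPO}}\|_2\le\lambda_2\|\mH_\tau\|_2$ and uses both inverse bounds $1/\lambda_1^2$.
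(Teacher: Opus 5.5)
The gap is the step you flag yourself and leave open: $\|p(\bar\mL)-p(\mL)\|_2\le\lambda_c\|\Delta\mL\|_2$. It cannot be closed the way you propose.

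Your skeleton matches the paper's: the same identity $p(\bar\mL)^{-1}-p(\mL)^{-1}=p(\bar\mL)^{-1}\bigl(p(\mL)-p(\bar\mL)\bigr)p(\mL)^{-1}$ and the same bounds on the inverses. Your assembly is actually sharper. The paper bounds $\|p(\mL)^{-1}\|_2\le 1/\lambda_1$ separately and then uses $\|\mH_\tau^{\texttt{HiPPO}}\|_2\le\lambda_2\|\mH_\tau\|_2$, which is where $\lambda_2/\lambda_1^2$ comes from. Recombining $p(\mL)^{-1}\mH_\tau^{\texttt{HiPPO}}=\mH_\tau$ instead gives $\epsilon\lambda_c/\lambda_1$. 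The paper simply asserts the Lipschitz step ``from norm inequalities'', so you have located its weak point.

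The divided-difference representation gives a Hadamard product. The spectral-norm constant is then the Schur-multiplier norm of the divided-difference matrix, not its largest entry, and here the two genuinely differ. Take a polynomial $q$ with $|q'|\le1$ on $[0,2]$ that approximates $|y-1|$; for example $q(y)=\int_1^y s$, with $s$ a polynomial approximation of $\operatorname{sign}(\cdot-1)$ bounded by $1$. Its divided differences between the points $(2,\,1-\delta)$ and $(1.5,\,1-2\delta)$ tend to $\begin{pmatrix}1&1\\1&-1\end{pmatrix}$, whose Schur-multiplier norm is $\sqrt2$. Now set $\mL=\operatorname{diag}(2,1-\delta,1.5,1-2\delta)$ and $\bar\mL=e^{-t\mathbf S}\mL e^{t\mathbf S}$, with $\mathbf S$ antisymmetric and supported on the off-diagonal $2\times2$ blocks. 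To first order in $t$ this gives symmetric $\mL,\bar\mL$ with spectra in $[0,2]$ and $\|q(\bar\mL)-q(\mL)\|_2>1.3\,\|\bar\mL-\mL\|_2$, for suitable $\delta$, $q$, $\mathbf S$ and small $t$.

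The slack $\lambda_2/\lambda_1$ does not rescue this. Take $p=c+q$ with $c$ large, so that $\lambda_c\le1$ and $\lambda_2/\lambda_1\le(c+1)/(c-1)$. Choose the signal so that $\mH_\tau=\mathbf v\,\mathbf e_1^\top$, with $\mathbf v$ a top right singular vector of $p(\bar\mL)^{-1}\bigl(p(\mL)-p(\bar\mL)\bigr)$. The left side then equals $\|p(\bar\mL)^{-1}(p(\mL)-p(\bar\mL))\|_2\ge 1.3\,\epsilon/\lambda_2$. This exceeds $\epsilon\lambda_2/\lambda_1^2$ once $(\lambda_2/\lambda_1)^2<1.3$. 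So at the generality both your argument and the paper's actually use (symmetry and spectrum in $[0,2]$), the claim fails for higher-degree filters.

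Your fallbacks do not give the theorem as stated.
\begin{itemize}
\item Assuming $\bar\mL$ is a normalized Laplacian only buys spectral containment. You already need that for the mean value theorem and for $\|p(\bar\mL)^{-1}\|_2\le1/\lambda_1$. It does nothing about non-commutativity.
\item The Frobenius version is correct. With eigenpairs $(\bar\mu_i,\bar{\mathbf u}_i)$ of $\bar\mL$ and $(\mu_j,\mathbf u_j)$ of $\mL$, one has $\|p(\bar\mL)-p(\mL)\|_F^2=\sum_{i,j}|p(\bar\mu_i)-p(\mu_j)|^2(\bar{\mathbf u}_i^\top\mathbf u_j)^2\le\lambda_c^2\|\Delta\mL\|_F^2$. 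However, this requires the stronger hypothesis $\|\Delta\mL\|_F\le\epsilon$.
\end{itemize}

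Under the stated hypothesis you can legitimately prove three things.
\begin{itemize}
\item For any $p$, the bound holds with $\lambda_c$ replaced by your telescoping constant $\sum_k|\alpha_k|k2^{k-1}$.
\item The bound holds with $\lambda_c$ itself when $\mL$ and $\Delta\mL$ commute.
\item The bound holds with $\lambda_c$ for filters of degree at most two, which covers the FO and SO filters used in the experiments. Here $\bar\mL^2-\mL^2=\mathbf M\Delta\mL+\Delta\mL\,\mathbf M$ with $\mathbf M=\tfrac12(\mL+\bar\mL)$. Hence $p(\bar\mL)-p(\mL)=\mN\Delta\mL+\Delta\mL\,\mN$ with $\mN=\tfrac{\alpha_1}{2}\mI+\alpha_2\mathbf M$. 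Since $\mathbf M$ has spectrum in $[0,2]$, $\|\mN\|_2\le\tfrac12\max_{y\in[0,2]}|\alpha_1+2\alpha_2y|=\lambda_c/2$. This gives even your sharper $\epsilon\lambda_c/\lambda_1$.
\end{itemize}
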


    We relegate the proof to  Appendix \ref{sec:thm_pert}. The derivations follow by using the triangle inequality and exploiting spectral bounds of the normalized graph Laplacian. The derived error bound shows that the deviation between the perturbed and true coefficients scales linearly with the energy of the perturbed Laplacian $\Delta\mL$ i.e., small structural perturbations in the underlying graph induce proportionally small deviations in the coefficients. Hence, the representations produced by \texttt{CTT-HiPPO} are stable and robust with respect to perturbations.

\begin{theorem}[\emph{Permutation Equivariance}] \label{thm:permutation_equivariance}
Let \(\mathcal{P} = \{\, \bm \Pi \in\{0,1\}^{N_{\tau}\times N_{\tau}} : \bm \Pi^\top\bm \Pi = \bm \Pi \bm \Pi^\top = \mI_{N_{\tau}}\}\) be the set of all \(N_{\tau}\times N_{\tau}\) permutation matrices.  Then under the permutation of the graph Laplacian \(\mL[k]\) and node-features \(\mX\) by any \(\bm \Pi\in\mathcal{P}\), the representations from \texttt{CTDG-SSM} also modifies as $\bar{\mH}[k+1] = \bm \Pi\mH[k+1]$.
\end{theorem}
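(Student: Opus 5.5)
We prove this by induction on $k$ using the discrete recursion \eqref{eq:discrete-update} of Theorem~\ref{them:4.3}. We assume the hidden state permutes consistently with the nodes: if $\mH[k]$ is the state for the original ordering, then $\bar{\mH}[k]=\bm\Pi\mH[k]$ for the permuted ordering. The base case is trivial, since memory is initialized to zero and $\bm\Pi\mathbf{0}=\mathbf{0}$. The permuted inputs are $\bar{\mL}[k]=\bm\Pi\mL[k]\bm\Pi^\top$, $\bar{\mL}[k-1]=\bm\Pi\mL[k-1]\bm\Pi^\top$, and $\bar{\mX}[k]=\bm\Pi\mX[k]$. Note that $\mL[k-1]$ is built from $\mL[k]$ by deleting the newly observed edges, and this construction commutes with relabelling the nodes.

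The key algebraic fact is that every matrix function appearing in the recursion is equivariant under conjugation by $\bm\Pi$. First, since $\bm\Pi^\top\bm\Pi=\mI$, we have $(\bm\Pi\mL\bm\Pi^\top)^k=\bm\Pi\mL^k\bm\Pi^\top$ for every $k$. Hence $p(\bar{\mL}[k])=\bm\Pi p(\mL[k])\bm\Pi^\top$. Second, $p(\mL[k])$ is invertible by the Remark following Theorem~\ref{them:4.3}, and $(\bm\Pi\mM\bm\Pi^\top)^{-1}=\bm\Pi\mM^{-1}\bm\Pi^\top$, so $p(\bar{\mL}[k])^{-1}=\bm\Pi p(\mL[k])^{-1}\bm\Pi^\top$. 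Third, combining these, the exponent $p(\bar{\mL}[k])^{-1}\bigl(p(\bar{\mL}[k])-p(\bar{\mL}[k-1])\bigr)$ equals $\bm\Pi\,[\,p(\mL[k])^{-1}(p(\mL[k])-p(\mL[k-1]))\,]\bm\Pi^\top$. Fourth, applying the power series of $\exp$ term by term gives $\exp(\bm\Pi\mM\bm\Pi^\top)=\bm\Pi\exp(\mM)\bm\Pi^\top$, so $\Bar{\mA}_{\bar{\mL}[k]}=\bm\Pi\Bar{\mA}_{\mL[k]}\bm\Pi^\top$. Fifth, the same argument covers the real powers $(\Bar{\mA}_{\mL[k]})^s$, defined via $\exp(s\,\cdot)$ of the exponent: $(\Bar{\mA}_{\bar{\mL}[k]})^s=\bm\Pi(\Bar{\mA}_{\mL[k]})^s\bm\Pi^\top$. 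Finally, $\Bar{\mA}=\exp(-\Delta[k]\mA^\top)$ and $\mB$ act on the feature/state dimension, not the node dimension, so they are unchanged.

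With these identities, both terms of the recursion transform correctly. For the state term,
\[
\Bar{\mA}_{\bar{\mL}[k]}\bar{\mH}[k]\Bar{\mA}=\bm\Pi\Bar{\mA}_{\mL[k]}\bm\Pi^\top\bm\Pi\mH[k]\Bar{\mA}=\bm\Pi\,\Bar{\mA}_{\mL[k]}\mH[k]\Bar{\mA}.
\]
For the input term, the integrand becomes $\bm\Pi(\Bar{\mA}_{\mL[k]})^s\bm\Pi^\top\bm\Pi p(\mL[k])^{-1}\bm\Pi^\top\bm\Pi\mX[k]\mB^\top(\Bar{\mA})^s\Delta[k]$. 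This equals $\bm\Pi$ times the original integrand. By linearity of the integral over $s$, we get $\bar\mB(\bar{\mL}[k],\bar{\mX}[k])=\bm\Pi\,\bar\mB(\mL[k],\mX[k])$. Adding the two terms yields $\bar{\mH}[k+1]=\bm\Pi\mH[k+1]$, which closes the induction.

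The main point requiring care is the bookkeeping rather than any difficult estimate. We must treat the exponentials and fractional powers as analytic matrix functions, so that conjugation invariance holds; this follows from the power series, or equivalently from the spectral decomposition of the symmetric $p(\mL)$. We must also keep the left (node-space) and right (state-space) multiplications separate, so that $\bm\Pi$ never needs to commute with $\mA$ or $\mB$. If the layered architecture is included in the claim, the same reasoning extends: RMS normalization, the encoder, and GeLU all act row-wise, so they commute with left multiplication by $\bm\Pi$, and the residual connections are linear.
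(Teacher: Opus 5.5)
Your proposal is correct and is essentially the paper's proof. Conjugation-equivariance of $p(\mL)$, its inverse and the matrix exponential gives $\bar{\mA}_{\bar{\mL}[k]}^{s}=\bm\Pi\bar{\mA}_{\mL[k]}^{s}\bm\Pi^\top$ and $\bar{\mB}(\bar{\mL}[k],\bar{\mX}[k])=\bm\Pi\bar{\mB}(\mL[k],\mX[k])$, and the recursion from $\mH[0]=\mathbf{0}$ then closes the argument; the paper also first proves equivariance of the continuous \texttt{CTT-HiPPO} coefficients, which the discrete statement does not need. One small caution: keep the power-series justification, not the spectral-decomposition alternative, since the exponent $p(\mL[k])^{-1}\bigl(p(\mL[k])-p(\mL[k-1])\bigr)$ is generally not symmetric.
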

    We relegate the proof to the Appendix \ref{sec:proof_thm4.2}. The permutation equivariance property guarantees that, when the nodes in the observed CTDGs and their associated signals are permuted, the representations by \texttt{CTDG-SSM} permute in exactly the same way, thereby preserving equivariance.
 
\section{Numerical Experiments}
We evaluate the proposed algorithm on two downstream temporal graph learning tasks, namely dynamic link prediction and node classification.  Further, to assess the model's ability to preserve long-range information, we test it on a sequence classification task.

\noindent\textbf{Baseline models.} For all the three tasks, we compare the performance of our model against the following state of the art algorithms, namely, \texttt{JODIE}~\citep{JODIE},  \texttt{DyRep}~\citep{Dyrep}, \texttt{TGN}~\citep{TGN},   \texttt{TGAT}~\citep{TGAT},   \texttt{GraphMixer}~\citep{GraphMIXER}, \texttt{DyGFormer}~\citep{DYGFORMER},  \texttt{CTAN}~\citep{CTAN}, \texttt{DyGmamba}~\citep{DYGMAMBA}. 
For dynamic link prediction and node classification tasks, we also consider models \texttt{Edgebank}~\citep{poursafaei2022towards}, \texttt{CAWN}~\citep{CAWN}, and \texttt{TCL}\citep{tcl} for comparison.

\subsection{Implementation Details }
For link prediction and node classification, we follow the experimental protocol of \citet{DYGFORMER} and compare \texttt{CTDG-SSM} with established baselines. For sequence classification, we adopt the setup from \citet{CTAN}. The model is trained with binary cross-entropy using the Adam optimizer; additional hyperparameter details are provided in Appendix~\ref{sec:Additional results}. We train for up to 200 epochs with early stopping and select the best validation model for testing. Experiments are conducted on two machines equipped with NVIDIA A6000 and RTX 8000 GPUs (48 GB).

\subsection{Dynamic Link Prediction}
In this section, we present results on dynamic link prediction where the task is to predict the existence of an edge between two nodes at a given time. We evaluate the proposed algorithm in both transductive (test nodes are observed during training) and inductive (test nodes are unseen during training) settings, under different sampling strategies (random, historical, and inductive) for generating negative samples.
Experiments are performed on benchmark temporal link prediction datasets~\citep{poursafaei2022towards} the details of which are provided in Appendix~\ref{sec:data_Sec}.

\begin{table*}[t]
\centering

\begin{minipage}[t]{0.40\textwidth}
\centering
\caption{Performance comparison on dynamic node classification with AUC-ROC over 5 runs.}
\label{tab:node_classification}
\footnotesize
\begin{tabular}{lccc}
\hline
Method & Wikipedia & Reddit & Avg. Rank \\
\hline
\midrule
\texttt{JODIE}        & 88.10 $\pm$ 1.57 & 59.53 $\pm$ 3.18 & 7.14 \\
\texttt{DyRep}        & 87.41 $\pm$ 1.94 & 63.12 $\pm$ 0.51 & 8.86 \\
\texttt{TGAT}         & 83.42 $\pm$ 2.92 & 69.31 $\pm$ 2.18 & 7.14 \\
\texttt{TGN}          & 85.51 $\pm$ 3.28 & 63.21 $\pm$ 3.00 & 3.86 \\
\texttt{CAWN}         & 84.59 $\pm$ 1.16 & 65.22 $\pm$ 0.79 & 4.86 \\
\texttt{TCL}          & 79.03 $\pm$ 1.18 & 68.04 $\pm$ 2.00 & 7.29 \\
\texttt{GraphMixer}   & 85.60 $\pm$ 1.73 & 64.42 $\pm$ 1.15 & 7.14 \\
\texttt{DyGFormer}    & 86.35 $\pm$ 2.19 & 67.67 $\pm$ 1.39 & 2.14 \\
\texttt{CTAN}         & 87.38 $\pm$ 0.14 & 67.29 $\pm$ 0.15 & 7.29 \\
\texttt{DyGmamba}     & 87.44 $\pm$ 0.82 & 67.70 $\pm$ 1.32 & 1.14 \\
\texttt{CTDG-SSM}     & \textbf{88.61 $\pm$ 0.64} & \textbf{69.50 $\pm$ 0.82} & \textbf{1.00} \\
\bottomrule \\
\end{tabular}
\end{minipage}
\hfill
\begin{minipage}[t]{0.5\textwidth}
\centering
\caption{Sequence classification performance. Values represent the mean accuracy, with standard deviation reported in parentheses.}
\label{tab:mamba_seq_classification}
\footnotesize
\setlength{\tabcolsep}{2pt}
\renewcommand{\arraystretch}{0.70}
\begin{tabular}{lccccc}
\hline
 Method & $n=3$ & $n=9$ & $n=15$ & $n=20$  \\
\hline
\midrule
\texttt{DyRep} &
$100.0_{(0.0)}$ & $47.93_{(2.73)}$ & $48.60_{(2.48)}$ & $50.47_{(2.88)}$  \\

\texttt{GraphMixer} &
$100.0_{(0.0)}$ & $52.80_{(5.56)}$ & $52.49_{(15.36)}$ & $52.04_{(8.20)}$ \\

\texttt{JODIE} &
$100.0_{(0.0)}$ & \textbf{$100.0_{(0.0)}$} & $60.0_{(14.91)}$ & $50.87_{(2.46)}$  \\

\texttt{TGAT} &
$100.0_{(0.0)}$ & $47.87_{(2.72)}$ & $50.53_{(2.15)}$ & $49.07_{(1.55)}$ \\

\texttt{TGN} &
$100.0_{(0.0)}$ & $48.13_{(1.63)}$ & $48.67_{(2.76)}$ & $50.13_{(2.17)}$  \\

\texttt{CTAN} &
$100.0_{(0.0)}$ & $\textbf{99.93}_\textbf{{(0.21)}}$ & $93.47_{(8.78)}$ & $88.93_{(12.06)}$  \\

\texttt{TU-SSM} &
$47.0_{(1.12)}$ & $50.73_{(1.74)}$ & $52.26_{(2.44)}$ & $54.46_{(0.73)}$  \\

\texttt{DyGFormer} &
$100.0_{(0.0)}$ & $53.02_{(6.06)}$ & $42.80_{(16.25)}$ & $42.79_{(19.62)}$  \\

\texttt{DyGmamba} &
$100.0_{(0.0)}$ & $54.01_{(6.06)}$ & $45.60_{(12.25)}$ & $45.29_{(17.62)}$ \\

\texttt{CTDG-SSM} (FO) &
$100.0_{(0.0)}$ & $97.06_{(0.44)}$ & $97.40_{(0.20)}$ & $97.13_{(0.89)}$  \\

\texttt{CTDG-SSM} (SO) &
$\textbf{100.0}_\textbf{{(0.0)}}$ & $98.13_{(0.58)}$ & $\textbf{97.80}_\textbf{{(0.58)}}$ & $\textbf{98.60}_\textbf{{(0.29)}}$ &  \\  
\hline
\end{tabular}
\end{minipage}
\end{table*}

\noindent\textbf{Results.}  In Table~\ref{tab:link_pred_main}, we present the results with AUC-ROC as a metric calculated for $5$ independent trials on transductive and inductive settings with random negative sampling (more experiments with different metrics and different sampling criteria are relegated to Appendix~\ref{sec:Additional results}). It can be seen that on LRT benchmarks such as LastFM, MOOC, and Enron, our method consistently outperforms state-of-the-art baselines due to the model’s ability in jointly encoding structural information via graph polynomials that capture multi-hop neighborhood
interactions and temporal evolution through a state-space formulation. Furthermore, \texttt{CTDG-SSM} exhibits only a minor performance drop in inductive setting, highlighting its ability to effectively capture global structural and temporal patterns instead of learning local structural patterns.



\subsection{Dynamic Node Classification}
For dynamic node classification, the goal is to predict the class label of nodes participating in an interaction $\cG(T)$ at time $T$. We evaluate our model on the Wikipedia and Reddit datasets with $2$ classes. We follow the dataset splits and preprocessing strategy outlined in \citet{DYGFORMER}. The model is trained for 200 epochs with early stopping, and memory representations are updated as described in Section~\ref{sec:Architecture}. During testing, we combine the memory states with static embeddings and temporal encodings, which are then passed through an MLP decoder for classification.

In Table~\ref{tab:node_classification}, we report the mean AUC-ROC over 5 runs. The results demonstrate that \texttt{CTDG-SSM} consistently outperforms state-of-the-art approaches, highlighting the effectiveness of jointly capturing LRS and LRT dependencies.


\subsection{Sequence Classification}
In this section, we present results on the sequence classification task, primarily designed to test the model's ability to capture LRS and LRT \citep{CTAN}. The task involves predicting the label of the initial node after traversing a long path, where each new node is connected to the node from the previous event, as illustrated in Fig.~\ref{fig:SeqClass}.
\begin{table*}[t]
\centering
\caption{Per-epoch time (minutes) and GPU memory usage (GB) across multiple datasets.}
\small
\begin{tabular}{lcccccccccccc}
\toprule
\multirow{2}{*}{\textbf{Models}} & \multicolumn{2}{c}{\textbf{LastFM}} & \multicolumn{2}{c}{\textbf{Enron}} & \multicolumn{2}{c}{\textbf{MOOC}} & \multicolumn{2}{c}{\textbf{UCI}} & \multicolumn{2}{c}{\textbf{Reddit}} & \multicolumn{2}{c}{\textbf{Social Evo.}} \\
\cmidrule(lr){2-3} \cmidrule(lr){4-5} \cmidrule(lr){6-7} \cmidrule(lr){8-9} \cmidrule(lr){10-11} \cmidrule(lr){12-13}
 & Time & Mem & Time & Mem & Time & Mem & Time & Mem & Time & Mem & Time & Mem \\
\midrule
JODIE      & 4.4  & 2.28 & 0.07 & 1.30 & 0.78 & 2.36 & 0.03 & 1.44 & 3.95 & 1.10 & 4.70 & 1.71 \\
DyRep      & 6.6  & 2.29 & 0.10 & 1.34 & 0.88 & 2.38 & 0.05 & 1.51 & 5.75 & 1.21 & 7.55 & 1.76 \\
TGAT       & 22.75 & 4.15 & 1.28 & 3.46 & 4.08 & 3.64 & 0.60 & 3.42 & 16.33 & 2.98 & 25.50 & 3.89 \\
TGN        & 12.14 & 2.21 & 0.15 & 1.45 & 1.03 & 2.54 & 0.08 & 1.51 & 2.05 & 1.67 & 3.83 & 1.78 \\
CAWN       & 99.00 & 14.92 & 2.62 & 4.03 & 13.45 & 8.02 & 1.95 & 9.40 & 20.16 & 5.89 & 85.66 & 8.14 \\
TCL        & 6.23 & 3.04 & 0.30 & 2.51 & 1.00 & 2.49 & 0.13 & 2.00 & 2.25 & 1.82 & 5.05 & 2.48 \\
GraphMixer & 16.35 & 2.78 & 1.20 & 2.23 & 4.02 & 2.40 & 0.73 & 2.19 & 4.92 & 1.57 & 15.50 & 2.71 \\
DyGFormer  & 47.00 & 7.57 & 2.73 & 3.23 & 8.32 & 3.35 & 0.62 & 2.30 & 7.00 & 2.42 & 20.00 & 2.77 \\
CTAN       & 3.33 & 1.44 & 0.50 & 1.33 & 3.22 & 2.30 & 0.38 & 1.30 & 0.86 & 1.54 & 2.41 & 0.63 \\
DyGMamba   & 28.45 & 4.17 & 2.05 & 2.74 & 4.88 & 2.48 & 0.60 & 1.93 & 6.30 & 2.07 & 17.80 & 2.59 \\
CTDG-SSM & 4.45 & 1.15 & 0.55 & 0.86 & 1.25 & 0.43 & 0.17 & 0.31 & 1.95 & 1.18 & 9.57 & 5.22 \\
\bottomrule
\end{tabular}
\label{tab:model-time-mem-ctdgssm}
\end{table*}
We generate the data using the procedure in~\citep{CTAN}. 
\begin{figure}
  \centering
  \includegraphics[width=0.3\textwidth]{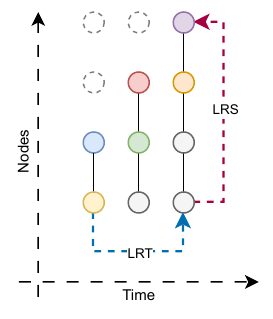}
  \caption{ Illustration of LRT and LRS dependencies with a sequence classification task.}
  \label{fig:SeqClass}
\end{figure}
In this experiment, we evaluate the impact of aggregating one-hop and multi-hop structural information, as well as the significance of our structural change term, by introducing three variants of our method. First, \texttt{CTDG-SSM} (FO) employs a learnable first-order polynomial filter of the form $\mI + \alpha_{1} \mL_\tau$. Second, \texttt{CTDG-SSM} (SO)  utilizes a learnable second-order filter defined as $p(\mL_\tau) = \mI + \alpha_{1} \mL_\tau + \alpha_{2} \mL_\tau^2$. Finally, the topology-unaware SSM (\texttt{TU-SSM}) sets the spatial system matrix to $\bar{\mA}_{\mL[k]} = \mI$, effectively isolating the role of the architecture in learning purely structural patterns.

\begin{figure*}
    \centering
\includegraphics[width=1\linewidth]{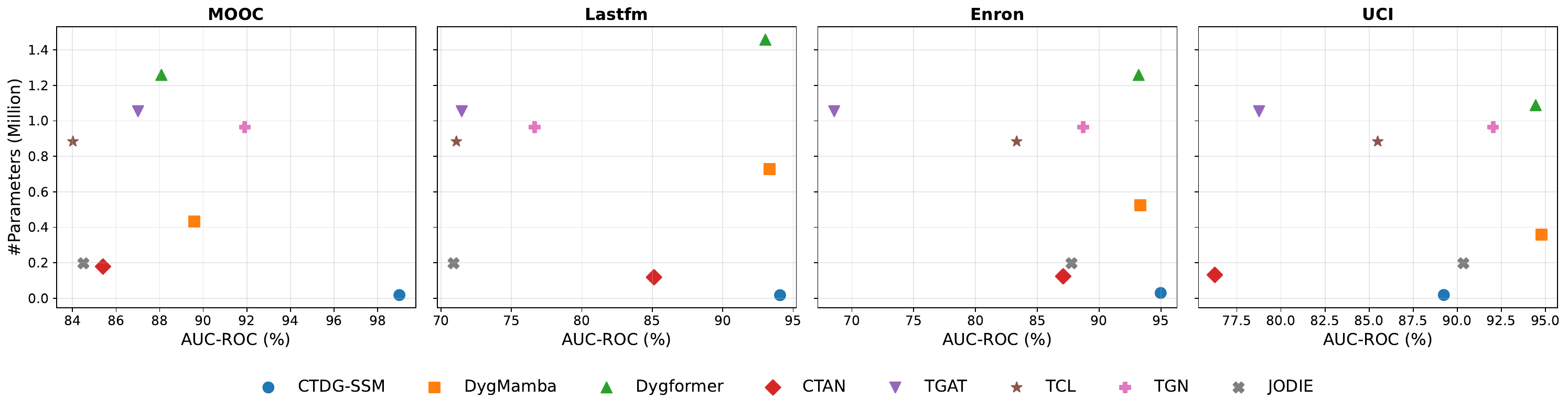}
    \caption{Model size vs. AUC-ROC  under a transductive setting with random negative sampling.}
    \vspace{-5mm}
    \label{fig:param_plot}
\end{figure*}
\noindent\textbf{Results. } 
Table~\ref{tab:mamba_seq_classification} reports results for the sequence classification, where prediction accuracy is defined as the ratio of correctly classified sequences to the total number of sequences. We observe that removing the structural update term in the memory update (\texttt{TU-SSM}) leads to a substantial drop in performance, underscoring the importance of modeling the time-varying graph structure in CTDGs. Further, incorporating higher-order polynomials for multi-hop aggregation in \texttt{CTDG-SSM (SO)} yields clear gains over the single-order variant, which primarily captures local patterns. Finally, the proposed method achieves significant improvements over state-of-the-art baselines, particularly on longer sequences, highlighting its effectiveness in capturing LRS.

\subsection{Parameter Complexity}
We present the comparison among the models based on number of learnable parameters. Recall that the \texttt{CTDG-SSM} layer introduces learnable matrices only through $\bar{\mA}_{\mL_B[k]}$, $\bar{\mA}$ and $\bar{\mB}(\mL[k], \mX[k])$. Figure~\ref{fig:param_plot} presents comparison between among different baseline models using parameter count and AUC-ROC. It can be observed that on long-range datasets such as MOOC and Enron, the proposed model achieves superior performance while being highly parameter-efficient, requiring about one-tenth fewer parameters compared to existing approaches.

\subsection{Runtime Analysis}\label{Sec:runtimeanalysis}
In Table~\ref{tab:model-time-mem-ctdgssm} we report the per-epoch training time (in minutes) and GPU memory consumption (in GB) across multiple datasets. Notably, it can be observed that \texttt{CTDG-SSM} achieves significantly lower per-epoch training time and memory usage compared to \texttt{DyGMamba} and \texttt{DyGFormer}, both of which are specifically designed for long-range propagation tasks. Additionally we give detailed analysis on total runtime and loss convergence behavior in the supplementary material (see \ref{sec:run_timeanalysis1}).   

\section{Conclusions}
We proposed \texttt{CTDG-SSM} a principled approach that formulates a SSM for CTDGs. In particular, we introduce 
\texttt{CTT-HiPPO} that yields topology-aware memory representations by projecting HiPPO coefficients through a polynomial of graph Laplacian. We leverage \texttt{CTT-HiPPO} to derive SSM for CTDGs where the memory representations are governed by the evolving topology.
We further established theoretical guarantees on the robustness and permutation equivariance of \texttt{CTDG-SSM}. Extensive experiments on diverse temporal graph learning tasks that includes link prediction, node classification, and sequence classification tasks shows significant gain in performance by \texttt{CTDG-SSM} vis-a-vis current models, making it a new state-of-the art method for continuous-time dynamic graphs.

\section*{Impact Statement}
Continuous-time dynamic graphs provide a framework for modeling evolving relational data and are broadly relevant across scientific domains with potential industrial applications. The techniques developed in this work contribute to advances in graph machine learning. While such methods may have societal implications depending on their application, we do not anticipate any direct negative impacts arising from this work that require specific discussion.

\bibliography{refs.bib}
\bibliographystyle{icml2026}

\newpage
\appendix
\onecolumn
\icmltitle{Supplementary Material: Learning Long Range Spatio-Temporal Representations over Continuous Time Dynamic Graphs with State Space Models}

{\section{State-Space Models}

State-space models (SSMs) are widely used for sequence modeling due to their ability to capture long-range dependencies through latent state evolution while remaining computationally efficient compared to Transformers~\citep{gu_efficiently_2022}. 
For an input signal $\vx(t)$, an SSM evolves latent states $\vh(t) \in \mathbb{R}^d$ according to a linear ordinary differential equation (ODE), producing output $\vy(t)$ as~\citep{hippo,gu_efficiently_2022,smith_simplified_2023}:
\begin{equation}
    \frac{d\vh(t)}{dt} = \mA(t)\vh(t) + \mB(t)\vx(t) \quad \text{and} \quad 
 \vy(t) = \mC(t)\vh(t) + \mD(t)\vx(t),
\label{eq:SSM}
\end{equation} 
where $\mA(t)$, $\mB(t)$, $\mC(t)$, and $\mD(t)$ are the state transition, input, output, and feedforward matrices, respectively.
In practice, the continuous-time model is discretized using zero-order hold (ZOH), which assumes piecewise constant inputs: $\vx(t) = \vx[k]$ for $t \in [t_k, t_{k+1})$ with step size $\Delta[k] = t_{k+1}-t_{k}$. This yields the discrete-time system:
\begin{equation}
\vh[k+1] = \bar{\mA}\vh[k] + \bar{\mB}\vx[k]
\quad \text{and} \quad 
\vy[k] = \bar{\mC}\vh[k] + \bar{\mD}\vx[k],
\label{eq:zoh}
\end{equation} 
where $\bar{\mA} = e^{\Delta[k] \mA}$, $\bar{\mB} = \int_{0}^{\Delta[k]} e^{\mA\tau} \mB \, d\tau$, $\bar{\mC} = \mC$, and $\bar{\mD} = \mD$.

Research in SSM design has significantly enhanced its effectiveness for sequence modeling. HiPPO~\citep{hippo} introduced principled initialization strategies for capturing long-range dependencies. S4~\citep{gu_efficiently_2022} extended these with structured parameterizations of the system matrix $\mA$ (e.g., diagonal plus low-rank decompositions) to enable efficient computation. More recent work like Mamba~\citep{gu_mamba_2024} has further improved scalability and selectivity by making the input and output matrices $\mB$ and $\mC$ input-dependent and the system matrix diagonal, enhancing model expressivity while maintaining computational efficiency.

\section{Derivation for \texttt{CTT-HiPPO} Coefficients} \label{sec:CTT-Coefficients}

We derive the solution for the representations/coefficients  ($\mH_{\tau}^{(i)}$) for \texttt{CTT-HiPPO}. Recall, to obtain the coefficients we minimize the residual in the observed time interval. To begin with the $\vr_{i}(t)$ can be equivalently expressed as 

\begin{equation}
\begin{aligned}
   \int_0^\tau \|\vr_{i}(t)\|_2^2 \, d\mu(t) &= \int_0^\tau \left\| \mX_{:,i}(t) - p(\mL_\tau) \mH^{(i)}_{\tau} \vg(t) \right\|_2^2 d\mu(t), \\
    &= \int_0^\tau \operatorname{Tr} \left[ \left( \mX_{:,i}(t) - p(\mL_\tau) \mH_{\tau}^{(i)} \vg(t) \right)
    \left( \mX_{:,i}(t) - p(\mL_\tau) \mH_{\tau}^{(i)} \vg(t) \right)^\top \right] d\mu(t).
    \end{aligned}
\end{equation}
    
Using the first optimality condition i.e., $\frac{\partial  \|\vr_i(\tau)\|_2^2}{\partial \mH^{(i)}_{\tau} }  = 0$ and on simplifying we have  
\begin{align}
    \frac{\partial}{\partial \mH_{\tau}^{(i)}} 
    \int_0^\tau \operatorname{Tr} \left[ \left( \mX_{:,i}(t) - p(\mL_\tau) \mH_{\tau}^{(i)} \vg(t) \right)
    \left( \mX_{:,i}(t) - p(\mL_\tau) \mH_{\tau}^{(i)} \vg(t) \right)^\top \right] d\mu(t) &= 0, \nonumber \\
    \int_0^\tau \frac{\partial}{\partial \mH_{\tau}^{(i)}} \left[ 
        \operatorname{Tr} \left( p(\mL_\tau) \mH_{\tau}^{(i)} \vg(t) \vg(t)^\top {\mH_{\tau}^{(i)}}^\top p(\mL_\tau)^\top \right)
        - 2 \operatorname{Tr} \left( p(\mL_\tau) \mH_{\tau}^{(i)} \vg(t) \mX_{:,i}(t)^\top \right)
    \right] d\mu(t) &= 0, \nonumber \\
    \int_0^\tau 2p(\mL_\tau)^\top p(\mL_\tau) \mH_{\tau}^{(i)} \vg(t)\vg(t)^\top 
    - 2p(\mL_\tau)^\top \mX_{:,i}(t) \vg(t)^\top \; d\mu(t) &= 0, \nonumber \\
    \int_0^\tau p(\mL_\tau)^\top p(\mL_\tau) \mH_{\tau}^{(i)} \vg(t)\vg(t)^\top d\mu(t) 
    = \int_0^\tau p(\mL_\tau)^\top \mX_{:,i}(t) \vg(t)^\top d\mu(t). 
    \label{eq:ctt-hippo1}
\end{align}

For fixed set of orthogonal polynomials we have  $\int_0^\tau \vg(t) \vg(t)^\top d\mu(t) = \mI$, then \eqref{eq:ctt-hippo1} can be simplified as 
\begin{align}
    p(\mL_\tau) \mH^{(i)}_{\tau} &= \int_0^\tau \mX_{:,i}(t) \vg(t)^\top d\mu(t), \nonumber\\
      \mH_{\tau}^{(i)} &= p(\mL_\tau)^{-1} \int_0^\tau \mX_{:,i}(t) \vg(t)^\top d\mu(t), \nonumber \\
    \mH_{\tau}^{(i)} &= p(\mL_\tau)^{-1}  \mH^{i,\rm{HiPPO}}_{\tau}. \label{eq:connection}
\end{align}
where $\mH_{\tau}^{(i)}$ corresponds to the solution from \texttt{CTT-HiPPO}, where it is obtained by projecting the HiPPO solution through graph-aware polynomial.

\section{Proof of Theorems}
This section presents detailed proofs of the theorems from the main text.

\subsection{Proof for Theorem \ref{theorem:4.1}} \label{sec:proof_4.1}
\begin{proof}
We derive the SSM for CTDGs (\texttt{CTDG-SSM}) that governs the evolution of memory representations. To begin with,  we consider the relation between 
 structural HiPPO coefficients $\mH_{\tau}$ and  HiPPO coefficients given in  \eqref{eqn:graph_coff_hippo_coff_relation} and obtain the evolution of memory states as follows:  
 For an event observed at $\tau_+$ and corresponding CTDG $\cG_{\tau_+}$, we define the polynomial as in the interval $[\tau,\tau_+)$ as  $p(L_{s}) = \frac{\tau-s}{\tau-\tau_+} p(\mL_{\tau_{+}}) + \frac{\tau_+-s}{\tau_+-\tau}p(\mL_{\tau})$ for $s\in[\tau,\tau_+)$. Here $\mL_{\tau_+} \in \mathbb{R}^{N_{\tau_+} \times N_{\tau_+}}$ and $\mL_\tau$ is calculated by removing the newly observed edges from $\mL_{\tau_+}$. 
 Then the derivative of the coefficients for $s\in[\tau,\tau_+)$ is given as :
\begin{align}
 \frac{d}{ds} \left(p(\mL_{s})\mH_{s}\right)  &= \frac{d\mH_{s}^{(HiPPO)}} {ds} , \nonumber \\
    \frac{dp(\mL_s)}{ds} \mH_{s} + p(\mL_{s}) \frac{d\mH_{s}}{ds} & = \frac{d\mH_{s}^{(HiPPO)}}{ds}.
    \label{eq:SSM_CTDG1}
\end{align}
This can be equivalently expressed on  multiplying with $p(\mL_s)^{-1}$ as 
\begin{align}
    \nonumber&\frac{d\mH_{s}}{ds} = p(\mL_{s})^{-1}\frac{d\mH_{s}^{(HiPPO)}}{ds} - p(\mL_{s})^{-1}\frac{dp(\mL_s)}{d\tau} \mH_{s}
\end{align}
To obtain an equivalent SSM for CTDGs, we leverage the established result  from~\citep{hippo} that relates the evolution of HiPPO coefficients to a linear ordinary equation  as  
\begin{align}
    &\frac{d\mH_{s}^{(HiPPO)}}{ds} =  -\mH_{s}^{(HiPPO)}\frac{\mA^\top}{M(s)} + \vx(s)\frac{\mB^\top}{M(s)},
    \nonumber
\end{align}
where  $\mA \in \mathbb{R}^{d\times d}$ is a state transition matrix, $\mB \in \mathbb{R}^{d\times 1 }$ input matrix and $M(\tau):\mathbb{R}^+\to\mathbb{R}^+$ is a scalar that depends on the choice of bases polynomial and weigh function $\mu(t)$. The continuous SSM for CTDGs for $s\in [\tau,\tau_+)$ is given by
\begin{align}
\label{eqn:ctt-update-apndx}
      \frac{d\mH_{s}}{ds} &= 
      - p(\mL_{s})^{-1}\mH_{s}^{(HiPPO)}\frac{\mA^\top}{M(s)}  - p(\mL_{s})^{-1}\frac{dp(\mL_s)}{ds} \mH_{s} + p(\mL_{s})^{-1}\mX(s)\frac{\mB^\top}{M(s)},  \nonumber \\
      \frac{d\mH_{s}}{ds} &= 
      - \mH_{s}\frac{\mA^\top}{M(s)}  - p(\mL_{s})^{-1}\frac{dp(\mL_s)}{ds} \mH_{s} + p(\mL_{s})^{-1}\vx(s)\frac{\mB^\top}{M(s)}.
\end{align} 
\end{proof}

We can further simplify \eqref{eqn:ctt-update-apndx} to express it in a standard first-order state-space model. To do so, we apply vectorization operation on  \eqref{eqn:ctt-update-apndx} and use the identity $\text{vec}(\mA \mB \mC) = (\mC^\top \otimes \mA) \text{vec}(\mB)$, where $\otimes$ is a Kronecker product. Then we obtain
\begin{align}
    \nonumber&\frac{d\vh_{s}}{ds} = -\left(\frac{\mA}{M(s)} \oplus \left(p(\mL_s)^{-1}\frac{dp(\mL_s)}{ds}\right)\right)\vh_{s} + \frac{\mB}{M(s)}\otimes  p(\mL_s)^{-1} (\vx(s)) \\
   &\frac{d\vh_{s}}{ds} = \mA_{g}(s) \vh_{s} + \mB_{g}(s) \mX(s), \quad s\in[\tau,\tau_+)
   \label{eq:vec_ssm}
\end{align}
where $\vh_{\tau}= \text{vec}(\mH_{\tau}) \in \mathbb{R}^{N_{T}d \times 1}$,  $\mA_g(\tau)$ and $\mB_g(\tau)$ denote the time-dependent system and input matrices, respectively. Here $\oplus$ denotes the Kronecker sum. The evolution of memory coefficients of nodes so far characterizes the continuous time-variant SSM that jointly encodes dynamic graphs' structural and temporal information.
}

\begin{corollary}[Reduction to Classical HiPPO]
Let the graph Laplacian be static ($\mL_\tau = \mL$) and let the filter satisfy $p(\mL_\tau) = \mI$. Then, the \texttt{CTDG-SSM} dynamics~\eqref{eq:structural-ssm} reduce to the classical HiPPO state-space dynamics:
\[
\frac{d\mH_{\tau}}{d\tau} 
= - \mH_{\tau} \frac{\mA^\top}{M(\tau)} + \mX(\tau) \frac{\mB^\top}{M(\tau)} .
\]
This shows that \texttt{CTDG-SSM} is a strict generalization of classical HiPPO: it recovers standard memory evolution when the graph is static or the filter is the identity, while naturally incorporating dynamic graph information when $p(\mL_\tau)$ varies over time.
\end{corollary}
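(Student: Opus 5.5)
The corollary follows by direct substitution into Theorem~\ref{theorem:4.1}, so I will specialize each term of \eqref{eq:structural-ssm} rather than re-derive anything. I read the hypotheses as stated: the Laplacian is static, $\mL_s = \mL$ for every $s$, and the filter is the identity, $p(\mL_s)=\mI$. With these two facts, the three terms on the right-hand side of \eqref{eq:structural-ssm} can be treated one at a time.

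\textbf{Graph-change term.} By the interpolation formula in Theorem~\ref{theorem:4.1},
\[
p(\mL_s) = \frac{\tau-s}{\tau-\tau_+}\, p(\mL_{\tau_+}) + \frac{\tau_+-s}{\tau_+-\tau}\, p(\mL_{\tau}).
\]
Both endpoint filters equal the same matrix $\mI$. The two interpolation weights sum to $1$ for every $s$, since $(\tau-s)/(\tau-\tau_+) + (\tau_+-s)/(\tau_+-\tau) = (s-\tau+\tau_+-s)/(\tau_+-\tau) = 1$. Hence $p(\mL_s)\equiv \mI$ on $[\tau,\tau_+)$, so $\tfrac{dp(\mL_s)}{ds}=0$ and the term $-p(\mL_s)^{-1}\tfrac{dp(\mL_s)}{ds}\mH_s$ vanishes. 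Note that a static Laplacian alone already kills this term, because both endpoint filters then coincide. This is the first claim of the Remark following Theorem~\ref{theorem:4.1}.

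\textbf{Input term.} Since $p(\mL_s)^{-1}=\mI$, the input term becomes $\mX(s)\mB^\top/M(s)$. Substituting both simplifications into \eqref{eq:structural-ssm} gives
\[
\frac{d\mH_s}{ds} = -\mH_s\frac{\mA^\top}{M(s)} + \mX(s)\frac{\mB^\top}{M(s)}.
\]
Relabelling $s$ as $\tau$ yields the displayed equation.

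\textbf{Identification with HiPPO.} From \eqref{eqn:graph_coff_hippo_coff_relation}, $p=\mI$ gives $\mH_\tau = \mH_\tau^{\texttt{HiPPO}}$. The equation just obtained is then exactly the classical HiPPO linear ODE of~\citep{hippo} quoted in the proof of Theorem~\ref{theorem:4.1}, now acting row-wise on the $N_\tau$ nodes.

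\textbf{Main obstacle.} The only delicate point is the piecewise definition over the intervals $[\tau,\tau_+)$. One has to check that the interpolated filter is constant inside each interval and continuous across event times, so that no jump contributions arise when the intervals are glued together. Both properties hold because every endpoint filter equals $\mI$.

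The "strict generalization" sentence is interpretive rather than part of the claim. I would justify it by noting that any time-varying $p(\mL_s)$ makes $\tfrac{dp(\mL_s)}{ds}\neq 0$, so the extra term is then genuinely present.
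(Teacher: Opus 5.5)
Your proposal is correct and follows the paper's route. The paper gives no separate proof and relies on the same direct substitution (see the Remark after Theorem~\ref{theorem:4.1}): with $p(\mL_s)=\mI$ one gets $\tfrac{dp(\mL_s)}{ds}=0$ and $p(\mL_s)^{-1}=\mI$ in \eqref{eq:structural-ssm}, which leaves exactly the HiPPO ODE. Your checks on the interpolation weights and on gluing across event times are harmless but not needed. Your remark that a static Laplacian alone only removes the graph-change term is also right: it leaves $p(\mL)^{-1}\mX(s)$ in the input term, so the corollary's closing phrase ``static \emph{or} identity'' claims more than the substitution proves.
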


\begin{Remark}
    
The expression,

$$
\min _{\mH _{\tau}^{(i)}} \int _0 ^\tau \left\| \mX_{:,i} (t) - p(\mL _\tau) \mH _{\tau}^{(i)} \vg(t) \right\| _2 ^2\, d \mu (t),
$$

is the CTDG-SSM formulation, where the polynomial operator $p(\mL_\tau)$ specifies how the graph structure influences the reconstruction.

When $p(\mL_\tau)=\mathbf{I}$, the Laplacian dependence vanishes, yielding the classical HiPPO ~\citep{hippo} objective:

$$
\min_ {\mH _{\tau}^{(i)}} \int _0 ^\tau  \left\| \mX_{:,i} (t) - \mH _{\tau}^{(i)} \vg (t) \right\| _2 ^2 d \mu (t),
$$

which matches the standard HiPPO setting.

However, when a quadratic Laplacian regularizer is introduced to enforce smoothness over the reconstructed signal, we obtain the GraphSSM~\citep{First_principles} objective:

$$
\min_ {\mH _{\tau}^{(i)}} \int _0 ^\tau  \left\| \mX_{:,i} (t) - \mH _{\tau}^{(i)} \vg (t) \right\| _2 ^2\, d \mu (t) + \int _0 ^\tau \big(\mH _{\tau}^{(i)}\vg (t) \big) ^\top  \mL _t \big( \mH _{\tau}^{(i)} \vg (t) \big) d \mu (t).
$$

Note: Using \(\mL_t\) instead of $\mL_\tau$ leads to an intractable quadratic loss, motivating an approximate piecewise-constant graph state update.
\end{Remark}

\subsection{Proof for Theorem \ref{them:4.3}}
\label{appendix:them_4.2}
\begin{proof}
We present the equivalent discrete-time SSM for CTDG using ZOH technique. Recall from \eqref{eq:vec_ssm} the continuous-time evolution for CTDGs is given as
\begin{align}
    \frac{d\vh[k]}{dt} &= \mA_g(k)\vh[k] + \mB_{g}(k)\mX[k]. \label{eq:ssm-cont}
\end{align}
Following \eqref{eq:zoh}, the equivalent discrete update is given as 
\begin{align}
    \vh[k+1] 
    &= \exp\!\big(\mA_g(t_k)\Delta[k]\big)\vh[k] 
    + \int_0^{\Delta[k]} \exp\!\big(\mA_g(k)s\big)\mB_g(t_k)\mX[k] \, ds,
    \label{eq:zoh-general}
\end{align}
where $\Delta[k]$ is time interval. Recall
    $\mA_g(t_k) = -\mA \;\oplus\; \Big(-\,p(\mL[k])^{-1}\!\frac{p(\mL[k])-p(\mL[k-1])}{\Delta[k]}\Big), 
    \mB_g(t_k) = \mB \otimes p(\mL[k])^{-1}$  
Although one can directly apply~\eqref{eq:zoh} as discussed in the preliminaries to obtain a discrete equivalent for \eqref{eq:vec_ssm}, this approach incurs significant computational overhead in implementation,  since the Kronecker-structured matrices \( \mA_g \) and \( \mB_g \) involved in \eqref{eq:vec_ssm}  are of large dimensions \( (N_{\tau} d \times N_{\tau} d) \) and \( (N_{\tau} d \times N_{\tau}) \). To alleviate this complexity, we exploit algebraic properties of the Kronecker product to derive an equivalent update rule as  
\begin{align}
    \vh[k+1] 
    &= \Big(e^{-\mA\Delta[k]} \otimes e^{-\,p(\mL[k])^{-1}\tfrac{p(\mL[k])-p(\mL[k-1])}{\Delta[k]}\,\Delta[k]}\Big)\vh[k],  \nonumber\\
    &\quad + \int_0^1 
       \Big(e^{-\mA\Delta[k] s} \otimes e^{-\,p(\mL[k])^{-1}\tfrac{p(\mL[k])-p(\mL[k-1])}{\Delta[k]}\,\Delta[k] s}\Big)(\mB \otimes p(\mL[k])^{-1})\mX[k] \,\Delta[k] ds, 
    \label{eq:vector-update}
\end{align}
where \eqref{eq:vector-update} follows by using the following identities $
    e^{\mA \oplus \mB} = e^{\mA}\otimes e^{\mB}, 
    \mathrm{vec}(\mA \mB \mH) = (\mH^{\top}\otimes \mA)\,\mathrm{vec}(\mB),
    (\mA\otimes \mB)(\mH\otimes \mD) = (\mA\mH)\otimes (\mB\mD)$.
This can be equivalently expressed as 
\begin{align}
    \mH[k+1] 
    &= e^{-\,p(\mL[k])^{-1}\tfrac{p(\mL[k])-p(\mL[k-1])}{\Delta[k]}\,\Delta[k]}\;\mH[k]\; e^{-\mA^\top \Delta[k]}, \nonumber \\
    &\quad + \int_0^{1} 
       e^{-\,p(\mL[k])^{-1}\tfrac{p(\mL[k])-p(\mL[k-1])}{\Delta[k]}\,\Delta[k] s}\;
       p(\mL[k])^{-1}\mX_{:,i}[k]\mB^\top \; e^{-\mA^\top \Delta[k] s}\;\Delta[k]\,ds. \\
       \mH[k+1] &= \Bar{\mA}_{\mL[k]} \, \mH[k] \, \Bar{\mA} + \bar\mB(\mL[k], \mX[k]), \label{eq:discrete-update_1}
\end{align}


where $\Bar{\mA}_{\mL[k]} = \exp (- p(\mL[k])^{-1} \frac{( p(\mL[k]) - p(\mL[k-1])}{\Delta[k]} \Delta[k])$, $\Bar{\mA} = \exp( -\Delta[k]\mA^\top) $, and $\bar\mB(\mL[k], \mX[k]) = \int_0^1 (\Bar{\mA}_{\mL[k]})^s \, p(\mL[k])^{-1} \mX_{:,i}[k] \mB^\top (\Bar{\mA})^s \, \Delta[k] \, ds $.
\end{proof}

\subsection{Proof of Theorem \ref{thm:pertrurbation}} \label{sec:thm_pert}

 Consider the $\bar \mH_{\tau}$ and  $\mH_{\tau}$ as the memory representations obtained with the perturbed graph Laplacian and true Laplacian. For brevity, we call the solution from HiPPO as $\mH_{H}.$ Then the error between the representations is given as  
\begin{equation}
\begin{aligned}
    \|\bar \mH_{\tau} - \mH_{\tau}\|_{2} =& \, \| p(\bar \mL_\tau)^{-1}\mH_{H} - p(\mL_\tau)\mH_{H}\|_{2} \nonumber \\
     \overset{(a)}\le & \| p(\bar \mL_\tau)^{-1} - p(\mL_\tau)^{-1}\|_{2}\|\mH_{H}\|_{2}\nonumber \\
    \overset{(b)}\le & \|p(\bar \mL_\tau)^{-1}( p( \mL_\tau) - p(\bar \mL_\tau))p(\mL_\tau)^{-1}\|_2\|\mH_{H}\|_{2} \nonumber\\
     \overset{(c)}\le& \|p(\bar \mL_\tau)^{-1}\|_{2}\| p( \mL_\tau) - p(\bar \mL_\tau)\|_{2}\|p(\mL_\tau)^{-1}\|_2\|\mH_{H}\|_{2},
    \label{eq:robust1}
    \end{aligned}
    \end{equation}
   where \eqref{eq:robust1}(a), (b), (c) follow from the norm inequalities. Recall $\mL$ is a normalized Laplacian, therefore the spectrum is bounded in the range $\lambda \in [0,2]$.  Let us call   $\lambda_1:= \min_{\lambda\in [0,2] } |p(\lambda)| >0$, $\lambda_2:= \max_{\lambda \in [0,2] }|p(\lambda)|$, and $\lambda_c:= \max_{\lambda\in[0,2]} |p(y)'|$. Then we have
    \begin{align}
    \|\bar \mH_{\tau} - \mH_{\tau}\|_2 &\le \tfrac{1}{\lambda_1^2}\| p( \mL_\tau) - p(\bar \mL_\tau)\|_{2}\|\mH_{i,H}\|_{2}, \nonumber \\
     &\le \tfrac{\lambda_c}{\lambda_1^2}\|  \mL_\tau - \bar \mL_\tau\|_{2}\|\mH_{i,H}\|_{2}, \nonumber\\
     &\le \tfrac{\lambda_c}{\lambda_1^2}\|_{2}  \mL_\tau - \bar \mL_\tau\|_2\| p(\mL_\tau)\mH_{\tau}\|_{2},\\&\le \tfrac{\lambda_c}{\lambda_1^2}\|  \mL_\tau - \bar \mL_\tau\|_{2}\| p(\mL_\tau)\|_{2}\|\mH_{\tau}\|_{2}. \nonumber
     \end{align}
 The normalized error given by     
\begin{align}
     \frac{\|\bar \mH_{\tau} - \mH_{\tau}\|_{2}}{\|\mH_{\tau}\|_{2}}\le& \tfrac{\lambda_c}{\lambda_1^2}\|  \mL_\tau - \bar \mL_\tau\|\|_{2} p(\mL_\tau)\|_{2}, \nonumber \\\le& \tfrac{\lambda_2\lambda_c}{\lambda_1^2}\|  \mL_\tau - \bar \mL_\tau\|_{2}, \nonumber \\ \overset{(a)}\le& \epsilon \Gamma,
     \label{eq:robust2}
\end{align}
where \eqref{eq:robust2}(a) since energy of perturbation is bounded i.e.,  $\|\Delta\mL\|_2 \leq \epsilon$  and $\Gamma=\tfrac{\lambda_2\lambda_c}{\lambda_1^2}$.

\subsection{Proof of Theorem \ref{thm:permutation_equivariance}} \label{sec:proof_thm4.2}
To prove that the representations from  \texttt{CTDG-SSM} as permutation equivariant we first show that representations from \texttt{CTT-HiPPO} are equivariant to permutation. 
Under the permutation the features signal and Laplacian modifies as  $\hat\mX = \bm \Pi \mX$, $\hat \mL = \bm \Pi \mL \bm \Pi^\top$. Let   $\hat{\mH}_{\tau}$  be representations obtained under permutation, then we have  
\begin{align}
\hat{\mH}_{\tau} &= p(\hat{\mL}_\tau)^{-1}  \int_0^\tau \hat\mX(t)\vg(t)^\top dw(t),\nonumber\\
 &= p(\bm \Pi \mL_\tau \bm \Pi^\top)^{-1}  \int_0^\tau \bm \Pi\mX(t)\vg(t)^\top dw(t), \nonumber\\ 
 &=\bm{\Pi} p({\mL}_\tau)^{-1} \bm \Pi^\top  \int_0^\tau \bm \Pi \mX(t)\vg(t)^\top dw(t),\nonumber\\
 &=\bm \Pi p({\mL}_\tau)^{-1} \int_0^\tau\mX(t)\vg(t)^\top dw(t)\nonumber\\
 &=\bm \Pi \mH_\tau, 
 \label{eq:perm_1}
\end{align}
\eqref{eq:perm_1} implies the representations obtained from \texttt{CTT-HiPPO} are permutation equivariant. Now, to prove the equivariance for the representations from \texttt{CTDG-SSM} layer we first evaluate state matrix $\bar{\mA}$ and system matrix $\bar{\mB}$ under permutation as
\begin{align}
    \bar \mA_{\hat\mL[k]} ^s &= \exp(-p(\hat\mL[k])^{-1}(p(\hat\mL[k-1])-p(\hat\mL[k])s), \nonumber \\
    &= \exp(-\bm \Pi p(\mL[k])^{-1}\bm \Pi ^\top \bm \Pi (p(\mL[k])-p(\mL[k-1])\bm \Pi ^\top s ), \nonumber \\
    &= \exp(-\bm \Pi p(\mL[k])^{-1}(p(\mL[k])-p(\mL[k-1])\bm \Pi ^\top s), \nonumber \\
    &= \bm \Pi\exp(- p(\mL[k])^{-1}(p(\mL[k])-p(\mL[k-1]) s)\bm \Pi ^\top, \nonumber\\
 &= \bm \Pi\bar \mA_{\mL[k]}^s\bm \Pi ^\top,
 \label{eq:per_2}
\end{align}
where $\bar{\mB}$ modifies as 
\begin{align}
    \bar{\mB}(\hat\mL[k],\hat\mX[k])&=\int_0^1 \Bar{\mA}_{\hat\mL[k]}^s \, p(\hat\mL[k])^{-1} \hat\mX[k] \mB^\top \Bar{\mA}^s \, \Delta[k] \, ds, \nonumber \\ &=  \int_0^1 \bm \Pi \Bar{\mA}_{\mL[k]}^s \bm \Pi^\top\, \bm \Pi \nonumber p(\mL[k])^{-1} \bm \Pi^\top \bm \Pi \mX[k] \mB^\top \Bar{\mA}^s \, \Delta[k] \, ds, \nonumber\\
    & =  \bm \Pi \int_0^1  \Bar{\mA}_{\mL[k]}^s \,p(\mL[k])^{-1} \mX[k] \mB^\top \Bar{\mA}^s \, \Delta[k] \, ds. \nonumber\\
    &= \bm \Pi \bar{\mB} (\mL[k],{\mX[k]})
    \label{eq:per_3}
\end{align}
Now we show that the updates from \texttt{CTDG-SSM} are permutation equivariant.  Consider 
\begin{align} \hat \mH[k+1] =& \bar\mA_{\hat \mL[k]} \hat\mH[k]\bar\mA + \bar\mB(\hat\mL[k],\hat\mX[k]), \nonumber \\
\overset{(a)} =& \bar\mA_{\hat \mL[k]} (\bm \Pi \mH[k]) \bar\mA + \bm \Pi \bar\mB(\mL[k],\mX[k]), \nonumber \\
\overset{(b)} =& \bm \Pi \bar\mA_{\mL[k]} \mH[k]\bar\mA + \bm \Pi \bar\mB(\mL[k],\mX[k]), \nonumber \\ 
=& \bm \Pi \left(\bar\mA_{\mL[k]} \mH[k]\bar\mA + \bar\mB(\mL[k],\mX[k])\right),\nonumber \\
=& \bm \Pi \mH[k+1], 
\label{eq:per_4}
\end{align}
where \eqref{eq:per_4}(a) follows by recursion. Recall $k=0$  we have $\hat{\mH}[1] = \bar\mB(\hat\mL[0],\hat\mX[0])$ as $\mH[0] = \mathbf{0}$, hence $\hat{\mH}[1] = \bm \Pi \bar\mB(\mL[0],\hat\mX[0]) = \bm \Pi \mH[1]$ from \eqref{eq:per_3} which is propogated through $k$ layers. Then  \eqref{eq:per_4}(b) follows from \eqref{eq:per_2} and \eqref{eq:per_3}.

\section{Numerical Experiments and Additional Results}
In this section, we discuss the dataset details, hyperparameters, and the additional results on the dynamic link prediction task.
\subsection{Dataset Details} \label{sec:data_Sec}
We provide a detailed description of the datasets considered for experimentation in Table~\ref{tab:dataset_stats}. In all the datasets,  LastFM, Enron and MOOC are mainly considered for evaluating the LRT task. In particular, 
 The LastFM dataset corresponds to data from a music streaming platform that records user listening behaviors, where users and songs are nodes and links denote listening events~\citep{lastFM}. 
The Enron dataset is an email communication dataset among employees of the Enron Corporation, recorded over a three-year period~\citep {Enron}. Whereas the MOOC dataset captures student interactions on an online course platform, where links represent students accessing course content such as videos or problem sets~\citep{MOOC}, other DTDG datasets used for evaluation include Flights, Can. Parl, US Legis., UN Trade, UN Vote, and Contact include. For all datasets used in data processing, we employ the same pipeline described in~\citep{DYGFORMER}. Additionally, datasets including \texttt{tgbl-wiki} and \texttt{tgbl-coin} from~\citep{tgbl} were also utilized. 

\begin{table}[t]
\centering
\caption{Statistics of the datasets used in our experiments. \#N \& L feat corresponds to the dimension of node and link features, where -  represents the unavailability of node features.}
\label{tab:dataset_stats}
\resizebox{\textwidth}{!}{%
\begin{tabular}{lcccccccc}
\toprule
Dataset & Domain & \#Nodes & \#Links & \#N\&L Feat & Bipartite & Duration & Unique Steps & Time Granularity \\
\midrule
Wikipedia   & Social      & 9,227  & 157,474   & – \& 172 & True  & 1 month    & 152,757   & Unix timestamps \\
Reddit      & Social      & 10,984 & 672,447   & – \& 172 & True  & 1 month    & 669,065   & Unix timestamps \\
MOOC        & Interaction & 7,144  & 411,749   & – \& 4   & True  & 17 months  & 345,600   & Unix timestamps \\
LastFM      & Interaction & 1,980  & 1,293,103 &– \& –   & True  & 1 month    & 1,283,614 & Unix timestamps \\
Enron       & Social      & 184    & 125,235   & – \& –   & False & 3 years    & 22,632    & Unix timestamps \\
UCI         & Social      & 1,899  & 59,835    & – \& –   & False & 196 days   & 58,911    & Unix timestamps \\
Social Evo. & Proximity   & 74     & 2,099,519 & – \& 2   & False & 8 months   & 565,932   & Unix timestamps \\
Flights     & Transport   & 13{,}169 & 1{,}927{,}145 & - \& 1 & False & 4 months    & 122      & days \\
Can. Parl.  & Politics    & 734      & 74{,}478    & - \& 1   & False & 14 years    & 14       & years \\
US Legis.   & Politics    & 225      & 60{,}396    & - \& 1   & False & 12 congresses & 12     & congresses \\
UN Trade    & Economics   & 255      & 507{,}497   &  - \& 1   & False & 32 years    & 32       & years \\
UN Vote     & Politics    & 201      & 1{,}035{,}742 &  - \& 1 & False & 72 years    & 72       & years \\
Contact     & Proximity   & 692      & 2{,}426{,}279 & - \& 1 & False & 1 month     & 8{,}064  & 5 minutes \\
tgbl-wiki     & Interaction   & 9,227     & 157,474 & - \& 1 & True & 1 month     & 152,757  & Unix timestamps \\
tgbl-coin     & Economics   & 638,486    & 22,809,486 & - \& 1 & False & 7 month     & 1,295,720  & Unix timestamps \\

\bottomrule
\end{tabular}%
}
\end{table}
\begin{table*}[ht]
\centering
\caption{AUC-ROC for transductive dynamic link prediction under. RNS: Random Negative Sampling, HNS: Historical Negative Sampling, INS : Inductive Negative Sampling.}
\label{tab:trans_auc_roc}
\resizebox{\textwidth}{!}{
\begin{tabular}{llccccccccccc}
\toprule
\textbf{NSS} & \textbf{Datasets} & JODIE & DyRep & TGAT & TGN & CAWN & TCL & GraphMixer & DyGFormer & CTAN & DyGmamba & \texttt{CTDG-SSM} \\
\midrule
\multirow{7}{*}{RNS} 
& LastFM & 70.89  $\pm$  1.97 & 71.40  $\pm$  2.12 & 71.47  $\pm$  0.14 & 76.64  $\pm$  4.66 & 85.92  $\pm$  0.16 & 71.09  $\pm$  1.48 & 73.51  $\pm$  0.14 & 93.03  $\pm$  0.11 & 85.12  $\pm$  0.77 & 93.31  $\pm$  0.18 & \textbf{93.79 $\pm$  0.22} \\
& Enron &87.77 $\pm$ 2.43 & 83.09 $\pm$ 2.20 & 68.57 $\pm$ 1.46 & 88.72 $\pm$ 0.95 & 90.34 $\pm$ 0.23  & 83.33 $\pm$ 0.93 & 84.16 $\pm$ 0.34 & 93.20 $\pm$ 0.12 & 87.09 $\pm$ 1.51 & 93.34 $\pm$ 0.23 & \textbf{94.98  $\pm$  2.92} \\
& MOOC & 84.50 $\pm$ 0.60 & 84.50 $\pm$ 0.87 & 87.01 $\pm$ 0.16 & 91.91 $\pm$ 0.82 & 80.48 $\pm$ 0.41  & 84.02 $\pm$ 0.59 & 84.04 $\pm$ 0.12 & 88.08 $\pm$ 0.50 & 85.40 $\pm$ 2.67 & 89.58 $\pm$ 0.12  & \textbf{99.00  $\pm$  0.33} \\
& Reddit & 98.29 $\pm$ 0.05 & 98.13 $\pm$ 0.04 & 98.50 $\pm$ 0.01 & 98.61 $\pm$ 0.05 & 99.02 $\pm$ 0.00  & 97.67 $\pm$ 0.01 & 97.17 $\pm$ 0.02 & 99.15 $\pm$ 0.01 & 97.24 $\pm$ 0.75 & 99.27 $\pm$ 0.01 & \textbf{99.48  $\pm$  0.02}\\
& Wikipedia & 96.36 $\pm$ 0.14 & 94.43 $\pm$ 0.32 & 96.60 $\pm$ 0.07 & 98.37 $\pm$ 0.10 & 98.54 $\pm$ 0.01  & 97.27 $\pm$ 0.06 & 96.89 $\pm$ 0.04 & 98.92 $\pm$ 0.03 & 97.00 $\pm$ 0.21 & 99.08 $\pm$ 0.02 & \textbf{99.33 $\pm$ 0.08} \\
& UCI & 90.35 $\pm$ 0.51 & 69.46 $\pm$ 2.66 & 78.76 $\pm$ 1.10 & 92.03 $\pm$ 0.69 & 93.81 $\pm$ 0.23  & 85.49 $\pm$ 0.82 & 91.62 $\pm$ 0.52 & 94.45 $\pm$ 0.22 & 76.25 $\pm$ 2.83 & \textbf{94.77 $\pm$ 0.18}  & 89.24 $\pm$ 0.43\\
& Social Evo. &92.13 $\pm$ 0.20 & 90.37 $\pm$ 0.52 & 94.93 $\pm$ 0.06 & 95.31 $\pm$ 0.27 & 87.34 $\pm$ 0.10 & 95.45 $\pm$ 0.21 & 95.21 $\pm$ 0.07 & 96.25 $\pm$ 0.04 & Timeout & 96.38 $\pm$ 0.02 &  \textbf{99.10 $\pm$ 0.49} \\
\midrule
& \textbf{Avg. Rank} & 7.93 & 9.36 & 7.86 & 4.57 & 5.71 & 8.00 & 7.71 & 3.00 & 7.50 & 2.00 & \textbf{1.86} \\
\midrule
\multirow{7}{*}{HNS} 
& LastFM  &75.65 $\pm$ 4.43 & 70.63 $\pm$ 2.56 & 64.23 $\pm$ 0.45 & 78.00 $\pm$ 2.97 & 67.92 $\pm$ 0.32 & 60.53 $\pm$ 2.54 & 64.06 $\pm$ 0.34 & 78.80 $\pm$ 0.02 & 79.50 $\pm$ 0.82 & 79.82 $\pm$ 0.27 & \textbf{89.55 $\pm$  0.57} \\
& Enron & 75.21 $\pm$ 1.27 & 76.36 $\pm$ 1.42 & 62.36 $\pm$ 1.07 & 76.75 $\pm$ 1.40 & 65.62 $\pm$ 0.49  & 71.72 $\pm$ 1.24 & 74.82 $\pm$ 2.04 & 77.35 $\pm$ 0.64 & 81.95 $\pm$ 1.64 & 77.73 $\pm$ 0.61& \textbf{95.86  $\pm$  2.18} \\
& MOOC &  82.38 $\pm$ 1.75 & 80.71 $\pm$ 2.08 & 81.53 $\pm$ 0.79 & 86.59 $\pm$ 2.03 & 71.74 $\pm$ 0.88 & 73.22 $\pm$ 1.21 & 77.09 $\pm$ 0.83 & 87.26 $\pm$ 0.83 & 73.87 $\pm$ 2.77 & 87.91 $\pm$ 0.93 & \textbf{95.22  $\pm$  1.65} \\
& Reddit & 80.70 $\pm$ 0.20 & 79.96 $\pm$ 0.23 & 79.60 $\pm$ 0.09 & 81.04 $\pm$ 0.23 & 80.42 $\pm$ 0.20 & 76.83 $\pm$ 0.12 & 77.83 $\pm$ 0.33 & 80.61 $\pm$ 0.48 & 90.63 $\pm$ 2.28 & 81.71 $\pm$ 0.49 & \textbf{97.49  $\pm$  0.17} \\
& Wikipedia &  80.71 $\pm$ 0.64 & 77.49 $\pm$ 0.72 & 82.83 $\pm$ 0.27 & 83.28 $\pm$ 0.26 & 65.74 $\pm$ 3.46 & 85.55 $\pm$ 0.47 & 87.47 $\pm$ 0.20 & 72.78 $\pm$ 6.65 & 95.43 $\pm$ 0.07 & 78.99 $\pm$ 1.24 & \textbf{99.02 $\pm$ 0.17}\\
& UCI & 78.21 $\pm$ 3.18 & 58.65 $\pm$ 3.58 & 57.12 $\pm$ 0.98 & 78.48 $\pm$ 1.79 & 57.67 $\pm$ 1.11 & 65.42 $\pm$ 2.62 & 77.46 $\pm$ 1.63 & 75.71 $\pm$ 0.57 & 75.05 $\pm$ 0.13 & 75.43 $\pm$ 1.99 & \textbf{87.86 $\pm$ 0.59} \\
& Social Evo.  & 91.83 $\pm$ 1.52 & 92.81 $\pm$ 0.60 & 93.63 $\pm$ 0.48 & 94.27 $\pm$ 1.33 & 87.61 $\pm$ 0.06 & 95.03 $\pm$ 0.82 & 94.65 $\pm$ 0.28 & 97.16 $\pm$ 0.06 & Timeout & 97.27 $\pm$ 0.30 & \textbf{98.89 $\pm$ 0.56}  \\
\midrule
& \textbf{Avg. Rank} & 6.00 & 7.71 & 8.43 & 4.43 & 9.57 & 8.14 & 6.86 & 5.00 & 5.14 & 3.71 & \textbf{1.00} \\
\midrule
\multirow{7}{*}{INS} 
& LastFM & 61.59 $\pm$ 5.72 & 60.62 $\pm$ 2.20 & 63.96 $\pm$ 0.41 & 65.48 $\pm$ 4.13 & 67.90 $\pm$ 0.44 & 54.75 $\pm$ 1.31 & 59.98 $\pm$ 0.20 & 67.87 $\pm$ 0.53 & 78.70 $\pm$ 0.87 & 68.74 $\pm$ 0.55 & \textbf{94.17 $\pm$  0.22} \\
& Enron & 70.75 $\pm$ 0.69 & 67.37 $\pm$ 2.21 & 59.78 $\pm$ 1.12 & 73.22 $\pm$ 0.42 & 75.29 $\pm$ 0.66 &  69.74 $\pm$ 1.19 & 70.72 $\pm$ 1.08 & 74.67 $\pm$ 0.80 & 75.40 $\pm$ 1.92 & 75.47 $\pm$ 1.41 & \textbf{95.80  $\pm$  1.96} \\
& MOOC & 67.53 $\pm$ 1.76 & 62.60 $\pm$ 1.27 & 74.44 $\pm$ 0.81 & 76.89 $\pm$ 2.13 & 70.08 $\pm$ 0.33  & 71.80 $\pm$ 1.09 & 72.25 $\pm$ 0.57 & 80.78 $\pm$ 0.89 & 68.17 $\pm$ 3.73 & 81.08 $\pm$ 0.82 & \textbf{99.08  $\pm$  0.35} \\
& Reddit & 83.40 $\pm$ 0.33 & 82.75 $\pm$ 0.36 & 87.46 $\pm$ 0.10 & 84.57 $\pm$ 0.19 & 88.19 $\pm$ 0.20  & 84.41 $\pm$ 0.18 & 82.24 $\pm$ 0.24 & 86.25 $\pm$ 0.64 & 91.42 $\pm$ 2.18 & 86.35 $\pm$ 0.52& \textbf{99.51  $\pm$  0.03} \\
& Wikipedia  & 70.41 $\pm$ 0.39 & 67.57 $\pm$ 0.94 & 81.54 $\pm$ 0.31 & 81.21 $\pm$ 0.30 & 68.48 $\pm$ 3.64 & 73.51 $\pm$ 1.88 & 84.20 $\pm$ 0.36 & 64.09 $\pm$ 9.75 & 93.67 $\pm$ 0.11 & 75.64 $\pm$ 2.42 & \textbf{99.36  $\pm$  0.07}\\
& UCI & 64.14 $\pm$ 1.25 & 54.10 $\pm$ 2.74 & 59.60 $\pm$ 0.61 & 63.76 $\pm$ 0.99 & 57.85 $\pm$ 0.59  & 65.46 $\pm$ 2.07 & 74.25 $\pm$ 0.71 & 64.92 $\pm$ 0.83 & 66.51 $\pm$ 0.25 & 66.83 $\pm$ 2.83  & \textbf{89.75 $\pm$ 0.32}\\
& Social Evo. & 91.81 $\pm$ 1.69 & 92.77 $\pm$ 0.64 & 93.54 $\pm$ 0.48 & 94.86 $\pm$ 1.25 & 90.10 $\pm$ 0.11 & 95.13 $\pm$ 0.83 & 94.50 $\pm$ 0.26 & 95.01 $\pm$ 0.15 & Timeout & 97.37 $\pm$ 0.26 &  \textbf{99.24 $\pm$ 0.47}\\
\midrule
& \textbf{Avg. Rank} & 8.29 & 9.86 & 6.71 & 5.86 & 6.86 & 7.14 & 6.57 & 5.71 & 3.67 & 3.29 & \textbf{1.00} \\
\bottomrule
\end{tabular}}
\end{table*}
\begin{table*}
\centering
\caption{AUC-ROC of inductive dynamic link prediction. } \label{tab:auc_roc_inductive}
\resizebox{\textwidth}{!}{
\begin{tabular}{llcccccccccccc}
\toprule
\textbf{NSS} & \textbf{Datasets} & JODIE & DyRep & TGAT & TGN & CAWN & TCL & GraphMixer & DyGFormer & CTAN & DyGmamba & \texttt{CTDG-SSM} \\
\midrule
\multirow{7}{*}{RNS} 
& LastFM & 83.13 $\pm$ 1.19 & 83.47 $\pm$ 1.06 & 78.40 $\pm$ 0.30 & 81.18 $\pm$ 3.27 & 89.33 $\pm$ 0.06 & 81.38 $\pm$ 1.53 & 82.07 $\pm$ 0.31 & 94.17 $\pm$ 0.10 & 60.40 $\pm$ 3.01 & 94.42 $\pm$ 0.21 & \textbf{94.49 $\pm$  0.27} \\
& Enron &  78.97 $\pm$ 1.59 & 73.97 $\pm$ 3.00 & 66.67 $\pm$ 1.07 & 78.76 $\pm$ 1.69 & 86.30 $\pm$ 0.56 & 82.61 $\pm$ 0.61 & 75.55 $\pm$ 0.81 & 89.62 $\pm$ 0.27 & 74.61 $\pm$ 1.64 & 89.67 $\pm$ 0.27& \textbf{93.66  $\pm$  4.67}\\
& MOOC& 80.57 $\pm$ 0.52 & 80.50 $\pm$ 0.68 & 85.28 $\pm$ 0.30 & 88.01 $\pm$ 1.48 & 81.32 $\pm$ 0.42 & 82.28 $\pm$ 0.99 & 81.38 $\pm$ 0.17 & 87.05 $\pm$ 0.51 & 64.99 $\pm$ 2.24 & 88.64 $\pm$ 0.08 & \textbf{98.67  $\pm$  0.46} \\
& Reddit  & 96.43 $\pm$ 0.16 & 95.89 $\pm$ 0.26 & 97.13 $\pm$ 0.04 & 97.41 $\pm$ 0.12 & 98.62 $\pm$ 0.01 & 95.01 $\pm$ 0.10 & 95.24 $\pm$ 0.08 & 98.83 $\pm$ 0.02 & 80.07 $\pm$ 2.53 & 98.97 $\pm$ 0.01 & \textbf{99.13  $\pm$  0.03} \\
& Wikipedia & 94.91 $\pm$ 0.32 & 92.21 $\pm$ 0.29 & 96.26 $\pm$ 0.12 & 97.81 $\pm$ 0.18 & 98.27 $\pm$ 0.02 & 97.48 $\pm$ 0.06 & 96.61 $\pm$ 0.04 & 98.58 $\pm$ 0.01 & 93.58 $\pm$ 0.65 & 98.77 $\pm$ 0.03 & \textbf{99.06  $\pm$  0.10} \\
& UCI & 79.73 $\pm$ 1.48 & 58.39 $\pm$ 2.38 & 79.10 $\pm$ 0.49 & 87.81 $\pm$ 1.32 & 92.61 $\pm$ 0.35 & 84.19 $\pm$ 1.37 & 91.17 $\pm$ 0.29 & 94.45 $\pm$ 0.13 & 49.78 $\pm$ 5.02 & \textbf{94.76 $\pm$ 0.19} & 87.43  $\pm$  0.79 \\
& Social Evo. & 91.72 $\pm$ 0.66 & 89.10 $\pm$ 1.90 & 91.47 $\pm$ 0.10 & 90.74 $\pm$ 1.40 & 79.83 $\pm$ 0.14 & 92.51 $\pm$ 0.11 & 91.89 $\pm$ 0.05 & 93.05 $\pm$ 0.10 & Timeout & 93.13 $\pm$ 0.05 & \textbf{98.60 $\pm$ 0.14} \\
\midrule
\textbf{} & \textbf{Avg. Rank} 
& 7.29 & 9.00 & 8.00 & 6.00 & 5.29 & 6.57 & 6.71 & 3.00 & 10.57 & 1.86 & \textbf{1.71} \\ 
\midrule
\multirow{7}{*}{INS} 
& LastFM & 69.85 $\pm$ 1.70 & 68.14 $\pm$ 1.61 & 69.89 $\pm$ 0.41 & 67.01 $\pm$ 5.77 & 67.72 $\pm$ 0.20 & 63.15 $\pm$ 1.17 & 69.93 $\pm$ 0.17 & 69.86 $\pm$ 0.80 & 57.85 $\pm$ 3.67 & 70.59 $\pm$ 0.57 & \textbf{ 94.77 $\pm$  0.26} \\
& Enron & 65.95 $\pm$ 1.27 & 62.20 $\pm$ 2.15 & 56.52 $\pm$ 0.84 & 64.21 $\pm$ 0.94 & 62.07 $\pm$ 0.72 & 67.56 $\pm$ 1.34 & 67.39 $\pm$ 1.33 & 66.07 $\pm$ 0.65 & 68.70 $\pm$ 1.82 & 68.98 $\pm$ 1.00 & \textbf{94.59  $\pm$  3.37} \\
& MOOC  & 65.37 $\pm$ 0.96 & 62.97 $\pm$ 2.05 & 74.94 $\pm$ 0.80 & 76.36 $\pm$ 2.91 & 71.18 $\pm$ 0.54 & 71.30 $\pm$ 1.21 & 72.15 $\pm$ 0.65 & 80.42 $\pm$ 0.72 & 58.06 $\pm$ 0.89 & 81.12 $\pm$ 0.63 & \textbf{98.71  $\pm$  0.47} \\
& Reddit & 61.84 $\pm$ 0.44 & 60.35 $\pm$ 0.53 & 64.92 $\pm$ 0.08 & 65.24 $\pm$ 0.08 & 65.37 $\pm$ 0.12 & 61.85 $\pm$ 0.11 & 64.56 $\pm$ 0.26 & 64.80 $\pm$ 0.53 & 81.70 $\pm$ 4.71 & 64.93 $\pm$ 0.89 & \textbf{ 99.15  $\pm$  0.03} \\
& Wikipedia & 61.66 $\pm$ 0.30 & 56.34 $\pm$ 0.67 & 78.40 $\pm$ 0.77 & 75.86 $\pm$ 0.50 & 59.00 $\pm$ 4.33 & 71.45 $\pm$ 2.23 & 82.76 $\pm$ 0.11 & 58.21 $\pm$ 8.78 & 91.12 $\pm$ 0.13 & 67.92 $\pm$ 2.23 & \textbf{99.09  $\pm$  0.10} \\
& UCI & 60.66 $\pm$ 1.82 & 51.50 $\pm$ 2.08 & 61.27 $\pm$ 0.78 & 62.07 $\pm$ 0.67 & 55.60 $\pm$ 1.22 & 65.87 $\pm$ 1.90 & 75.72 $\pm$ 0.70 & 64.37 $\pm$ 0.98 & 51.68 $\pm$ 2.60 & 66.95 $\pm$ 2.22 & \textbf{87.86  $\pm$  0.73} \\
& Social Evo. & 88.98 $\pm$ 0.81 & 86.43 $\pm$ 1.48 & 92.37 $\pm$ 0.50 & 91.66 $\pm$ 2.14 & 83.84 $\pm$ 0.21 & 95.50 $\pm$ 0.31 & 93.88 $\pm$ 0.22 & 94.97 $\pm$ 0.36 & Timeout & 96.65 $\pm$ 0.29 & \textbf{98.90 $\pm$ 0.14} \\
\midrule
\textbf{} & \textbf{Avg. Rank} 
& 8.00 & 9.71 & 6.14 & 6.14 & 8.14 & 6.14 & 4.57 & 5.71 & 7.14 & 3.29 & \textbf{1.00} \\ 
\bottomrule
\end{tabular}}
\end{table*}

\begin{table} 
\centering 
\caption{\footnotesize Model Hyperparameters. N/A: Not Applicable, OHE: One-hot encoding, LR: Learnable, RN: Random.}
\label{tab:hyperparameter_details}
\resizebox{\textwidth}{!}{
\begin{tabular}{lccccc}
\hline
Dataset & Latent dimension & Time embedding dimension & $N_u$ & Batch size & Static embedding \\
\hline
Enron & 32  & 16 & 10 & 128 & OHE \\
UCI   & 32  & 16 & 10 & 128 & RN\\
MOOC  & 32  & 16 & 10 & 128 & N/A\\
Wikipedia & 128 & 16 & 10 & 128 & N/A \\
Reddit & 128 & 16 & 10 & 128 & N/A \\
Lastfm & 32 & 16 & 10 & 128 & N/A\\
Flights  & 32  & 16 & 10 & 128 & N/A\\
Can. Parl.   & 32  & 16 & 10 & 128 & N/A\\
US Legis.  & 32  & 16 & 10 & 128 & N/A\\
UN Trade  & 32  & 16 & 10 & 128 & N/A\\
UN Vote  & 32  & 16 & 10 & 128 & N/A\\
Contact  & 32  & 16 & 10 & 128 & N/A\\
tgbl-wiki  & 128  & 16 & 10 & 128 & N/A\\
tgbl-coin  & 32  & 16 & 10 & 128 & LR\\
Sequence Classification & 32 & N/A & 10 & 128 & N/A \\
\hline
\end{tabular}}
\end{table}

\subsection{Additional Results and Hyperparameter details} \label{sec:Additional results}

In this section, we provide additional results for the dynamic link prediction task. Specifically, we report performance using average precision (AP) as an evaluation metric. Furthermore, we present AUC-ROC results under both inductive and transductive settings, comparing different sampling strategies.   In Table~\ref{tab:trans_auc_roc}, ~\ref{tab:Auc_roc transductive} and Table~\ref{tab:ap_transductive},~\ref{tab:Ap_transductive},~\ref{tab:freedyg under transductive setting} we report AUC-ROC and AP scores under the transductive setting with different sampling techniques. The results clearly demonstrate that the proposed model outperforms state-of-the-art algorithms on LRT datasets, primarily due to its ability to jointly encode structural information via graph polynomials that capture multi-hop neighborhood interactions and temporal evolution through a state-space formulation. In Table~\ref{tab:auc_roc_inductive},~\ref{tab:Auc_roc inductive setting}, and Table~\ref{tab:ap_inductive},~\ref{tab:AP inductive setting},~\ref{tab:freedyg-only under inductive setting} we report results under the inductive setting, where the task is more challenging since the test set includes nodes unseen during training. Additionally, we report the mean reciprocal rank (MRR) in Table~\ref{tab:tgbl_results} using the evaluation mechanism proposed in~\citep{tgbl} (values close to 1 are better). The proposed model not only outperforms existing approaches but also exhibits only a minor performance drop compared to the transductive setting, highlighting its ability to effectively capture global structural and temporal patterns instead of learning local structural patterns.  \\ 


\textbf{Hyperparameter Details}: In Table~\ref{tab:hyperparameter_details}, we report the hyperparameters used in all experiments. The latent dimension corresponds to the size of the memory representations, the batch size denotes the number of events in each batch, and OHE refers to one-hot encoding.

\begin{table}[htbp]
\centering
\caption{AP of transductive dynamic link prediction.} \label{tab:ap_transductive}
\resizebox{\textwidth}{!}{
\begin{tabular}{llccccccccccc}
\toprule
\textbf{NSS} & \textbf{Datasets} & JODIE & DyRep & TGAT & TGN & CAWN & TCL & GraphMixer & DyGFormer & CTAN & DyGmamba & \texttt{CTDG-SSM} \\ \hline
\multirow{7}{*}{RNS} & LastFM & 70.95 $\pm$ 2.94 & 71.85 $\pm$ 2.44 & 73.30 $\pm$ 0.18 & 75.31 $\pm$ 5.62 & 86.60 $\pm$ 0.11  & 76.62 $\pm$ 1.83 & 75.56 $\pm$ 0.19 & 92.95 $\pm$ 0.14 & 86.44 $\pm$ 0.80 & 93.35 $\pm$ 0.20 & \textbf{93.40 ± 0.49}\\ 
 & Enron & 84.85 $\pm$ 3.13 & 79.80 $\pm$ 2.28 & 70.76 $\pm$ 1.05 & 86.98 $\pm$ 1.05 & 89.50 $\pm$ 0.10 &  85.41 $\pm$ 0.71 & 82.13 $\pm$ 0.30 & 92.42 $\pm$ 0.11 & 92.52 $\pm$ 1.20 & 92.65 $\pm$ 0.12 & \textbf{94.46 $\pm$ 4.73} \\ 
 & MOOC & 81.04 $\pm$ 0.83 & 81.50 $\pm$ 0.77 & 85.71 $\pm$ 0.20 & 89.15 $\pm$ 1.69 & 80.30 $\pm$ 0.43 &  83.89 $\pm$ 0.86 & 82.80 $\pm$ 0.15 & 87.66 $\pm$ 0.48 & 84.71 $\pm$ 2.85 & 89.21 $\pm$ 0.08 & \textbf{98.85 $\pm$ 0.35}\\ 
 & Reddit & 98.31 $\pm$ 0.06 & 98.18 $\pm$ 0.03 & 98.57 $\pm$ 0.01 & 98.65 $\pm$ 0.04 & 99.11 $\pm$ 0.01 & 97.78 $\pm$ 0.02 & 97.31 $\pm$ 0.01 & 99.22 $\pm$ 0.01 & 97.21 $\pm$ 0.84 & 99.32 $\pm$ 0.01 & \textbf{99.53 $\pm$ 0.02} \\ 
 & Wikipedia & 96.51 $\pm$ 0.22 & 94.88 $\pm$ 0.29 & 96.88 $\pm$ 0.06 & 98.45 $\pm$ 0.10 & 98.77 $\pm$ 0.01 & 97.75 $\pm$ 0.04 & 97.22 $\pm$ 0.02 & 99.03 $\pm$ 0.03 & 96.61 $\pm$ 0.79 & 99.15 $\pm$ 0.02 &  \textbf{99.40 $\pm$ 0.09}\\ 
 & UCI & 89.28 $\pm$ 1.02 & 66.11 $\pm$ 2.75 & 79.40 $\pm$ 0.61 & 92.33 $\pm$ 0.64 & 95.13 $\pm$ 0.23 & 86.63 $\pm$ 1.30 & 93.15 $\pm$ 0.41 & 95.74 $\pm$ 0.17 & 76.64 $\pm$ 4.11 & \textbf{95.91 $\pm$ 0.15} & 90.18 $\pm$ 0.98 \\ 
 & Social Evo. & 89.88 $\pm$ 0.40 & 88.39 $\pm$ 0.69 & 93.33 $\pm$ 0.06 & 93.45 $\pm$ 0.29 & 84.90 $\pm$ 0.11 & 93.82 $\pm$ 0.19 & 93.36 $\pm$ 0.06 & 94.63 $\pm$ 0.07 & Timeout & 94.77 $\pm$ 0.01 &  \textbf{98.65 $\pm$ 0.65}\\ \midrule
\textbf{} & \textbf{Avg. Rank} 
& 8.71 & 9.71 & 7.86 & 5.29 & 5.86 & 6.71 & 7.29 & 3.14 & 7.86 & 1.86 & \textbf{1.71} \\ \hline
\multirow{7}{*}{HNS} & LastFM  & 74.38 $\pm$ 6.27 & 71.85 $\pm$ 2.91 & 71.60 $\pm$ 0.36 & 75.03 $\pm$ 6.90 & 69.93 $\pm$ 0.33 & 71.02 $\pm$ 2.07 & 72.28 $\pm$ 0.37 & 81.51 $\pm$ 0.14 & 82.29 $\pm$ 0.94 & 83.02 $\pm$ 0.16 & \textbf{88.91 ± 0.93}  \\ 
 & Enron & 69.13 $\pm$ 1.66 & 72.58 $\pm$ 1.83 & 64.24 $\pm$ 1.24 & 74.31 $\pm$ 1.99 & 65.40 $\pm$ 0.36 & 72.39 $\pm$ 0.61 & 77.35 $\pm$ 1.22 & 76.93 $\pm$ 0.76 & 77.24 $\pm$ 1.53 & 77.77 $\pm$ 1.32 &\textbf{95.80 $\pm$ 3.33}\\ 
 & MOOC & 78.62 $\pm$ 2.43 & 75.14 $\pm$ 2.86 & 82.83 $\pm$ 0.71 & 85.65 $\pm$ 2.32 & 74.46 $\pm$ 0.53 & 78.51 $\pm$ 0.84 & 77.09 $\pm$ 0.83 & 86.43 $\pm$ 0.38 & 67.73 $\pm$ 2.08 & 85.89 $\pm$ 0.94 & \textbf{94.76 $\pm$ 1.76}\\ 
 & Reddit & 79.96 $\pm$ 0.30 & 79.40 $\pm$ 0.00 & 79.78 $\pm$ 0.25 & 81.05 $\pm$ 0.32 & 80.96 $\pm$ 0.28 & 77.38 $\pm$ 0.02 & 78.39 $\pm$ 0.36 & 83.81 $\pm$ 1.08 & 89.77 $\pm$ 2.28 & 88.81 $\pm$ 1.52  & \textbf{97.55 $\pm$ 0.22}\\ 
 & Wikipedia & 81.16 $\pm$ 0.73 & 79.44 $\pm$ 0.95 & 87.31 $\pm$ 0.36 & 87.31 $\pm$ 0.25 & 66.77 $\pm$ 6.62 & 86.12 $\pm$ 1.69 & 90.74 $\pm$ 0.06 & 70.13 $\pm$ 11.02 & 95.91 $\pm$ 0.10 & 81.77 $\pm$ 1.20 & \textbf{98.99 $\pm$ 0.32}\\\ 
 & UCI& 74.77 $\pm$ 5.35 & 55.89 $\pm$ 2.83 & 66.78 $\pm$ 0.77 & 81.32 $\pm$ 1.26 & 64.69 $\pm$ 1.78  & 74.62 $\pm$ 2.70 & 83.88 $\pm$ 1.06 & 80.44 $\pm$ 1.16 & 76.62 $\pm$ 0.33 & 81.03 $\pm$ 1.09 & \textbf{88.87 $\pm$ 1.28}\\ 
 & Social Evo.& 91.26 $\pm$ 2.47 & 92.86 $\pm$ 0.90 & 95.31 $\pm$ 0.30 & 93.84 $\pm$ 1.68 & 85.65 $\pm$ 0.11  & 95.93 $\pm$ 0.63 & 95.30 $\pm$ 0.34 & 97.05 $\pm$ 0.16 & Timeout & 97.35 $\pm$ 0.52  &  \textbf{98.20 $\pm$ 0.81} \\ \midrule
\textbf{} & \textbf{Avg. Rank} 
& 7.43 & 8.71 & 7.36 & 4.93 & 9.71 & 7.71 & 5.57 & 4.71 & 5.57 & 3.29 & \textbf{1.00} \\ \hline
\multirow{7}{*}{INS} & LastFM & 62.63 $\pm$ 6.89 & 62.49 $\pm$ 3.04 & 71.16 $\pm$ 0.33 & 65.09 $\pm$ 7.05 & 67.38 $\pm$ 0.57 & 62.76 $\pm$ 0.81 & 67.87 $\pm$ 0.37 & 72.60 $\pm$ 0.06 & 80.06 $\pm$ 0.85 & 73.63 $\pm$ 0.54 & \textbf{93.81 ± 0.44}  \\ 
 & Enron& 69.51 $\pm$ 1.06 & 66.78 $\pm$ 2.21 & 63.16 $\pm$ 0.59 & 73.27 $\pm$ 0.58 & 75.08 $\pm$ 0.81 & 70.98 $\pm$ 0.96 & 74.12 $\pm$ 0.65 & 78.22 $\pm$ 0.80 & 72.02 $\pm$ 2.64 & 80.86 $\pm$ 1.24 & \textbf{95.81 $\pm$ 2.99}\\ 
 & MOOC& 66.56 $\pm$ 1.49 & 61.48 $\pm$ 0.96 & 76.96 $\pm$ 0.89 & 77.59 $\pm$ 1.83 & 73.55 $\pm$ 0.36 & 76.35 $\pm$ 1.41 & 74.24 $\pm$ 0.75 & 80.99 $\pm$ 0.88 & 64.93 $\pm$ 3.31 & 81.11 $\pm$ 0.63& \textbf{99.03 $\pm$ 0.38}\\ 
 & Reddit & 86.93 $\pm$ 0.21 & 86.06 $\pm$ 0.36 & 89.93 $\pm$ 0.10 & 88.12 $\pm$ 0.13 & 91.89 $\pm$ 0.18 & 86.97 $\pm$ 0.26 & 85.37 $\pm$ 0.26 & 91.06 $\pm$ 0.60 & 90.99 $\pm$ 2.19 & 91.15 $\pm$ 0.54 & \textbf{99.58 $\pm$ 0.02}\\ 
 & Wikipedia & 74.78 $\pm$ 0.56 & 70.55 $\pm$ 1.22 & 86.77 $\pm$ 0.29 & 85.80 $\pm$ 0.15 & 69.27 $\pm$ 7.07 & 72.54 $\pm$ 4.69 & 88.54 $\pm$ 0.20 & 62.00 $\pm$ 14.00 & 94.15 $\pm$ 0.08 & 79.86 $\pm$ 2.18 & \textbf{99.45 $\pm$ 0.06} \\ 
 & UCI & 66.02 $\pm$ 1.28 & 54.64 $\pm$ 2.52 & 67.63 $\pm$ 0.51 & 70.34 $\pm$ 0.72 & 64.08 $\pm$ 1.06 & 73.49 $\pm$ 2.21 & 79.57 $\pm$ 0.61 & 70.51 $\pm$ 1.83 & 66.25 $\pm$ 0.51 & 71.95 $\pm$ 2.51 & \textbf{91.44 $\pm$ 0.50} \\ 
 & Social Evo. & 91.08 $\pm$ 3.29 & 92.84 $\pm$ 0.98 & 95.20 $\pm$ 0.30 & 94.58 $\pm$ 1.52 & 88.50 $\pm$ 0.13  & 96.14 $\pm$ 0.63 & 95.11 $\pm$ 0.32 & 97.62 $\pm$ 0.12 & Timeout & 97.68 $\pm$ 0.42 & \textbf{98.88 $\pm$ 0.63} \\ \hline
\textbf{} & \textbf{Avg. Rank} 
& 8.86 & 10.00 & 6.14 & 6.14 & 7.29 & 6.57 & 5.71 & 4.71 & 6.43 & 3.14 & \textbf{1.00} \\ \hline
\end{tabular}%
}
\end{table}
\begin{table*}[t] 
\centering
\caption{\footnotesize AP of inductive dynamic link prediction.}
\label{tab:ap_inductive}
\resizebox{\textwidth}{!}{
\begin{tabular}{llccccccccccc}
\toprule
NSS & Datasets & JODIE & DyRep & TGAT & TGN & CAWN & TCL & GraphMixer & DyGFormer & CTAN & DyGmamba& \texttt{CTDG-SSM} \\ \midrule

\multirow{7}{*}{RNS} 
& LastFM      & 83.13 $\pm$ 1.19 & 83.47 $\pm$ 1.06 & 78.40 $\pm$ 0.30 & 81.18 $\pm$ 3.27 & 89.33 $\pm$ 0.06 & 81.38 $\pm$ 1.53 & 82.07 $\pm$ 0.31 & 94.17 $\pm$ 0.10 & 60.40 $\pm$ 3.01 & \textbf{94.42 $\pm$ 0.21} & 93.65 $\pm$  0.62 \\
& Enron       & 78.97 $\pm$ 1.59 & 73.97 $\pm$ 3.00 & 66.67 $\pm$ 1.07 & 78.76 $\pm$ 1.69 & 86.30 $\pm$ 0.56 & 82.61 $\pm$ 0.61 & 75.55 $\pm$ 0.81 & 89.62 $\pm$ 0.27 & 74.61 $\pm$ 1.64 & 89.67 $\pm$ 0.27 & \textbf{93.02 $\pm$ 7.25} \\
& MOOC        & 80.57 $\pm$ 0.52 & 80.50 $\pm$ 0.68 & 85.28 $\pm$ 0.30 & 88.01 $\pm$ 1.48 & 81.32 $\pm$ 0.42 & 82.28 $\pm$ 0.99 & 81.38 $\pm$ 0.17 & 87.05 $\pm$ 0.51 & 64.99 $\pm$ 2.24 & 88.64 $\pm$ 0.08 & \textbf{98.49 $\pm$ 0.48} \\
& Reddit      & 96.43 $\pm$ 0.16 & 95.89 $\pm$ 0.26 & 97.13 $\pm$ 0.04 & 97.41 $\pm$ 0.12 & 98.62 $\pm$ 0.01 & 95.01 $\pm$ 0.18 & 95.24 $\pm$ 0.08 & 98.83 $\pm$ 0.02 & 80.07 $\pm$ 2.53 & 98.97 $\pm$ 0.01 &  \textbf{99.28 $\pm$ 0.03} \\
& Wikipedia   & 94.91 $\pm$ 0.32 & 92.21 $\pm$ 0.29 & 96.26 $\pm$ 0.12 & 97.81 $\pm$ 0.18 & 98.27 $\pm$ 0.02 & 97.48 $\pm$ 0.06 & 96.61 $\pm$ 0.04 & 98.58 $\pm$ 0.01 & 93.58 $\pm$ 0.65 & 98.77 $\pm$ 0.03 & \textbf{99.19 $\pm$ 0.09}\\
& UCI         & 79.73 $\pm$ 1.48 & 58.39 $\pm$ 2.38 & 79.10 $\pm$ 0.49 & 87.81 $\pm$ 1.32 & 92.61 $\pm$ 0.35 & 84.19 $\pm$ 1.67 & 91.17 $\pm$ 0.29 & 94.45 $\pm$ 0.13 & 49.78 $\pm$ 5.02 & \textbf{94.76 $\pm$ 0.19} & 89.12 $\pm$ 1.02\\
& Social Evo. & 91.72 $\pm$ 0.66 & 89.10 $\pm$ 1.90 & 91.47 $\pm$ 0.10 & 90.74 $\pm$ 1.40 & 79.83 $\pm$ 0.14 & 92.51 $\pm$ 0.11 & 91.89 $\pm$ 0.05 & \textbf{93.05 $\pm$ 0.10} & Timeout & 93.13 $\pm$ 0.05 & \textbf{97.56 $\pm$ 0.45} \\
\midrule
\textbf{} & \textbf{Avg. Rank} 
& 7.29 & 9.00 & 8.00 & 6.14 & 5.29 & 6.57 & 6.71 & 2.86 & 10.57 & \textbf{1.71} & 1.86 \\ \midrule

\multirow{7}{*}{INS} 
& LastFM      & 71.37 $\pm$ 3.45 & 69.75 $\pm$ 2.73 & 76.26 $\pm$ 0.34 & 68.47 $\pm$ 6.07 & 71.28 $\pm$ 0.43 & 68.79 $\pm$ 0.93 & 76.27 $\pm$ 0.37 & 75.07 $\pm$ 1.45 & 55.60 $\pm$ 3.91 & 76.76 $\pm$ 0.43 & \textbf{94.08 $\pm$  0.57} \\
& Enron      & 66.99 $\pm$ 1.15 & 62.64 $\pm$ 2.33 & 59.95 $\pm$ 1.00 & 64.51 $\pm$ 1.66 & 60.61 $\pm$ 0.63 & 68.93 $\pm$ 1.34 & 71.71 $\pm$ 1.33 & 67.21 $\pm$ 0.72 & 68.66 $\pm$ 2.31 & 68.77 $\pm$ 0.60 & \textbf{94.56 $\pm$ 5.02} \\
& MOOC        & 64.67 $\pm$ 1.18 & 62.05 $\pm$ 2.11 & 77.43 $\pm$ 0.81 & 76.81 $\pm$ 2.83 & 74.36 $\pm$ 0.78 & 75.95 $\pm$ 1.46 & 73.87 $\pm$ 0.99 & 80.66 $\pm$ 0.94 & 57.49 $\pm$ 1.34 & 80.75 $\pm$ 1.00 & \textbf{98.64 $\pm$ 0.51}\\
& Reddit     & 62.54 $\pm$ 0.52 & 61.07 $\pm$ 0.86 & 63.96 $\pm$ 0.25 & 65.27 $\pm$ 0.57 & 64.10 $\pm$ 0.22 & 61.45 $\pm$ 0.25 & 64.82 $\pm$ 0.30 & 65.03 $\pm$ 1.20 & 78.35 $\pm$ 5.03 & 65.30 $\pm$ 1.05 & \textbf{99.32 $\pm$ 0.03} \\
& Wikipedia  & 68.22 $\pm$ 0.36 & 61.07 $\pm$ 0.82 & 84.19 $\pm$ 0.96 & 81.96 $\pm$ 0.62 & 62.34 $\pm$ 6.79 & 71.46 $\pm$ 4.95 & 87.47 $\pm$ 0.25 & 57.90 $\pm$ 11.05 & 92.61 $\pm$ 0.90 & 71.14 $\pm$ 2.44 & \textbf{99.23 $\pm$ 0.09} \\
& UCI         & 63.57 $\pm$ 2.15 & 52.63 $\pm$ 1.87 & 69.77 $\pm$ 0.43 & 69.94 $\pm$ 0.50 & 63.44 $\pm$ 1.52 & 74.39 $\pm$ 1.81 & 81.40 $\pm$ 0.52 & 70.25 $\pm$ 2.02 & 52.31 $\pm$ 2.67 & 72.17 $\pm$ 2.20  & \textbf{90.34 $\pm$ 0.74}\\
& Social Evo.  & 89.06 $\pm$ 1.23 & 87.30 $\pm$ 1.55 & 94.24 $\pm$ 0.36 & 90.67 $\pm$ 2.41 & 80.30 $\pm$ 0.21 & 95.94 $\pm$ 0.37 & 94.56 $\pm$ 0.24 & 96.73 $\pm$ 0.11 & Timeout & 96.83 $\pm$ 0.56 & \textbf{98.15 $\pm$ 0.27} \\ 
\midrule
\textbf{} & \textbf{Avg. Rank} 
& 7.86 & 9.57 & 6.29 & 6.43 & 8.43 & 5.86 & 4.14 & 5.43 & 7.57 & 3.43 & \textbf{1.00} \\ 
\bottomrule
\end{tabular}
}
\end{table*}

\section{Model Efficiency}
\subsection{ Batch level Subgraph sampling}
{
The discrete update equation involves computing $p(\mL)^{-1}$, which incurs a cost of $\cO(N_\tau^3)$, where $N_\tau$ denotes the number of nodes in $\cG_\tau$. To implement this update efficiently, we operate on a subset of nodes from $\cG_\tau$ whose states are updated, while the remaining node states are kept unchanged. We refer to this subset as the \emph{active batch nodes}. This set includes:
\begin{itemize}
    \item Nodes appearing in interaction events of the form $(u, v, t)$ within the batch.
    \item Neighbors of the nodes selected from these interactions.
\end{itemize}

Neighbor selection depends on the chosen polynomial. For first-order polynomials, we select at most $K$ of the most recent 1-hop neighbors for each node $u$ and $v$ in interaction $(u,v,t)$. For a polynomial of order $m$, we extend this to an $m$-hop neighborhood. All nodes in this $m$-hop ego network are enumerated, and at most $K$ neighbors are chosen based on temporal proximity, using the most recent timestamp along the path. For example, if a node $w$ is connected to $u$ via $v$ through
\[
u \xrightarrow{t_1} v \xrightarrow{t_2} w,
\]
then the time associated with $w$ when sampling neighbors for interaction $(u, v, t)$ is computed as
\[
t_w = (t - t_1) + (t - t_2).
\]

The number of active batch nodes $N_B$ for a batch of length $B$ satisfies
\[
N_B \le 2 B K \ll N_\tau,
\]
resulting in a substantial reduction in update cost.
}

\subsection{Additional experiments on Runtime and Convergence} \label{sec:run_timeanalysis1}
In Table~\ref{tab:model-total-time-ctdgssm}  we present the total training time, obtained as the product of the per-epoch time and the number of training epochs. In , we analyze the convergence behavior of proposed algorithm where we show the training loss across epochs for multiple datasets. The plots clearly show that the proposed model converges within only a few epochs highlighting its computational efficiency.

\begin{table}
\centering
\caption{Number of epochs and total time (minutes) across datasets.}
\begin{tabular}{l|cc|cc|cc}
\hline
\textbf{Models} 
& \multicolumn{2}{c|}{\textbf{Enron}} 
& \multicolumn{2}{c|}{\textbf{UCI}} 
& \multicolumn{2}{c}{\textbf{Reddit}} \\
& \textbf{\#Epoch} & \textbf{T$_{tot}$} 
& \textbf{\#Epoch} & \textbf{T$_{tot}$} 
& \textbf{\#Epoch} & \textbf{T$_{tot}$} \\
\hline
CTAN        & 173.00 & 86.50  & 236.00 & 89.68  & 327.18 & 173.41 \\
DyGFormer   & 32.80  & 89.54  & 34.80  & 21.58  & 24.60  & 104.30 \\
DyGMamba    & 33.00  & 67.65  & 28.00  & 16.80  & 26.80  & 88.98 \\
CTDG-SSM    & 83.00  & \textbf{45.65}  & 38.00  & \textbf{6.46}   & 27.00  & \textbf{52.65} \\
\hline
\end{tabular}
\label{tab:model-total-time-ctdgssm}
\end{table} 

\begin{figure}
    \centering
    \includegraphics[width=0.5\columnwidth]{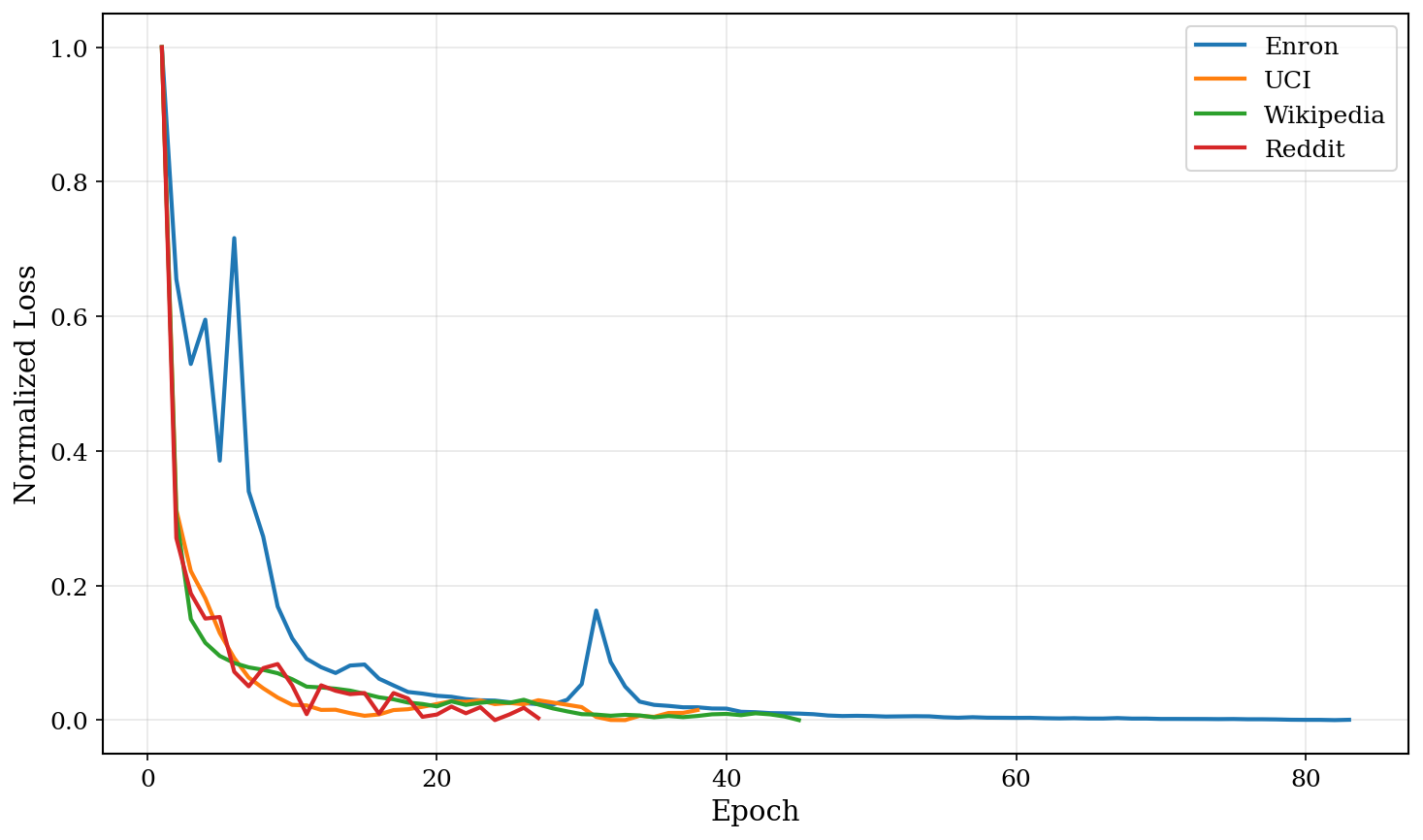}
    \caption{Convergence behavior of \texttt{CTDG-SSM} across multiple datasets}
    \label{fig:Loss-conver-enron}
\end{figure}

\begin{table}
\centering
\caption{Ablation study with respect to the order of the graph filter.}
\small
\begin{tabular}{lccc}
\toprule
\textbf{Prediction Node} 
& ${p(\mL)=\mI}$ 
& ${p(\mL)= \alpha_0\mI + \alpha_1\mL}$ 
& ${p(\mL)= \alpha_0 \mI + \alpha_1 \mL + \alpha_2 \mL^2}$ \\
\midrule
First       & 1.00  $\pm$ 0.00                  & 1.00 $\pm$ 0.00                   & 1.00 $\pm$ 0.00 \\
Second      & 0.51 $\pm$ 0.06         & 0.97 $\pm$ 0.02         & 1.00 $\pm$ 0.00 \\
Third       & 0.47 $\pm$ 0.02         & 0.96 $\pm$ 0.01         & 1.00 $\pm$ 0.00 \\
Second-Last & 0.46 $\pm$ 0.01         & 0.90 $\pm$ 0.01         & 0.92 $\pm$ 0.07 \\
Last        & 0.45 $\pm$ 0.02         & 0.88 $\pm$ 0.18         & 0.90 $\pm$ 0.06 \\
\bottomrule
\end{tabular}
\label{tab:poly_P_L_results}
\end{table}
{
\subsection{ Robustness to Structural Perturbations}

We evaluate the robustness of the proposed method to structural perturbations on the Enron dataset, using link prediction as the downstream task. In particular, we introduce the perturbations to the true graph as  
$
\bar{\mL}_{B}[k] = \mL_{B}[k] + \epsilon \Delta \mL_{B}[k],
$
where $\Delta \mL_{B}[k]$ is a perturbation matrix whose entries are sampled from a normal distribution, i.e., $[\Delta \mL_{B}]_{ij} \sim \mathcal{N}(0,1)$ and $\epsilon$ controls the noise level.

We then evaluate the proposed algorithm by replacing $\mL_{B}[k]$ with $\bar{\mL}_{B}[k]$ under different values of $\epsilon$, thereby varying the severity of the structural perturbation. In Fig.~\ref{fig:mooc_delta_t}(a), we report the accuracy across these noise levels. As expected, accuracy decreases as the noise variance increases; however, for small values of $\epsilon$, the model performs very close to the noise-free setting. This demonstrates that the proposed approach is stable and robust under mild structural perturbations.

{
\subsection{Ablation study: Importance of graph filters and CTDG-SSM module}

We present an ablation study to understand which components of the model capture long-range information. Specifically, we analyze the role of graph filters and the proposed \texttt{CTDG-SSM} module on the long-range spatial ($\texttt{LRS}$) task and the long-range temporal ($\texttt{LRT}$) task.

To evaluate the $\texttt{LRT}$ capabilities of CTDG-SSM, we evaluate model with filter order as $1$ i.e $p(\mL) =\mathbf{I}$. For an event of the form $(u,v,t,x_u,x_v)$ in sequence classification, instead of restricting the model to update only a small subset of nodes (i.e., those in the batch subgraph), we update the state vectors of all nodes. Formally, we define the input signal at time $t$ as 
$
\mX(t) \in \mathbb{R}^{N _\tau \times 1} \quad \text{such that} \quad \mX [u] (t) = x _u, \mX [v] (t) = x _v, \text{and 0 otherwise}.
$
This eliminates the step where the previous states of inactive nodes are carried forward through memory. This carry-forward mechanism could aid the model in LRT, so by removing it, we can evaluate the LRT capability of CTDG-SSM in isolation.
In this setup, the model is tasked with predicting the initial feature $\mX [0]$ observed at the first node using the final state vectors of different nodes. A successful LRT would yield strong performance as long as the state vector of node 1 preserves the information of the feature $\mX [0]$. Notably, this model completely lacks LRS capability, as it does not account for the underlying graph structure and updates states solely based on the input at the corresponding nodes. Next, we evaluate the effect of aggregating multi-hop information by applying graph filters of different orders. In Table~\ref{tab:poly_P_L_results}, we report the mean accuracies obtained from representations at different nodes using filter orders 1and 
2. We observe a substantial improvement in prediction accuracy by leveraging the representations from the node 
$2,\ldots,31 \text{(the last node)}$, demonstrating the model’s enhanced ability to preserve spatial information over longer ranges. In particular, using deeper aggregation-i.e., a filter of order 
2-yields a notable gain in accuracy, indicating that incorporating information from larger hop neighborhoods significantly strengthens the model’s capacity to capture long-range spatial dependencies. 
\subsection{Ablation study: Impact of default time}
In CTDG-SSM, during the link prediction task, the final link probability is computed using the node state vectors together with \(\Delta t\) the time elapsed since the last occurrence of the queried link. When a link that has never occurred before is queried, the model assigns a large default value to \(\Delta t\), following standard practice. In Fig.~\ref{fig:mooc_delta_t}(b), we evaluate the model’s performance across different choices of this default time value. The results show that performance remains consistent over a wide range of default \(\Delta t\) values, indicating that the model does not rely solely on large \(\Delta t\) values to generate final prediction probabilities.

\begin{figure}[htbp]
     \centering
     \begin{subfigure}[b]{0.45\textwidth}
         \centering
         \includegraphics[width=\textwidth]{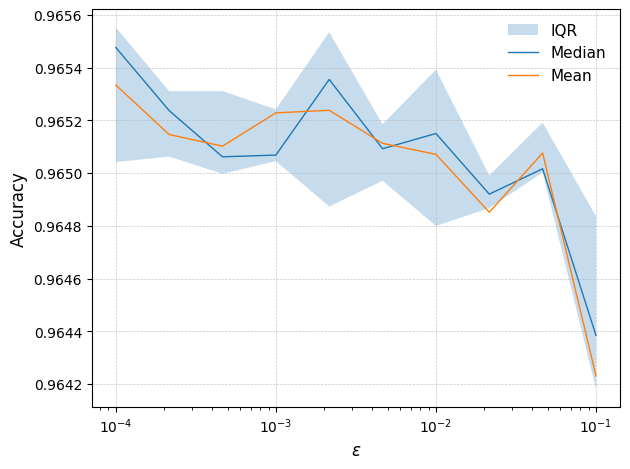}
         \caption{Robustness to $\epsilon$-noise (Enron).}
     \end{subfigure}
     \hfill
     \begin{subfigure}[b]{0.45\textwidth}
         \centering
         \includegraphics[width=\textwidth]{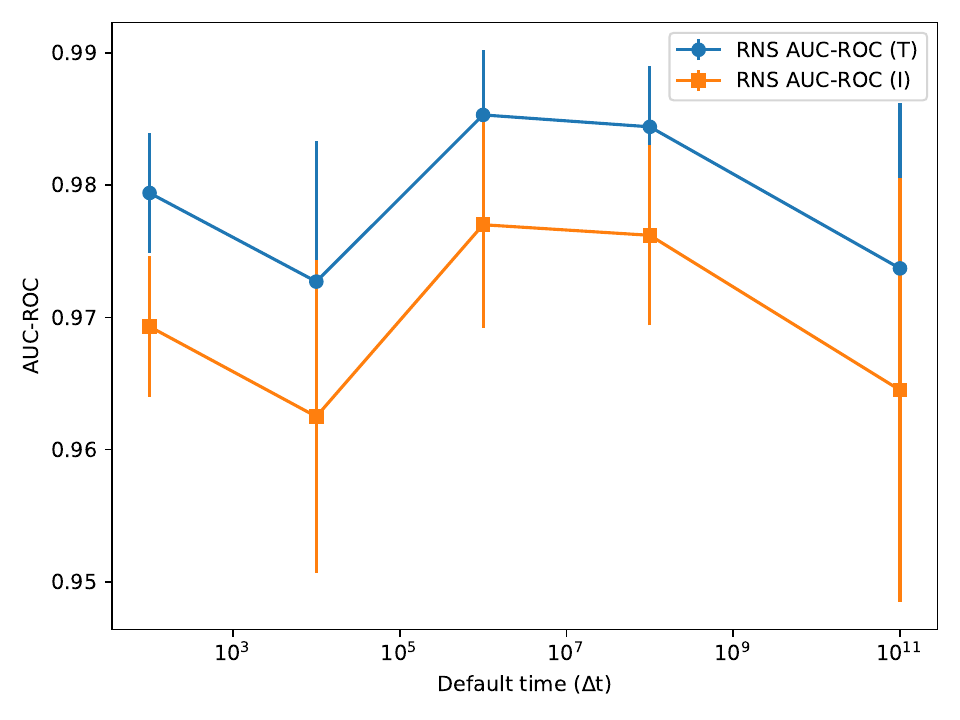}
         \caption{Sensitivity to $\Delta t$ (MOOC).}
     \end{subfigure}
        \caption{\textbf{Extended Robustness and Ablation Studies.} Left: Accuracy of CTDG-SSM on the Enron dataset under noise injection of the form $\mathbf{L}_B[k]+\epsilon\frac{\Delta \mathbf{L}}{\|\Delta\mathbf{L}\|_2}$, demonstrating model stability even under significant perturbation. Right: AUC-ROC on the MOOC dataset across transductive and inductive settings, showing that the model is relatively insensitive to the choice of the default time interval $\Delta t$ for unseen links.}
        \label{fig:mooc_delta_t}
\end{figure}
\subsection{Ablation Study: Impact of Static Embeddings, Normalization, and Residual Connections}
\label{app:ablation_study}

In this section, we present a systematic ablation study analyzing the impact of removing static embeddings (SE), Root Mean Square (RMS) normalization, and residual connections. Specifically, we evaluate three model variants against our proposed architecture:
\begin{itemize}
    \item \texttt{CTDG-SSM-No-SE}: Disables static embeddings.
    \item \texttt{CTDG-SSM-No-RMS}: Excludes all normalization layers while retaining other modules.
    \item \texttt{CTDG-SSM-No-Res}: Omits residual connections.
    \item \texttt{CTDG-SSM}: The proposed full architecture.
\end{itemize}

Notably, while the choices of residual connections and normalization are adapted from existing non-graph Mamba architectures to ensure training stability, their removal results in relatively marginal performance shifts across most datasets. 

The most significant performance degradation occurs upon the removal of static embeddings. This drop is particularly evident on the Enron dataset ($0.9401 \to 0.9167$) and the UCI dataset ($0.8814 \to 0.8701$), highlighting the critical importance of static embeddings for datasets that lack inherent node or edge features. Conversely, as detailed in Table~\ref{tab:hyperparameter_details}, static embeddings are omitted for datasets where rich node or edge features are natively available (\textit{e.g.}, Wiki, Reddit). As shown in Table~\ref{tab:ablation_study_se}, the core state-space evolution mechanism remains the primary driver of the model's predictive power and structural robustness, maintaining strong performance even when auxiliary modules are omitted.

\subsection{Ablation Study: Impact of Batch size}
Since the batch size $B$ is a critical training hyperparameter, we conduct a sensitivity analysis evaluating the proposed model across a range of batch choices, specifically $B \in \{64, 128, 200\}$, across four benchmark datasets. The choice of batch size heavily influences temporal granularity and computational complexity; specifically, excessively large batch sizes can lose fine-grained chronological patterns in event streams by grouping too many interactions together.

As presented in Table~\ref{tab:batch_size_sensitivity}, our model exhibits strong empirical robustness and stability across different batch configurations, with overall predictive performance remaining high and consistently competitive. Across all four benchmark datasets, changing the batch size $B$ from 64 to 200 results in only marginal variations in AUC. Crucially, the performance metrics at a standard batch size of $B=128$ are nearly identical to those achieved at $B=200$. This sensitivity analysis demonstrates that the model's predictive power is highly stable with respect to this hyperparameter, confirming that our core architectural benefits and state-of-the-art comparative claims remain robust and virtually unchanged regardless of the specific batch size selected for deployment.

\begin{table}[t]
    \centering
    \caption{Ablation and sensitivity analysis for the proposed CTDG-SSM model.}
    \label{tab:combined_experiments}
    \vspace{0.5em}
    
    \begin{subtable}[b]{0.48\textwidth}
        \centering
                \caption{Ablation study of CTDG-SSM components across three benchmark datasets. We evaluate the impact of Static Embeddings (SE), RMS Normalization (RMS), and Residual Connections (Res) and present the mean AUC-ROC of 5 runs with \textit{inductive} random negative sampling.}
        \begin{tabular}{lccc}
            \toprule
            \textbf{Model Variant} & \textbf{Enron} & \textbf{UCI} & \textbf{Wiki} \\
            \midrule
            CTDG-SSM-No-SE  & 0.9167 & 0.8701 & --- \\
            CTDG-SSM-No-RMS & 0.9396 & 0.8838 & 0.9883 \\
            CTDG-SSM-No-Res & 0.9337 & 0.8743 & \textbf{0.9930} \\
            \textbf{CTDG-SSM} & \textbf{0.9401} & \textbf{0.8814} & 0.9900 \\
            \bottomrule
        \end{tabular}
        \label{tab:ablation_study_se}
    \end{subtable}
    \hfill 
    \begin{subtable}[b]{0.48\textwidth}
        \centering
                        \caption{Sensitivity analysis of Batch Size on Model Performance (AUC). Results demonstrate the robustness of the CTDG-SSM architecture across varying computational constraints.}
        \begin{tabular}{lccc}
            \toprule
            \textbf{Dataset} & \textbf{64} & \textbf{128} & \textbf{200} \\
            \midrule
            Wiki   & 0.9905 & \textbf{0.9933} & 0.9931 \\
            UCI    & 0.8416 & 0.8724 & \textbf{0.8796} \\
            Enron  & 0.9261 & 0.9369 & \textbf{0.9385} \\
            Reddit & 0.9906 & 0.9915 & \textbf{0.9919} \\
            \bottomrule
        \end{tabular}
        \label{tab:batch_size_sensitivity}
    \end{subtable}
\end{table}

{
\section{Additional experiments}

In this section, we present results on additional temporal datasets-Flights, Contacts, UN Trade, UN Vote, and CanParl~\citep{DYGFORMER}-using link prediction as the downstream task. We further compare the proposed method with several state-of-the-art approaches, including \texttt{Edgebank}, \texttt{DyG-Mamba}~\citep{dyg-mamba}, and \texttt{FreeDyG}~\citep{freedyg}.

In Tables~\ref{tab:Ap_transductive}, \ref{tab:Auc_roc transductive}, \ref{tab:AP inductive setting}, and \ref{tab:Auc_roc inductive setting} we compare  the performance of proposed model against the state of the art methods with across these datasets. It is clear that the proposed model consistently outperforms competing methods on most datasets, which we attribute to its ability to jointly model structural and temporal evolution through graph filters and state-space dynamics.

Additionally, in Tables~\ref{tab:freedyg under transductive setting} and \ref{tab:freedyg-only under inductive setting}, we provide direct comparisons against \texttt{Edgebank}, \texttt{DyG-Mamba}, and \texttt{FreeDyG}. The results clearly demonstrate that our model consistently achieves superior performance across both transductive and inductive settings.
}

\begin{table}
\centering
\caption{Performance comparison with AP on dynamic link prediction under transductive setting. }
\resizebox{0.6\columnwidth}{!}{
\begin{tabular}{llcccc}
\toprule
\textbf{NSS} & \textbf{Datasets} &\texttt{Edgebank}& \texttt{DyG-Mamba}& \texttt{FreeDyG} &\texttt{CTDG-SSM} \\
\midrule
\multirow{7}{*}{rnd}
& Wiki     & 90.37$\pm$0.00 &99.08 $\pm$ 0.09 & 99.26 $\pm$ 0.01 & \textbf{99.40 $\pm$ 0.00} \\
& Reddit   &94.86$\pm$0.00  &99.27 $\pm$ 0.00 & 99.48 $\pm$ 0.01 & \textbf{99.53 $\pm$ 0.00}\\
& MOOC      &57.97$\pm$0.00  &90.25 $\pm$ 0.01 & 89.61 $\pm$ 0.10 & \textbf{98.85 $\pm$ 0.00}\\
& LastFM   &79.29$\pm$0.00 &  \textbf{94.23 $\pm$ 0.01} & 92.15 $\pm$ 0.16 &93.40$\pm$0.49\\
& Enron      &83.53 $\pm$ 0.00 &93.14 $\pm$ 0.08 & 92.51 $\pm$ 0.05 & \textbf{94.46 $\pm$ 4.73} \\
& Social Evo. &74.95 $\pm$ 0.00 &94.77 $\pm$ 0.01 & 94.91 $\pm$ 0.01 & \textbf{98.65 $\pm$ 0.65} \\
& UCI     &76.20 $\pm$ 0.00 &96.14 $\pm$ 0.14   & \textbf{96.28 $\pm$ 0.11} & 90.18 $\pm$0.98 
\\
\midrule
\multirow{7}{*}{hist}
& Wiki       &73.35 $\pm$ 0.00 &82.35 $\pm$1.25 & 91.59 $\pm$ 0.57 &\textbf{98.99 $\pm$ 0.32} \\
& Reddit      & 73.59 $\pm$ 0.00 &81.02 $\pm$ 0.19 & 85.67 $\pm$ 1.01 & \textbf{97.55 $\pm$ 0.22} \\
& MOOC        &60.71 $\pm$ 0.00 &87.42 $\pm$ 1.57 & 86.71 $\pm$ 0.81 & \textbf{94.76 $\pm$ 1.76}\\
& LastFM      &73.03 $\pm$ 0.00 &84.08 $\pm$ 0.45 & 79.71 $\pm$ 0.51 & \textbf{88.91 $\pm$ 0.93} \\
& Enron      &76.53 $\pm$ 0.00 &77.85 $\pm$ 1.20& 78.87 $\pm$ 0.82  & \textbf{95.80 $\pm$ 3.33} \\
& Social Evo. &80.57 $\pm$ 0.00 &97.35 $\pm$ 0.18 & 77.79 $\pm$ 0.23 & \textbf{98.20 $\pm$ 0.81} \\
& UCI        &65.50 $\pm$ 0.00 &81.36 $\pm$ 0.14 & 86.10 $\pm$ 1.19 & \textbf{88.87 $\pm$ 1.28} \\
\midrule
\multirow{7}{*}{ind}
& Wiki    &80.63 $\pm$ 0.00    &87.06 $\pm$ 0.86 & 90.05 $\pm$ 0.79 & \textbf{99.45 $\pm$ 0.06}\\
& Reddit   &85.48 $\pm$0.00 &91.77 $\pm$0.46  & 90.74 $\pm$ 0.17 & \textbf{99.58 $\pm$ 0.02}\\
& MOOC     &49.43 $\pm$0.00 &81.19 $\pm$ 2.02  & 83.01 $\pm$ 0.87 & \textbf{99.03 $\pm$ 0.38}\\
& LastFM    &75.49 $\pm$ 0.00 &75.05 $\pm$ 0.40 & 72.19 $\pm$ 0.24 &\textbf{93.81 $\pm$ 0.44} \\
& Enron    &73.89 $\pm$ 0.00  &77.46 $\pm$ 0.90 & 77.81 $\pm$ 0.65 & \textbf{95.81 $\pm$ 2.99} \\
& Social Evo. & 83.69$\pm$0.00 &97.78 $\pm$ 0.15 & 97.50 $\pm$ 0.15 & \textbf{98.88 $\pm$ 0.63} \\
& UCI      &57.43 $\pm$0.00  &77.75 $\pm$ 1.56 & 82.35 $\pm$ 0.73 &\textbf{91.44 $\pm$0.50} \\
\bottomrule
\end{tabular}}
\label{tab:freedyg under transductive setting}
\end{table}

\begin{table}
\centering
\caption{ AP for  dynamic link prediction under transductive setting}
\small
\resizebox{\columnwidth}{!}{
\begin{tabular}{l l c c c c c c c c c c c}

\toprule
\textbf{NSS} & \textbf{Dataset} & \texttt{JODIE} & \texttt{DyRep} & \texttt{TGAT} &
\texttt{TGN} & \texttt{CAWN} & \texttt{Edgebank} & \texttt{TCL} &
\texttt{GraphMixer} & \texttt{DyGFormer} & \texttt{DyGMamba} & \texttt{CTDG-SSM} \\
\midrule

\multirow{6}{*}{rnd}
& Flights  & 95.60$\pm$1.73 & 95.29±0.72 & 94.03±0.18 & 97.95±0.14 & 98.51±0.01 & 89.35±0.00 & 91.23±0.02 & 90.99±0.05 & 98.91±0.01  & \textbf{98.95±0.05} & 98.70±0.05 \\
& Can. Parl.   & 69.26 $\pm$ o.31 & 66.54 $\pm$2.76 & 70.73 $\pm$0.72 & 70.88 $\pm$ 2.34& 69.82 $\pm$2.34 & 64.55 $\pm$0.00 & 68.67 $\pm$2.67 & 77.04 $\pm$0.46 & 97.36±0.45 & \textbf{99.57±0.08} & 98.20 $\pm$1.73 \\
& US Legis.    & 75.05 $\pm$ 1.52 & 75.34 $\pm$ 0.39 & 68.52$\pm$3.16 & 75.99 $\pm$ 0.58 & 70.58 $\pm$ 0.48& 58.39 $\pm$ 0.00 & 69.59 $\pm$ 0.48 &  70.74 $\pm$ 1.02 & 71.11 $\pm$ 0.59 & 71.75 $\pm$ 0.26 & \textbf{82.51 $\pm$ 0.00} \\
& UN Trade     & 64.94 $\pm$ 0.31 & 63.21 $\pm$ 0.93 & 61.47 $\pm$ 0.18 & 65.03 $\pm$ 1.37 & 65.39 $\pm$ 0.12 & 60.41 $\pm$ 0.00 & 62.21 $\pm$ 0.03 & 62.21 $\pm$ 0.27 & 66.46 $\pm$ 1.29 & 67.50$\pm$0.14 & \textbf{69.10$\pm$0.20} \\
& UN Vote      & 63.91 $\pm$ 0.81 & 62.81$\pm$0.80 & 52.21 $\pm$ 0.98 & 65.72 $\pm$ 2.17& 52.84 $\pm$ 0.10 & 58.49 $\pm$ 0.00 & 51.90 $\pm$ 0.30 & 52.11 $\pm$ 0.16 & 55.55$\pm$0.42 & 56.39$\pm$0.18 & \textbf{95.31 $\pm$ 0.01} \\
& Contact      & 95.31$\pm$1.33  & 95.98$\pm0.15$ & 96.28$\pm0.09$ & 96.89$\pm$0.56 & 90.26$\pm$0.28 & 92.58$\pm$0.00 & 92.44$\pm$0.12 & 91.92$\pm$0.03 & 98.29$\pm$0.01 & 98.43$\pm$0.12 & \textbf{98.90 $\pm$ 0.05} \\
\midrule

\multirow{6}{*}{hist}
& Flights      & 66.48$\pm$2.59 & 67.61$\pm$0.99 & 72.38$\pm$ 0.18& 66.70$\pm$1.64 & 64.72$\pm$0.97 & 70.53$\pm$0.00 & 70.68$\pm$ 0.24& 71.47$\pm$0.26 & 66.59±0.49 & 67.80±2.17 &  \textbf{87.2 $\pm$ 1.50}\\
& Can. Parl.   & 51.79$\pm$0.63 & 63.31$\pm$1.23 & 67.13$\pm$0.84 & 68.42$\pm$3.07 & 66.53$\pm$2.77 & 63.84$\pm$0.00&65.93 $\pm$3.00 &74.34 $\pm$0.87 & 97.00$\pm$0.31 & \textbf{99.77$\pm$1.00} & 97.8 $\pm$ 1.24 \\
& US Legis.    & 51.71 $\pm$5.76 & 86.88 $\pm$  2.25 & 62.14 $\pm$ 6.60 & 74.00 $\pm$7.57 & 68.82 $\pm$ 8.23 & 63.22$\pm$0.00 & 80.53 $\pm$ 3.95 & 81.65 $\pm$ 1.02 & 85.30$\pm$3.88 & \textbf{86.12$\pm$0.26} & 80.02 $\pm$ 0.00 \\
& UN Trade     & 61.39 $\pm$ 1.83 & 59.19 $\pm$ 0.17 & 55.74 $\pm$ 0.91 & 58.44 $\pm$ 5.51 & 55.71 $\pm$ 0.38 & 81.32 $\pm$ 0.00 & 55.90 $\pm$ 1.17 & 57.05$\pm$ 1.22 & 64.41$\pm$1.40 & 66.10$\pm$1.02 & \textbf{68.4 $\pm$ 0.04} \\
& UN Vote      & 70.02 $\pm$ 0.81& 69.30$\pm$ 1.12 & 52.96 $\pm$ 2.14 & 69.37 $\pm$ 3.93 & 51.26 $\pm$ 0.04 & 84.89 $\pm$ 0.00 & 52.30 $\pm$ 2.35 & 51.20 $\pm$ 1.60 & 60.84$\pm$1.58 & 61.07$\pm$1.39 &  \textbf{95.29$\pm$0.01}\\
& Contact      & 95.31 $\pm$ 2.13 & 96.39 $\pm$ 0.20 & 96.05 $\pm$ 0.52 & 93.05 $\pm$ 2.35 & 84.16 $\pm$ 0.49 & 88.81 $\pm$ 0.00 & 93.86 $\pm$ 0.01 & 93.36$\pm$ 0.41 & 97.57 $\pm$ 0.06 & 97.61$\pm$0.04 & \textbf{98.2 $\pm$ 0.05}\\
\midrule

\multirow{6}{*}{ind}
& Flights      & 69.07 $\pm$ 4.02 & 70.57 $\pm$ 1.82 & 75.48 $\pm$ 0.26 & 71.09 $\pm$ 2.72 & 69.18$\pm$ 1.52 & 81.08 $\pm$ 0.00 & 74.62 $\pm$ 0.18 & 74.87 $\pm$ 0.21 & 70.92$\pm$1.78 & 73.79$\pm$5.69 & \textbf{86.50$\pm$1.34} \\
& Can. Parl.   & 48.42 $\pm$ 0.66 & 58.61$\pm$ 0.86 & 68.82 $\pm$ 1.21 & 65.34$\pm$2.87 & 67.75$\pm$1.00 & 62.16$\pm$0.00 & 65.85 $\pm$ 1.75& 69.48 $\pm$ 0.63 & \textbf{95.44 $\pm$ 0.57} & 94.87±0.67 & 94.2 $\pm$ 0.50 \\
& US Legis.    & 50.27$\pm$5.13 & 83.44$\pm$1.16 & 61.91 $\pm$ 5.82 & 67.57$\pm$6.47 & 65.81$\pm$ 8.52 & 65.74 $\pm$ 0.00& 78.15 $\pm$ 3.34 & 79.63 $\pm$ 0.84 & 81.25$\pm$3.62 & 81.22±1.34 & \textbf{81.32 $\pm$ 0.00} \\
& UN Trade     & 60.42 $\pm$ 1.48 & 60.19 $\pm$ 1.24 & 60.61 $\pm$ 1.24 & 61.04 $\pm$ 6.01 & 62.54 $\pm$ 0.67 & 72.97$\pm$0.00 & 61.06$\pm$1.74 & 60.15 $\pm$1.29 & 55.79$\pm$1.02 & 58.89$\pm$0.59 & \textbf{67.92 $\pm$ 0.5} \\
& UN Vote      & 67.79 $\pm$ 1.46 & 67.53 $\pm$ 1.98& 52.89 $\pm$ 1.61 & 67.63 $\pm$ 2.67 & 52.19 $\pm$ 0.34 & 66.30 $\pm$ 0.00 & 50.62 $\pm$ 0.82 & 51.60 $\pm$ 0.73 & 51.91$\pm$0.84 & 52.24$\pm$0.95 & \textbf{95.37 $\pm$ 0.01} \\
& Contact      & 93.43 $\pm$ 1.78 & 94.18 $\pm$ 0.10 & 94.35$\pm$ 0.48 & 90.18 $\pm$ 3.28 & 89.31 $\pm$ 0.27 & 85.20 $\pm$ 0.00 & 91.35 $\pm$ 0.21 & 90.87 $\pm$ 0.35 & 94.75$\pm$0.28 & 95.43$\pm$0.17 & \textbf{97.60 $\pm$ 0.32} \\
\bottomrule
\end{tabular}
}
\label{tab:Ap_transductive}
\end{table}

\begin{table}
\centering
\caption{AUC-ROC for dynamic link prediction under transductive setting}

\resizebox{\columnwidth}{!}{
\begin{tabular}{llccccccccccc}
\toprule
\textbf{NSS} & \textbf{Dataset} & \texttt{JODIE} & \texttt{DyRep} & \texttt{TGAT} & \texttt{TGN} & \texttt{CAWN} & \texttt{EdgeBank} & \texttt{TCL} & \texttt{GraphMixer} & \texttt{DyGFormer} & \texttt{DyGMamba} & \texttt{CTDG-SSM} \\
\midrule

\multirow{6}{*}{rnd}
& Flights      & 96.21 $\pm$ 1.42 & 95.95 $\pm$ 0.62 & 94.13 $\pm$ 0.17 & 98.22 $\pm$ 0.13 & 98.45 $\pm$ 0.01 & 90.23 $\pm$ 0.00 & 91.21 $\pm$ 0.02 & 91.13 $\pm$ 0.01 & 98.93 $\pm$ 0.01 & \textbf{98.98 $\pm$ 0.05} & 98.53 $\pm$ 0.02 \\
& Can. Parl.   & 78.21 $\pm$ 0.23 & 73.35 $\pm$ 3.67 & 75.69 $\pm$ 0.78 & 76.99 $\pm$ 1.80 & 75.70 $\pm$ 3.27 & 64.14 $\pm$ 0.00 & 72.46 $\pm$ 3.23 & 83.17 $\pm$ 0.53 & 97.76 $\pm$ 0.41 &\textbf{99.69 $\pm$ 0.06} & 98.1 $\pm$ 0.80\\
& US Legis.    & 82.85 $\pm$ 1.07 & 82.28 $\pm$ 0.32 & 75.84 $\pm$ 1.99 & 83.34 $\pm$ 0.43 & 77.16 $\pm$ 0.39 & 62.57 $\pm$ 0.00 & 76.27 $\pm$ 0.63 & 76.96 $\pm$ 0.79 & 77.90 $\pm$ 0.58 &79.03 $\pm$ 0.26 & \textbf{85.92 $\pm$ 0.00} \\
& UN Trade     & 69.62 $\pm$ 0.44 & 67.44 $\pm$ 0.83 & 64.01 $\pm$ 0.12 & 69.10 $\pm$ 1.67 & 68.54 $\pm$ 0.18 & 66.75 $\pm$ 0.00 & 64.72 $\pm$ 0.05 & 65.52 $\pm$ 0.51 & 70.20 $\pm$ 1.44 & 71.41 $\pm$  0.21 & \textbf{73.76 $\pm$ 0.30} \\
& UN Vote      & 68.53 $\pm$ 0.95 & 67.18 $\pm$ 1.04 & 52.83 $\pm$ 1.12 & 68.71 $\pm$ 2.65 & 53.09 $\pm$ 0.22 & 62.97 $\pm$ 0.00 & 51.88 $\pm$ 0.36 & 52.46 $\pm$ 0.27 & 57.12 $\pm$ 0.62 &58.48 $\pm$ 0.12  & \textbf{97.41 $\pm$ 0.00}\\
& Contact      & 96.66 $\pm$ 0.89 & 96.48 $\pm$ 0.14 & 96.95 $\pm$ 0.08 & 97.54 $\pm$ 0.35 & 89.99 $\pm$ 0.34 & 94.34 $\pm$ 0.00 & 94.15 $\pm$ 0.09 & 93.94 $\pm$ 0.02 & 98.53 $\pm$ 0.01 & 98.68 $\pm$ 0.02 &  \textbf{98.70 $\pm$ 0.01}\\
\midrule

\multirow{6}{*}{hist}
& Flights      & 68.97 $\pm$ 1.87 & 69.43 $\pm$ 0.90 & 72.20 $\pm$ 0.16 & 68.39 $\pm$ 0.95 & 66.11 $\pm$ 0.71 & 74.64 $\pm$ 0.00 & 70.57 $\pm$ 3.01 & 70.37 $\pm$ 0.23 & 68.09 $\pm$ 0.43 & 68.98 $\pm$ 1.81 & 90.1 $\pm$ 1.50\\
& Can. Parl.   & 62.44 $\pm$ 1.11 & 70.16 $\pm$ 1.70 & 70.86 $\pm$ 0.94 & 73.23 $\pm$ 3.08 & 72.06 $\pm$ 3.94 & 63.04 $\pm$ 0.00 & 69.95 $\pm$ 3.70 & 79.03 $\pm$ 1.01 & 97.61 $\pm$ 0.40 & \textbf{99.82 $\pm$ 0.10} & 97.5 $\pm$ 0.05 \\
& US Legis.    & 67.47 $\pm$ 6.40 & \textbf{91.44 $\pm$ 1.18} & 73.47 $\pm$ 5.25 & 83.53 $\pm$ 4.53 & 78.62 $\pm$ 7.46 & 67.41 $\pm$ 0.00 & 83.97 $\pm$ 3.71 & 85.17 $\pm$ 0.70 & {90.77 $\pm$ 1.96} & 88.36 $\pm$ 1.78 & 85.55 $\pm$ 0.00\\
& UN Trade     & 68.92 $\pm$ 1.40 & 64.36 $\pm$ 1.40 & 60.37 $\pm$ 0.68 & 63.93 $\pm$ 5.41 & 63.09 $\pm$ 0.74 & \textbf{86.61 $\pm$ 0.00} & 61.43 $\pm$ 1.04 & 63.20 $\pm$ 1.54 & 73.86 $\pm$ 1.13 & 74.10 $\pm$ 2.02 & 73.3 $\pm$ 0.7\\
& UN Vote      & 76.84 $\pm$ 1.01 & 74.72 $\pm$ 1.43 & 53.95 $\pm$ 3.15 & 73.40 $\pm$ 5.20 & 51.27 $\pm$ 0.33 & 89.62 $\pm$ 0.00 & 52.29 $\pm$ 2.39 & 52.61 $\pm$ 1.44 & 64.27 $\pm$ 1.78 &65.17 $\pm$ 1.24 & \textbf{97.22 $\pm$ 0.00} \\
& Contact      & 96.35 $\pm$ 0.92 & 96.00 $\pm$ 0.23 & 95.39 $\pm$ 0.43 & 93.76 $\pm$ 1.29 & 83.06 $\pm$ 0.32 & 92.17 $\pm$ 0.00 & 93.34 $\pm$ 0.19 & 94.14 $\pm$ 0.34 & 97.17 $\pm$ 0.05 & 97.27 $\pm$ 0.06 & \textbf{97.78 $\pm$ 0.04} \\
\midrule

\multirow{6}{*}{ind}
& Flights      & 69.99 $\pm$ 3.10 & 71.13 $\pm$ 1.55 & 73.47 $\pm$ 0.18 & 71.63 $\pm$ 1.72 & 69.70 $\pm$ 0.75 & 81.10 $\pm$ 0.00 & 72.54 $\pm$ 0.19 & 72.21 $\pm$ 0.21 & 69.53 $\pm$ 1.17 & 71.16 $\pm$ 3.24 & \textbf{89.25 $\pm$ 1.24}  \\
& Can. Parl.   & 52.88 $\pm$ 0.80 & 63.53 $\pm$ 0.65 & 72.47 $\pm$ 1.18 & 69.57 $\pm$ 2.81 & 72.93 $\pm$ 1.78 & 61.41 $\pm$ 0.00 & 69.47 $\pm$ 2.12 & 70.52 $\pm$ 0.94 & 96.70 $\pm$ 0.59 & \textbf{99.82 $\pm$ 0.10} & 96.87 $\pm$ 0.05 \\
& US Legis.    & 59.05 $\pm$ 5.52 & 89.44 $\pm$ 0.71 & 71.62 $\pm$ 5.42 & 78.12 $\pm$ 4.46 & 76.45 $\pm$ 7.02 & 68.66 $\pm$ 0.00 & 82.54 $\pm$ 3.91 & 84.22 $\pm$ 0.91 & \textbf{87.96 $\pm$ 1.80} & 86.08 $\pm$ 2.27 & 86.06 $\pm$ 0.00 \\
& UN Trade     & 66.82 $\pm$ 1.27 & 65.60 $\pm$ 1.28 & 66.13 $\pm$ 0.78 & 66.37 $\pm$ 5.39 & 71.73 $\pm$ 0.74 & 74.20 $\pm$ 0.00 & 67.80 $\pm$ 1.21 & 66.53 $\pm$ 1.22 & 62.56 $\pm$ 1.51 &67.60 $\pm$ 0.64  & 72.65 $\pm$0.45\\
& UN Vote      & 73.73 $\pm$ 1.61 & 72.80 $\pm$ 2.16 & 53.04 $\pm$ 2.58 & 72.69 $\pm$ 3.72 & 52.75 $\pm$ 0.90 & 72.85 $\pm$ 0.00 & 52.02 $\pm$ 1.22 & 62.56 $\pm$ 1.51 & 51.89 $\pm$ 0.74 & 53.37 $\pm$ 1.26 & \textbf{97.45 $\pm$ 0.01}\\
& Contact      & 94.47 $\pm$ 1.08 & 94.23 $\pm$ 0.18 & 94.10 $\pm$ 0.41 & 91.64 $\pm$ 1.72 & 87.68 $\pm$ 0.24 & 85.87 $\pm$ 0.00 & 91.23 $\pm$ 0.19 & 90.96 $\pm$ 0.27 & 95.01 $\pm$ 0.15& 95.68 $\pm$ 0.20 & \textbf{97.50 $\pm$ 0.45}\\
\bottomrule
\end{tabular}}
\label{tab:Auc_roc transductive}
\end{table}

\begin{table*}
\centering
\caption{AP for dynamic link prediction under inductive setting}
\resizebox{\textwidth}{!}{
\begin{tabular}{llcccccccccc}
\toprule
\textbf{NSS} & \textbf{Dataset} & \texttt{JODIE} & \texttt{DyRep} & \texttt{TGAT} & \texttt{TGN} & \texttt{CAWN} & \texttt{TCL} & \texttt{GraphMixer} & \texttt{DyGFormer} & \texttt{DyGMamba} &\texttt{CTDG-SSM} \\
\midrule

\multirow{6}{*}{rnd}
& Flights      & 94.74 $\pm$ 0.37 & 92.88 $\pm$ 0.73 & 88.73 $\pm$ 0.33 & 95.03 $\pm$ 0.60 & 97.06 $\pm$ 0.02 & 83.41 $\pm$ 0.07 & 83.03 $\pm$ 0.05 & 97.79 $\pm$ 0.02 & \textbf{97.85 $\pm$ 0.22} & 97.15$\pm$0.04 \\
& Can. Parl.   & 53.92 $\pm$ 0.94 & 54.02 $\pm$ 0.76 & 55.18 $\pm$ 0.79 & 54.10 $\pm$ 0.93 & 55.80 $\pm$ 0.69 & 54.30 $\pm$ 0.66 &   55.91 $\pm$ 0.82 & 87.74 $\pm$ 0.71 &\textbf{93.46 $\pm$ 2.62} &  88.65 $\pm$ 0.70  \\
& US Legis.    & 54.93 $\pm$ 2.29 & 57.28  $\pm$ 0.71 & 51.00 $\pm$ 3.11 & 58.63 $\pm$ 0.37 & 53.17 $\pm$ 1.20 & 52.59 $\pm$ 0.97 & 50.71 $\pm$ 0.76 & 54.28 $\pm$ 2.87 & 55.95 $\pm$ 1.16 & \textbf{76.94 $\pm$ 0.01}\\
& UN Trade     & 59.65 $\pm$ 0.77 & 57.02 $\pm$ 0.69 & 61.03 $\pm$ 0.18 & 58.31 $\pm$ 3.15 & 65.24 $\pm$ 0.21 & 62.21 $\pm$ 0.12 & 62.17 $\pm$ 0.31 & 64.55 $\pm$ 0.62 & 70.55 $\pm$ 0.04 &\textbf{72.42 $\pm$ 0.02}\\
& UN Vote      & 56.64 $\pm$ 0.96 & 54.62 $\pm$ 2.22 & 52.24 $\pm$ 1.46 & 58.85 $\pm$ 2.51 & 49.94 $\pm$ 0.45 & 51.60 $\pm$ 0.97 & 50.68 $\pm$ 0.44 & 55.93 $\pm$ 0.39 &56.61 $\pm$ 0.13 & \textbf{95.79$\pm$0.01}\\
& Contact      & 94.34 $\pm$ 1.45 & 92.18 $\pm$ 0.41 & 95.87 $\pm$ 0.11 & 93.82 $\pm$ 0.99 & 89.55 $\pm$ 0.30 & 91.11 $\pm$ 0.12 & 90.59 $\pm$ 0.05 & 98.03 $\pm$ 0.02 & 98.16 $\pm$ 0.03 &\textbf{98.42 $\pm$  0.01}\\
\midrule

\midrule

\multirow{6}{*}{ind}
& Flights      & 61.01 $\pm$ 1.66 & 62.83 $\pm$ 1.31 & 64.72 $\pm$ 0.37 & 59.32 $\pm$ 1.45 & 56.82 $\pm$ 0.56 & 64.50 $\pm$ 0.25 & 65.29 $\pm$ 0.24 & 57.11 $\pm$ 0.20 & 57.76 $\pm$2.06 & \textbf{92.24 $\pm$ 1.05} \\
& Can. Parl.   & 52.58 $\pm$ 0.86 & 52.24 $\pm$ 0.28 & 56.46 $\pm$ 0.50 & 54.18 $\pm$ 0.73 & 57.06 $\pm$ 0.08 & 55.46 $\pm$ 0.69 & 55.76 $\pm$ 0.65 & 87.22 $\pm$ 0.82 &\textbf{92.68 $\pm$ 0.97} & 88.42 $\pm$ 0.65\\
& US Legis.    & 52.94 $\pm$ 2.11 & 62.10 $\pm$ 1.41 & 51.83 $\pm$ 3.95 & 61.18 $\pm$ 1.10 & 55.56 $\pm$ 1.71 & 53.87 $\pm$ 1.41 & 52.03 $\pm$ 1.02 & 56.31 $\pm$ 3.46 &57.85 $\pm$ 0.23 & \textbf{75.64 $\pm$ 0.01}\\
& UN Trade     & 55.43 $\pm$ 1.20 & 55.42 $\pm$ 0.87 & 55.58 $\pm$ 0.68 & 52.80 $\pm$ 3.24 & 54.97 $\pm$ 0.38 & 55.66 $\pm$ 0.98 & 54.88 $\pm$ 1.01 & 52.56 $\pm$ 1.70 & 52.81 $\pm$ 0.18 & \textbf{69.24 $\pm$ 1.02} \\
& UN Vote      & 61.17 $\pm$ 1.33 & 60.29 $\pm$ 1.79 & 53.08 $\pm$ 3.10 & 63.71 $\pm$ 2.97 & 48.01 $\pm$ 0.82 & 54.13 $\pm$ 2.16 & 48.10 $\pm$ 0.40 & 52.61 $\pm$ 1.25 &53.70 $\pm$ 2.40 & \textbf{95.77 $\pm$ 0.00} \\
& Contact      & 90.43 $\pm$ 2.33 & 89.22 $\pm$ 0.65 & 94.14 $\pm$ 0.45 & 88.12 $\pm$ 1.50 & 74.19 $\pm$ 0.81 & 90.43 $\pm$ 0.17 & 89.91 $\pm$ 0.36 & 93.55 $\pm$ 0.52 & 94.05 $\pm$ 0.32 &\textbf{96.78$\pm$ 0.72} \\
\bottomrule
\end{tabular}}
\label{tab:AP inductive setting}
\end{table*}

\begin{table}
\centering
\caption{AUC-ROC for dynamic link under inductive setting.}
\resizebox{\columnwidth}{!}{
\begin{tabular}{llcccccccccc}
\toprule
\textbf{NSS} & \texttt{Dataset} & \textbf{JODIE} & \texttt{DyRep} & \texttt{TGAT} & \texttt{TGN} & 
\texttt{CAWN} & \texttt{TCL} & \texttt{GraphMixer} & \texttt{DyGFormer} & \texttt{DyGMamba} &\texttt{CTDG-SSM} \\
\midrule

\multirow{6}{*}{\textbf{rnd}}
& Flights    & 95.21 $\pm$ 0.32 & 93.56 $\pm$ 0.70 & 88.64 $\pm$ 0.35 & 95.92 $\pm$ 0.43 & 96.86 $\pm$ 0.02 & 82.48 $\pm$ 0.01 & 82.27 $\pm$ 0.06 & 97.80 $\pm$ 0.02 & \textbf{97.98 $\pm$ 0.25} & 97.36$\pm$0.04 \\
& Can. Parl. & 53.81 $\pm$ 1.14 & 55.27 $\pm$ 0.49 & 56.51 $\pm$ 0.75 & 55.86 $\pm$ 0.75 & 58.83 $\pm$ 1.13 & 55.83 $\pm$ 1.07 & 58.32 $\pm$ 1.08 & 89.33 $\pm$ 0.48 & \textbf{94.02 $\pm$ 3.42} & 89.78$\pm$0.78\\
& US Legis.  & 58.12 $\pm$ 2.35 & 61.07 $\pm$ 0.56 & 48.27 $\pm$ 3.50 & 62.38 $\pm$ 0.48 & 51.49 $\pm$ 1.13 & 50.43 $\pm$ 1.48 & 47.20 $\pm$ 0.89 & 53.21 $\pm$ 3.04 &57.17 $\pm$ 0.20 &\textbf{81.17 $\pm$ 0.00}\\
& UN Trade   & 62.28 $\pm$ 0.50 & 58.82 $\pm$ 0.98 & 62.72 $\pm$ 0.12 & 59.99 $\pm$ 3.50 & 67.05 $\pm$ 0.21 & 63.76 $\pm$ 0.07 & 63.48 $\pm$ 0.37 & 67.25 $\pm$ 1.05 &68.26 $\pm$ 0.26 &\textbf{73.76$\pm$0.45}\\
& UN Vote    & 58.13 $\pm$ 1.43 & 55.13 $\pm$ 3.46 & 51.83 $\pm$ 1.35 & 61.23 $\pm$ 2.71 & 48.34 $\pm$ 0.76 & 50.51 $\pm$ 1.05 & 50.04 $\pm$ 0.86 & 56.73 $\pm$ 0.69 & 56.91 $\pm$ 0.12 &\textbf{97.72 $\pm$0.01}\\
& Contact    & 95.37 $\pm$ 0.92 & 91.89 $\pm$ 0.38 & 96.53 $\pm$ 0.10 & 94.84 $\pm$ 0.75 & 89.07 $\pm$ 0.34 & 93.05 $\pm$ 0.09 & 92.83 $\pm$ 0.05 & 98.30 $\pm$ 0.02&98.44$\pm$0.05 &\textbf{98.70$\pm$0.65} \\
\midrule

\midrule

\multirow{6}{*}{\textbf{ind}}
& Flights    & 60.72 $\pm$ 1.29 & 61.99 $\pm$ 1.39 & 63.40 $\pm$ 0.26 & 59.66 $\pm$ 1.05 & 56.58 $\pm$ 0.44 & 63.49 $\pm$ 0.23 & 63.32 $\pm$ 0.19 & 56.05 $\pm$ 0.22&56.58$\pm$2.12 & \textbf{91.36 $\pm$ 1.87} \\
& Can. Parl. & 51.61 $\pm$ 0.98 & 52.35 $\pm$ 0.52 & 58.15 $\pm$ 0.62 & 55.43 $\pm$ 0.42 & 60.01 $\pm$ 0.47 & 56.88 $\pm$ 0.93 & 56.63 $\pm$ 1.09 & 88.51 $\pm$ 0.73 & \textbf{92.37 $\pm$ 0.18} & 89.56 $\pm$ 0.69  \\
& US Legis.  & 58.12 $\pm$ 2.94 & 67.94 $\pm$ 0.98 & 49.99 $\pm$ 4.88 & 64.87 $\pm$ 1.65 & 54.41 $\pm$ 1.31 & 52.12 $\pm$ 2.13 & 49.28 $\pm$ 0.86 & 56.57 $\pm$ 3.22 & 57.91 $\pm$ 3.41 & \textbf{81.46 $\pm$ 0.01}\\
& UN Trade   & 58.71 $\pm$ 1.20 & 57.87 $\pm$ 1.36 & 59.98 $\pm$ 0.59 & 55.62 $\pm$ 3.59 & 60.88 $\pm$ 0.79 & 61.01 $\pm$ 0.93 & 59.71 $\pm$ 1.17 & 57.28 $\pm$ 3.06 & 57.58 $\pm$ 0.20 & \textbf{71.43 $\pm$ 0.04} \\
& UN Vote    & 65.29 $\pm$ 1.30 & 64.10 $\pm$ 2.10 & 51.78 $\pm$ 4.14 & 68.58 $\pm$ 3.08 & 48.04 $\pm$ 1.76 & 54.65 $\pm$ 2.20 & 45.57 $\pm$ 0.41 & 53.87 $\pm$ 2.01 &54.83 $\pm$ 2.17 & 97.73 $\pm$ 0.01\\
& Contact    & 90.80 $\pm$ 1.18 & 88.87 $\pm$ 0.67 & 93.76 $\pm$ 0.40 & 88.85 $\pm$ 1.39 & 74.79 $\pm$ 0.38 & 90.37 $\pm$ 0.16 & 90.04 $\pm$ 0.29 & 94.14 $\pm$ 0.26 & 94.35 $\pm$ 0.29 & \textbf{96.98 $\pm$ 0.56} \\
\bottomrule
\end{tabular}
}
\label{tab:Auc_roc inductive setting}
\end{table}

\begin{table}
\centering
\caption{Performance comparison with AP on dynamic link prediction under inductive setting.}
\resizebox{0.7\columnwidth}{!}{
\begin{tabular}{llcccc}
\toprule
\textbf{NSS} & \textbf{Datasets} &\texttt{Edgebank}& \texttt{DyG-Mamba}& \texttt{FreeDyG} &\texttt{CTDG-SSM} \\
\midrule
\multirow{7}{*}{rnd}
& Wiki     & N/A &98.65 $\pm$ 0.03 & 98.97 $\pm$ 0.01 & \textbf{99.19 $\pm$ 0.09} \\
& Reddit   & N/A &98.88 $\pm$ 0.00 & 98.91 $\pm$ 0.01 & \textbf{99.28 $\pm$ 0.00}\\
& MOOC     & N/A &90.20 $\pm$ 0.06 & 87.75 $\pm$ 0.62 & \textbf{98.49 $\pm$ 0.48}\\
& LastFM   & N/A &95.13 $\pm$ 0.08 & \textbf{94.89 $\pm$ 0.01} & 93.65 $\pm$ 0.62\\
& Enron    & N/A &91.14 $\pm$ 0.07 & 89.69 $\pm$ 0.17 & \textbf{93.02 $\pm$ 7.25} \\
& Social Evo. &N/A &93.23 $\pm$ 0.01 & 94.76 $\pm$ 0.05 & \textbf{97.56 $\pm$ 0.45} \\
& UCI      & N/A &94.15 $\pm$ 0.04 & \textbf{94.85 $\pm$ 0.10} & 89.12 $\pm$ 1.02 \\
\midrule
\multirow{7}{*}{ind}
& Wiki     & N/A &79.44 $\pm$ 2.78 & 87.54 $\pm$ 0.26 & \textbf{99.23 $\pm$ 0.09}\\
& Reddit   & N/A &65.61 $\pm$ 0.01 & 64.98 $\pm$ 0.20 & \textbf{99.32 $\pm$ 0.03}\\
& MOOC     & N/A &81.67 $\pm$ 1.08 & 81.41 $\pm$ 0.31 & \textbf{98.64 $\pm$ 0.51}\\
& LastFM   & N/A &79.60 $\pm$ 0.28 & 77.01 $\pm$ 0.43 & \textbf{94.08 $\pm$ 0.57} \\
& Enron    & N/A &68.44 $\pm$ 1.85 & 72.85 $\pm$ 0.81 & \textbf{94.56 $\pm$ 5.02} \\
& Social Evo. & N/A &96.93 $\pm$ 0.21 & 96.91 $\pm$ 0.12 & \textbf{98.15 $\pm$ 0.27} \\
& UCI      & N/A &79.27 $\pm$ 1.03 & 82.06 $\pm$ 0.58 & \textbf{90.34 $\pm$ 0.74} \\
\bottomrule
\end{tabular}}
\label{tab:freedyg-only under inductive setting}
\end{table}

\begin{table}
\centering
\caption{MRR on the \texttt{tgbl-wiki} and \texttt{tgbl-coin} datasets.}
\begin{tabular}{lccccc}
\toprule
\textbf{Dataset} 
& \textbf{CTDG-SSM} 
& \textbf{DyGMamba} 
& \textbf{DyGFormer} 
& \textbf{CTAN} 
& \textbf{TGN} \\
\midrule
\texttt{tgbl-wiki} 
& \textbf{0.817} $\pm$ \textbf{0.027}
& 0.739 $\pm$ 0.009 
& 0.798 $\pm$ 0.004 
& 0.668 $\pm$ 0.007 
& 0.396 $\pm$ 0.060 \\

\texttt{tgbl-coin} 
&\textbf{0.862} $\pm$ \textbf{0.003} 
& --- 
& 0.752 $\pm$ 0.004 
& 0.748 $\pm$ 0.004 
& 0.586 $\pm$ 0.037 \\
\bottomrule
\end{tabular}
\label{tab:tgbl_results}
\end{table}

\section{CTDG-SSM Beyond Node/Edge Addition.}
 The CTDG-SSM state update equation depends on the change in the graph Laplacian, and therefore naturally accommodates both the addition and removal of edges.

To handle edge deletions within the subgraph, one can simply invert the construction process described in the main paper. Specifically, if an edge is removed in batch $B$, we construct batch Laplacian $\mL_B[k]$ without this edge, while $\mL_B[k-1]$ includes it. The resulting difference $\mL_B[k] - \mL_B[k-1]$ correctly captures the effect of edge removal.

Node deletion can be treated analogously by removing all edges incident to that node. In this case, $\mL_B[k]$ contains no edges between the removed node and its neighbors, while $\mL_B[k-1]$ retains these edges. This ensures that the update mechanism captures the effective removal of the node.

\section{Limitations and  Future Research Directions.}

In the current model, we implement a polynomial of the Laplacian using simple graph filters, which provide an efficient linear approximation to the underlying differential operator. While effective, this design restricts the expressiveness of the operator. An important direction for future work is to explore learning these operators and their inverses directly through graph neural networks, potentially enabling more adaptive and data-dependent approximations. Also, the current framework is primarily evaluated on CTDG datasets, where all node and edge features are fully observed. Extending the framework to handle scenarios with missing features in sampled events, or to accommodate interleaved and partially observed dynamic graphs, presents a promising direction for future research.

\section{ CTDG-SSM Pseudo code and Time complexity}\label{sec:time_complexity}
The CTDG-SSM model consists of two primary components: the online update and the inference. In this section, we will provide the algorithm for both of these parts.
\subsection{Inference}
The query provided to the model for a downstream task may take the form \((u,v,t)\), where the model must determine whether this constitutes a valid link or classify node \(u\) based on the interaction and its historical context. Alternatively, the query may be of the form \((u)\), in which case the model retrieves the stored state of node \(u\) and processes it according to the downstream task. In this section, Algorithm~\ref{alg:infer_lp} specifies the procedure for link prediction queries, and Algorithm~\ref{alg:infer_nc} details the procedure for node classification queries. 
The inference time complexity depends on the task:
\begin{itemize}
    \item for link prediction and node classification, it is $\mathcal{O}(\mathrm{deg}(u))$ due to the computation of $\Delta t$;
    \item for sequence classification, it is $\mathcal{O}(1)$ per query.
\end{itemize}

\begin{algorithm}[ptb]
\caption{CTDG-SSM Inference (link-prediction) }
\label{alg:infer_lp}
\begin{algorithmic}[1]

\REQUIRE Link prediction queries $\{(u_i,v_i,t_i)\}_i$
\STATE $\Delta t_i = \ln(1+ t_i - t_{\text{last},u_iv_i})$ \hfill \COMMENT{$t_{\text{last},u_iv_i}$ is the last interaction time of node $u_i$ and $v_i$.}
\STATE $\hat{\vy}_{\text(link)}(u_i,v_i,t_i) = [\Tilde{\mX}^{(2)}[u_i] \mid \Tilde{\mX}^{(2)}[v_i] \mid \psi(\Delta t_i) ]$ \hfill \COMMENT{$\psi$ is temporal encoding function} 
\STATE $p_i = \vw^\top\hat{\vy}_{u_i}$ \hfill\COMMENT{logit for link prediction}
\STATE \textbf{return}
\end{algorithmic}
\end{algorithm}

\begin{algorithm}[ptb]
\caption{CTDG-SSM Inference (node-classification) }
\label{alg:infer_nc}
\begin{algorithmic}[1]

\REQUIRE Interaction $\{(u_i,v_i,t_i)\}_i$
\STATE $\Delta t_i = \ln(1+ t_i - t_{\text{last},u_iv_i})$ \hfill \COMMENT{$t_{\text{last},u_iv_i}$ is the last interaction time of node $u_i$ and $v_i$.}
\STATE $\hat{\vy}_{u_i} = [\Tilde{\mX}^{(2)}[u_i] \mid \psi(\Delta t_i) ]$ \hfill \COMMENT{$\psi$ is temporal encoding function} 
\STATE $\vp_i = \mW \hat{\vy}_{\text(link)}(u_i,v_i,t_i)$ \hfill\COMMENT{Multiclass logits}
\STATE \textbf{return}
\end{algorithmic}
\end{algorithm}

\subsection{Online Update}
From a stream of events, we form a batch of \(B\) concurrent events. Using subgraph sampling, we sample at-most $N_u$ neighbors per node and  construct the batch Laplacian \(\mL_B[k] \in \mathbb{R}^{N_B \times B_B}\), where $N_B$ is the number of active batch nodes bounded by $N_B < 2BN_u$. By removing the edges associated with the events in the current batch from \(\mL_B[k]\), we obtain the previous-step Laplacian \(\mL_B[k-1]\). Algorithm~\ref{alg:zoh_update} summarizes this procedure. For node state vector of dimension $d$,a polynomial of highest order $m$, and input feature of dimension $f$ the state update has a compute complexity of   $\mathcal{O}(m N _B ^{3} + d N _B ^{2} + N_Bdf)$, it is to be noted that $N_B << N_\tau$.

\begin{algorithm}[p]
\caption{CTDG-SSM ZOH Update}
\label{alg:zoh_update}
\begin{algorithmic}[1]

\REQUIRE Batch Laplacian $\mL_B[k]$, Batch Laplacian $\mL_B[{k-1}]$, events $\{u_i,v_i,t_i,\vx_{u}[t_i], \vx_{v}[t_i], \vx_{u,v}[t_i] \}_{i=1}^B$  and batch active node indices $\{\hat{u}_i\}_{i=1}^{N_B}$. 
learnable polynomial  $p_\alpha$, Gaussian quadrature node $\vq_{\text{nodes}}\in \mathbb R^{8 \times 1} $, $\vq_{\text{weights}} \in \mathbb R^{8 \times 1}$. State matrices parameters $\mA_{\log}^{(0)} \in \mathbb{R}^{d}, \mA_{\log }^{(1)} \in \mathbb{R}^{d} $,
hidden states $\mH^{(0)}[k]$ and $\mH^{(1)}[k]$.

\STATE $\mI \gets \mI_{\dim(\mL[k])}$\hfill\COMMENT{Identity matrix}
\STATE $\Delta p_\alpha(\mL_B[k]) \gets p_\alpha(\mL_B[k])-p_\alpha(\mL_B[k-1])$

\STATE $\bar{\mA}_{\mL_B[k]} \gets \exp \left(- p_\alpha(\mL_B[k])^{-1} \Delta p_\alpha (\mL_B[k])\right) $


\STATE LHS $\gets \mathbf{0}_{\text{dim}(N_B \times N_B \times 8)}$
\FOR{$i = 0$ to $7$} 
    \STATE LHS[:,:,i] $\gets \exp \left(- p_\alpha(\mL_B[k])^{-1} \Delta p_\alpha  (\mL_B[k])\vq_{\text{node}}[i]\right)$
\ENDFOR
\STATE Construct SSM Input $\mX$:
\STATE $\mN_{\text{St}} \gets \mathbf{0}_{dim (N_\tau \times 2d_s) }$\hfill\COMMENT{Zero matrix, $d_s$: Static embedding dimensions}
\FOR{$i = 0$ to $B-1$}
    \STATE $\mN_{\text{St}}[i, 1{:}d] \gets \mathrm{Static{\text{-}}Embedding}(u_i)$
    \STATE $\mN_{\text{St}}[i, d{+}1{:}2d] \gets \mathrm{Static{\text{-}}Embedding}(v_i)$
    \STATE $\mN_{\text{St}}[i+B, 1{:}d] \gets \mathrm{Static{\text{-}}Embedding}(v_i)$
    \STATE $\mN_{\text{St}}[i+B, d{+}1{:}2d] \gets \mathrm{Static{\text{-}}Embedding}(u_i)$
\ENDFOR

\FOR{$i = 1$ to $B$}                         
    \STATE $\mX[i,:] \gets [\vx_{u}[t_i] \mid \vx_{v}[t_i] \mid \vx_{u,v}[t_i] \mid \phi(\Delta[t_i]) \mid \mN_{st}[i,:]]$ \hfill \COMMENT{$\mV$ has $N_\tau$ rows}
    \STATE $\mX[i+B, :]  \gets [\vx_{u}[t_i] \mid \vx_{v}[t_i] \mid \vx_{u,v}[t_i] \mid \phi(\Delta[t_i]) \mid \mN_{st}[i+B,:]]$ \hfill \COMMENT{rest filled with 0.}
\ENDFOR   

\vspace{10pt}
\STATE \textbf{CTDG-SSM $1^{\text{st}}$ Layer}:
\STATE $\Tilde{\mX}^{(0)}[k] \gets h_\theta(\mX)$\hfill\COMMENT{ encoder $h_\theta$}
\STATE ${\mX}_n^{(0)}[k] \gets \mathrm{RMS}_0(\Tilde{\mX}^{(0)}[k])$
\STATE $\mB_{x,0} \gets \mB_0({\mX}_n^{(0)}[k])$

\STATE $\Delta_0 \gets \tau^{(0)}_\Delta({\mX}_n^{(0)}[k])$\hfill$(N_\tau \times 1)$
\STATE $\mA_{c}^{(0)} \gets -\exp(\mA_{\log}^{(0)})$ 
%
\STATE $\bar{\mA}^{(0)} \gets \exp\left({\Delta_0\odot\mA_{c}^{(0)}[\text{None},:]}\right)$

\STATE $\mC_0 \gets (p_\alpha(\mL_B[k])^{-1}( \Delta_0\odot\mB_{x,0}))[:,:,\text{None}] \odot \vq_{\text{weights}}[\text{None},\text{None},:]$
\STATE $RHS_0 \gets \exp((\Delta_0 \odot \mA_{c}^{(0)} [\text{None}, :])[:, :, \text{None}] \odot \vq_{\text{nodes}}[\text{None}, \text{None}, :])$\hfill$(  N_\tau \times d \times 8)$ 
\STATE $\bar{\mB}(\mL_B[k],{\mX}_n^{(0)}[k]) \gets \sum_{q=0}^7 LHS[:,:,q] \, C_0[:,:,q] \, RHS_{0}[:,:,q]$
\STATE $\hat{\mH}^{(0)}[k+1] \gets \bar{\mA}_{\mL_B[k]}( \mH^{(0)}[k][\{\hat{u}_i\}_{i=1}^{N_B}] \odot \bar{\mA}^{(0)} ) + \bar{\mB}(\mL_B[k],{\mX}_n^{(0)}[k])$ 

\STATE $\Tilde{\mX}^{(1)}[k] \gets \Tilde{\mX}^{(0)}[k] + \mathrm{GeLU}(\hat{\mH}^{(0)}[k+1])$

\vspace{10pt}
\STATE\textbf{CTDG-SSM $2^{\text{nd}}$ Layer}:
\STATE ${\mX}_n^{(1)}[k] \gets \mathrm{RMS}_1(\Tilde{\mX}^{(1)})$
\STATE $\mB_{x,1} \gets \mB_1({\mX}_n^{(1)}[k])$

\STATE $\Delta_1 \gets \tau^{(1)}_{\Delta}({\mX}_n^{(1)}[k])$
\STATE $\mA_{c}^{(1)} \gets -\exp(\mA_{\log}^{(1)})$
\STATE $\bar{\mA}^{(1)} \gets \exp(\Delta_1 \odot \mA_{c}^{(1)}[:,\text{None}])$

\STATE $\mC_1 \gets (p_\alpha(\mL_B[k])^{-1}(\mB_{x,2} \odot \Delta_1))[:,:,\text{None}] \odot \vq_{\text{weights}}[\text{None}, \text{None}, :]$
\STATE $RHS_1 \gets \exp((\Delta_1 \odot \mA_{c}^{(1)}[\text{None},:])[:,:,\text{None}] \odot \vq_{\text{nodes}}[\text{None}, \text{None}, :])$
\STATE $\bar{\mB}(\mL[k],{\mX}_n^{(1)}[k]) \gets \sum_{q=0}^7 LHS[:,:,q] \, C_1[:,:,q] \, RHS_{1}[:,:,q]$

\STATE $\hat{\mH}^{(1)}[k+1] \gets \bar{\mA}_{\mL_B[k]}( \mH^{(1)}[k][\{\hat{u}_i\}_{i=1}^{N_B}] \odot \bar{\mA}^{(1)} ) + \bar{\mB}(\mL_B[k],{\mX}_n^{(1)}[k])$

\STATE $\hat{\mX}^{(2)}[k] \gets\Tilde{\mX}^{(1)}[k] + \mathrm{GeLU}(\hat{\mH}^{2}[k+1])$


\STATE $\mH^{(0)}[k+1] \gets \mathrm{MeanAgg}(\hat{\mH}^{(0)}[k+1], \{\hat{u}_i\}_{i=1}^{N_B})$ \hfill \{Only update for batch active node
\STATE $\mH^{(1)}[k+1] \gets \mathrm{MeanAgg}(\hat{\mH}^{(1)}[k+1], \{\hat{u}_i\}_{i=1}^{N_B})$ \hfill rest of the node retain old values.\}
\STATE $\Tilde{\mX}^{(2)}[k] \gets \mathrm{MeanAgg}(\hat{\mX}^{(2)}[k], \{\hat{u}_i\}_{i=1}^{N_B})$  

\STATE \textbf{return}
\end{algorithmic}
\end{algorithm}

\end{document}